\definecolor{lava}{rgb}{0.81, 0.06, 0.13}
\pgfplotsset{compat=1.18}
\newcommand{\NN}{\mathbb{N}}
\newcommand{\RR}{\mathbb{R}}
\newcommand{\calN}{\mathcal{N}}
\newcommand{\calX}{\mathcal{X}}
\newcommand{\EE}[1]{\mathbb{E}\left[#1\right]}
\newcommand{\dEE}[2]{\mathbb{E}_{#1}\left[#2\right]}
\newcommand{\udrEE}[2]{\underset{#1}{\mathbb{E}}\left[#2\right]}
\newcommand{\Var}[1]{\mathrm{Var}\left[#1\right]}
\renewcommand{\norm}[1]{\left\lVert#1\right\rVert}
\newcommand{\inner}[2]{\left\langle #1, #2 \right\rangle}
\newcommand{\ds}{\displaystyle}
\DeclareMathOperator*{\argmin}{arg\,min}
\newtheorem{theorem}{Theorem}[section]
\newtheorem{lemma}[theorem]{Lemma}
\newtheorem{proposition}[theorem]{Proposition}
\newtheorem{corollary}{Corollary}
\newtheorem{claim}[theorem]{Claim}
\theoremstyle{definition}
\newtheorem{definition}[theorem]{Definition}
\newtheorem{assumption}[theorem]{Assumption}
\newtheorem{example}[theorem]{Example}
\theoremstyle{remark}
\newtheorem*{remark}{Remark}
\title{Incentivizing Truthful Collaboration in Heterogeneous Federated Learning}
\newcommand{\ttic}{Toyota Technological Institute at Chicago}
\newcommand{\insait}{INSAIT, Sofia University ``St. Kliment Ohridski''}
\author{
    Dimitar Chakarov%
    \thanks{\ttic, \texttt{chakarov@ttic.edu}. Part of this work was done while DC was visiting INSAIT, Sofia University ``St. Kliment Ohridski.''}
    \and
    Nikita Tsoy%
    \thanks{\insait, \texttt{nikita.tsoy@insait.ai}.}
    \and
    Kristian Minchev%
    \thanks{\insait, \texttt{kristian.minchev@insait.ai}.}
    \and
    Nikola Konstantinov%
    \thanks{\insait; \texttt{nikola.konstantinov@insait.ai}.}
}
\date{}
\newcommand{\blfootnote}[1]{%
    \begingroup
    \renewcommand\thefootnote{}\footnote{#1}%
    \addtocounter{footnote}{-1}%
    \endgroup
}
\renewcommand{\paragraph}[2][1em]{\vspace{#1}\noindent\textbf{#2}\ }
\begin{document}

\maketitle

\begin{abstract}
    Federated learning (FL) is a distributed collaborative learning method, where multiple clients learn together by sharing gradient updates instead of raw data. However, it is well-known that FL is vulnerable to manipulated updates from clients. In this work we study the impact of data heterogeneity on clients' incentives to manipulate their updates. First, we present heterogeneous collaborative learning scenarios where a client can modify their updates to be better off, and show that these manipulations can lead to diminishing model performance. To prevent such modifications, we formulate a game in which clients may misreport their gradient updates in order to ``steer'' the server model to their advantage. We develop a payment rule that provably disincentivizes sending modified updates under the FedSGD protocol. We derive explicit bounds on the clients' payments and the convergence rate of the global model, which allows us to study the trade-off between heterogeneity, payments and convergence. Finally, we provide an experimental evaluation of the effectiveness of our payment rule in the FedSGD, median-based aggregation FedSGD and FedAvg protocols on three tasks in computer vision and natural language processing. In all cases we find that our scheme successfully disincentivizes modifications.
\end{abstract}

\blfootnote{A version of this work appeared in the Optimization for Machine Learning Workshop (OPT-2024) at the Thirty-eight Conference on Neural Information Processing Systems, NeurIPS 2024.}

\section{Introduction}

Federated learning (FL) \citep{mcmahan2017communication} enables the efficient training of machine learning models on large datasets, distributed among multiple stakeholders, via gradient updates shared with a central server. FL has the potential to provide state-of-the-art models in multiple domains where high-quality training data is scarce and distributed~\citep{kairouz2021advances}. For example, FL can be a major advancement in fields like healthcare~\citep{rieke2020future}, agriculture~\citep{durrant2022role}, transportation~\citep{tan2020federated} and finance~\citep{long2020federated, oualid2023federated}.

Unfortunately, the distributed nature of standard FL protocols makes them susceptible to clients misreporting their gradient updates. Indeed, it is known that the presence of market competition \citep{dorner2024incentivizing}, privacy concerns \citep{tsoy2024provable} and high data gathering costs \citep{karimireddy2022mechanisms} may incentivize clients to communicate updates that are harmful for the global model. Furthermore, prior work has shown that a small fraction of malicious participants can damage the learned model with seemingly benign updates \citep{blanchard2017machine,alistarh2018byzantine}. These issues bring the practical merit of federated learning in the presence of misaligned incentives into question.

In the current work, we show that incentives for update manipulation may appear even between clients who are solely interested in their own accuracy, as long as they have different data distributions. Since data heterogeneity is ubiquitous in common federated learning scenarios \citep{mcmahan2017communication,kairouz2021advances}, this implies that clients could be incentivized to manipulate their updates in realistic scenarios and even without the presence of explicitly conflicting goals, such as those arising from competition and privacy.

\subsection{Main contributions}

We introduce a game-theoretic framework that models rational client behavior in the context of heterogeneous FL and provide a payment mechanism that successfully disincentives possibly harmful gradient modifications, which ensures training convergence. Our main theoretical contributions are as follows:
\begin{itemize}[noitemsep, topsep=0pt, parsep=0pt, partopsep=0pt]
    \item We propose a game-theoretic framework for studying the impact of heterogeneous data on the incentive of clients to modify their gradient updates before sending them to the server. This model allows for several natural types of update manipulations, as well as various reward functions for the clients. We show that this enables modeling various real-world client incentives.
    \item We design a budget-balanced payment mechanism that when used in conjunction with the FedSGD protocol achieves both (1) \emph{$\varepsilon$-approximate incentive compatibility}---truthful reporting results in utility that is $\varepsilon$-close to optimal for a client $i$ given that everyone else is sending truthful updates, and (2) \emph{$\varepsilon$-approximately truthful reporting}---the best response of client $i$ when everyone else is sending truthful updates is $\varepsilon$-close to the truthfulness.
    \item We prove the convergence of FedSGD with standard rates for smooth, strongly-convex objectives, when executed for clients who report $\varepsilon$-approximately truthfully under our payment scheme. In the same context, we also provide an upper bound on the total payment of each client.
\end{itemize}
Beyond theory, we empirically investigate the effectiveness of our payment mechanism on three (non-convex) tasks: image classification on the FeMNIST dataset, sentiment analysis on the Twitter dataset, and next-character prediction on the Shakespeare dataset~\citep{caldas2018leaf}. We observe that our payment mechanism successfully eliminates the incentive for gradient manipulation in all three tasks. Finally, we provide empirical evidence that our method generalizes to other FL protocols, such as median-based FedSGD, which aims to aggregate gradients robustly, and FedAvg, the classical FL protocol of~\citet{mcmahan2017communication}.

\subsection{Related work}

\paragraph{Heterogeneity in Federated Learning.}
The impact of data heterogeneity on the quality of the learned model is of central interest in the FL literature. Some works study how to train a central model and also provide personalization, in order to maximize the accuracy for each client \citep{mcmahan2017communication,mansour2020three,marfoq2022personalized,mishchenko2023partially}. Others focus on the impact of heterogeneity on model convergence and on designing algorithms that provide more accurate centralized models in a heterogeneous environment~\citep{karimireddy_scaffold_2020, khaled2020tighter, koloskova_unified_2020, woodworth_minibatch_2020, patel2024limits}.

These works tackle issues of data heterogeneity from the server's perspective, while we take a mechanism-design point of view and focus on the clients' behavior. Prior work~\citep{chayti2021linear} has also explored how to adapt to heterogeneity from a client's perspective, by optimally weighting local and server updates. Instead, in the current paper, we study how clients might manipulate their messages to the server, thus damaging the global training process, and our aim is to mitigate this potential manipulation. Another client behavior driven by the heterogeneity of client data is studied by \citet{donahue2020model, donahue2021optimality}, who, however, model FL as a coalitional game, where players need to decide how to cluster in groups to improve their performance. In contrast, we study non-cooperative games and clients' incentives to manipulate their updates.

% \vspace{1em}
\paragraph{Robustness in Federated Learning.}
Prior work has explored the robustness of FL to noise and bias towards subgroups, e.g.~\citet{abay2020mitigating, fang2022robust}. Several works also consider Byzantine-robust learning~\citep{blanchard2017machine, yin2018byzantine, alistarh2018byzantine}. We refer to \citet{shejwalkar2022back} for a recent survey. Moreover, ~\citet{pillutla2022robust} propose a robust aggregation procedure that is similar to the way we construct our penalty scheme.

Our work takes a different approach towards securing FL from potentially harmful updates: we model the clients as rational agents in a game and seek to ensure that honest reporting is close to their utility optimal. This way the global model can benefit from diverse data, while disinsentivizing gradient manipulation.

\paragraph{Incentives in Collaborative Learning.}
A major research direction is that of studying whether clients have an incentive to join the FL protocol, relative to participation costs (e.g. compute resources and data collection costs). We refer to \citet{tu2021incentive, zhan_survey_2022} for recent surveys. Additionally, \citet{tsoy2024provable} explore the impact of privacy concerns on FL participation incentives. Among works that consider incentives for manipulating updates, \citet{karimireddy2022mechanisms} focus on free-riding,~\citet{han2023effect} study defection,~\citet{blum_one_2021} look at agents who wish to keep their data collection low, while~\citet{huang_evaluating_2023} try to incentivize diverse data contributions. Moreover, \citet{dorner2024incentivizing} consider manipulating updates due to incentives stemming from competition, focusing on homogeneous SGD.

In contrast to these works, we consider clients who are solely interested in their own accuracy and utility, but, as we show, may still have conflicting incentives due to data heterogeneity.

\section{Preliminaries}

In this section we introduce our heterogeneous FL setup.

\subsection{FL setup and protocol}\label{subsection:fl-setup}

\paragraph{Learning setup.}
We consider a standard FL setting with $N$ clients who seek to obtain an accurate model by exchanging messages (gradient updates) with a central server, which orchestrates the protocol. All clients work with a shared loss function $f(\theta; z)$ that is differentiable, convex and smooth in $\theta \in \mathcal{Z}$ for every $z$. Each client $i$ has her own distribution $D_i$ over the data $z \in \mathcal{Z}$,\footnote{We do not assume any parametric form of the distributions.} and is interested in minimizing the expected loss with respect to their own distribution, i.e. the loss function of client $i$ is $F_i(\theta) = \dEE{z \sim D_i}{f(\theta; z)}.$ We assume that
\[
    \nabla F_i(\theta) = \dEE{z \sim D_i}{f(\theta; z)},
\]
and that there exists a scalar $\sigma \geq 0$, such that
\[
    \udrEE{z \sim D_i}{\norm{f(\theta; z) - \nabla F_i(\theta)}^2} \leq \sigma^2.
\]
In other words, we assume that for each client their stochastic gradient is an unbiased estimator for their full gradient and has bounded variance. Unless otherwise stated, all norms are $\ell_2$ norms. We further require that $F_i$ is $m$-strongly-convex and $H$-smooth for each client $i$.
\begin{definition}[Strong convexity]\label{definition:strong-convexity-main}
    A differentiable function $F : \calX \subseteq \RR^d \to \RR$ is \emph{$m$-strongly-convex} if for all $x, y \in \calX$:
    \[
        F(x) \geq F(y) + \inner{\nabla F(y)}{x - y} + \frac{m}{2} \norm{x - y}^2.
    \]
\end{definition}

\begin{definition}[Smoothness]
    A differentiable function $F : \calX \subseteq \RR^d \to \RR$ is \emph{$H$-smooth} if for all $x, y \in \calX$:
    \[
        \norm{\nabla F(x) - \nabla F(y)} \leq H \norm{x - y}.
    \]
\end{definition}
\noindent Finally, the server's objective is to minimize the average expected loss of all clients, i.e. \[F(\theta) = \frac{1}{N} \sum_{i = 1}^N F_i(\theta).\]
\vspace{-1em}

\paragraph{FL protocol.}
We consider the standard FedSGD protocol, where the server asks the clients to send stochastic gradients at the current model, with respect to their own data distribution. At each time step $t$ client~$i$ computes a stochastic gradient $g_i(\theta_t) \coloneqq \nabla f(\theta_t; z)$ by sampling from their distribution $z \sim D_i$ independently of everything else. We also let $e_i(\theta_t) = g_i(\theta_t) - \nabla F_i(\theta_t)$ be the gradient noise. The server then updates the central model by averaging the updates and taking an SGD step, i.e. $\theta_{t+1} = \theta_t - \gamma_t \frac{1}{N}\sum_{i=1}^N g_i(\theta_t),$ where $\gamma_t$ is the learning rate at step $t$.

\subsection{Heterogeneity assumptions}

To describe the relation between the data distributions, we invoke an assumption reminiscent to the standard bounded first-order heterogeneity assumption from the FL literature on the convergence rates of local and mini-batch SGD~\citep{karimireddy_scaffold_2020, khaled2020tighter, koloskova_unified_2020, woodworth_minibatch_2020, patel2024limits}. Assumption~\ref{assumption:gradient-difference} restricts the size of the gradient $\nabla F_i$ of client~$i$'s objective $F_i$ relative to the gradient $\nabla F$ of the aggregate objective $F$.  While standard first-order assumptions previously used in~\cite{karimireddy_scaffold_2020, khaled2020tighter, koloskova_unified_2020, woodworth_minibatch_2020, patel2024limits} (for formal statements see Assumption~\ref{assumption:bounded-first-order-heterogeneity} and~\ref{assumption:relaxed-first-order-heterogeneity} in Appendix~\ref{appendix:heterogeneity-assumptions}) usually require the gradients of the objectives to be close in some vector norm, we only require them to be close in magnitude.

\begin{assumption}[Bounded Gradient Difference]\label{assumption:gradient-difference}
    For every client $i$ and every $\theta \in \RR^d$, we have \[\abs{\norm{\nabla F_i(\theta)}^2 - \norm{\nabla F(\theta)}^2} \leq \zeta^2.\] As a consequence $\abs{\norm{\nabla F_i(\theta)}^2 - \norm{\nabla F_j(\theta)}^2} \leq 2\zeta^2.$
\end{assumption}

Next, Assumption~\ref{assumption:variance-difference} controls the difference between the variance of stochastic gradients. This allows us to cover scenarios, where the objectives are sufficiently similar, but the variance of some client~$i$ (relative to others) might induce an incentive to misreport. We employ it in our bound on individual payments in Proposition~\ref{theorem:bound-on-payments}.

\begin{assumption}[Bounded Variance Difference]\label{assumption:variance-difference}
    For any pair of clients $i, j \in [N]$ and any $\theta \in \RR^d$, we have \[\abs{\EE{\norm{e_i(\theta)}^2} - \EE{\norm{e_j(\theta)}^2}} \leq \rho^2,\]
    where $\EE{\norm{e_i(\theta)}^2}$ is the variance of the $\ell_2$ norm of the error vector $e_i(\theta) = g_i(\theta) - \nabla F_i(\theta)$.
\end{assumption}

We note that the parameters $\zeta$ and $\rho$ have a natural interpretation as the amount of heterogeneity among the FL clients.

\section{Motivating examples}\label{section:motivating-examples}

We now present a couple of motivating examples about how a strategic client can take advantage within the protocol by misreporting, due to heterogeneity in the clients' data. First, we present a conceptually easier example with single-round mean estimation. This example provides intuition about why upscaling a gradient can help steer the aggregate gradient closer to the true gradient of some client $i$, and in the process move the model to a position that is more favorable for client $i$. Note that mean estimation can be interpreted as a single step of the learning process. Then we give a simplified example with gradient descent and illustrate a scenario where one client's misreporting can make other clients worse off. 

\subsection{Example with Mean Estimation}

Suppose $N$ clients seek to estimate their respective means $\mu_1,\ldots,\mu_{N} \in \mathbb{R}$, where without loss of generality $\mu_1>\mu_2>\ldots>\mu_{N}$. First, each client gets a sample $x_{i} \sim \mathcal{N}(\mu_i, \sigma^2)$, and sends a message $m_{i}$ (supposedly their sample $x_{i}$) to the server. Then the server computes an aggregate $\overline{\mu} = \frac{1}{N} \sum_{i=1}^{N} m_{i}$ and broadcasts it to all clients. Each client $i$ would like to receive an estimate of their local mean with minimal mean squared error $\mathbb{E}[ \left( \overline{\mu} - \mu_i \right)^2 ]$.

\begin{proposition} \label{proposition:mean_example}
   Let $\mu = \frac{1}{N}\sum_{i=1}^N \mu_i$ and assume that $\mu_1(\mu_1 - \mu) > \sigma^2 / N$ and that everyone but the first client truthfully reports their sample. If client $1$ truthfully reports their sample $x_1$, then $\mathbb{E}\left[ \overline{\mu} \right] = \mu$ and $Var(\overline{\mu}) = \sigma^2 / N$, and so \[\mathbb{E}\left[ \left( \overline{\mu} - \mu_1 \right) ^2 \right] = \left( \mu - \mu_1 \right) ^2 + \frac{\sigma^2}{N}.\]
    However, there exists a constant $c>1$, such that if client $1$ sends $c x_{1}$, they reduce their MSE to \[\mathbb{E}\left[ (\bar{\mu} - \mu_1)^2 \right] = (\mu-\mu_1)^2 + \frac{\sigma^2}{N} -  \frac{(\sigma^2/N - \mu_1(\mu_1 - \mu))^2}{\mu_1^2+\sigma^2}.\]
\end{proposition}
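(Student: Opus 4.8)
The plan is to treat both the honest and the manipulated regime as the single family of messages $m_1 = c x_1$ for a scalar $c \ge 0$ (with $c = 1$ being truthful reporting), write the mean squared error of client~$1$ as an explicit quadratic in $c$, and minimize it.

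\emph{Truthful case.} Since $m_i = x_i$ for all $i$ and the samples are independent with $\mathbb{E}[x_i] = \mu_i$ and $\mathrm{Var}(x_i) = \sigma^2$, linearity of expectation gives $\mathbb{E}[\bar\mu] = \frac1N\sum_i \mu_i = \mu$ and independence gives $\mathrm{Var}(\bar\mu) = \frac1{N^2}\sum_i \sigma^2 = \sigma^2/N$. The bias--variance decomposition $\mathbb{E}[(\bar\mu - \mu_1)^2] = (\mathbb{E}[\bar\mu] - \mu_1)^2 + \mathrm{Var}(\bar\mu)$ then gives the first displayed identity.

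\emph{Manipulated case.} With $m_1 = c x_1$ and $m_i = x_i$ for $i \ge 2$, the same two computations give $\mathbb{E}[\bar\mu] = \mu + \frac{(c-1)\mu_1}{N}$ and $\mathrm{Var}(\bar\mu) = \frac{\sigma^2}{N^2}(c^2 + N - 1)$. Writing $a \coloneqq \mu - \mu_1 < 0$ and $b \coloneqq \mu_1/N > 0$ (note that $\mu_1 > 0$, since $\mu_1 - \mu > 0$ by hypothesis), the bias--variance decomposition yields
\[
    \Phi(c) \coloneqq \mathbb{E}\!\left[(\bar\mu - \mu_1)^2\right] = \big(a + (c-1)b\big)^2 + \frac{\sigma^2}{N^2}\big(c^2 + N - 1\big),
\]
a convex parabola in $c$ with $\Phi(1) = a^2 + \sigma^2/N$. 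I would then compute $\Phi'(1) = 2ab + \frac{2\sigma^2}{N^2} = \frac{2}{N}\big(\sigma^2/N - \mu_1(\mu_1 - \mu)\big)$, which is negative precisely by the standing hypothesis $\mu_1(\mu_1 - \mu) > \sigma^2/N$; convexity of $\Phi$ then forces its minimizer $c^\star$ to satisfy $c^\star > 1$, so this $c^\star$ is the constant asserted in the statement.

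\emph{Closing the computation.} Expanding $\Phi$ in powers of $(c-1)$ gives $\Phi(c) = A(c-1)^2 + B(c-1) + C$ with $A = b^2 + \sigma^2/N^2$, $B = 2(ab + \sigma^2/N^2)$, and $C = a^2 + \sigma^2/N$; solving $\Phi'(c) = 0$ gives $c^\star - 1 = \dfrac{N\mu_1(\mu_1 - \mu) - \sigma^2}{\mu_1^2 + \sigma^2} > 0$, and the minimum value equals $C - B^2/(4A)$. The only genuine work is the algebraic identity
\[
    \frac{B^2}{4A} = \frac{(ab + \sigma^2/N^2)^2}{b^2 + \sigma^2/N^2} = \frac{\big(\sigma^2/N - \mu_1(\mu_1 - \mu)\big)^2}{\mu_1^2 + \sigma^2},
\]
after which $\Phi(c^\star) = (\mu - \mu_1)^2 + \sigma^2/N - \frac{(\sigma^2/N - \mu_1(\mu_1 - \mu))^2}{\mu_1^2 + \sigma^2}$, as claimed. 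There is no real obstacle beyond bookkeeping; the one point that warrants care is the positivity argument ensuring $c^\star > 1$ (equivalently $N\mu_1(\mu_1 - \mu) > \sigma^2$), which is exactly where the hypothesis is used, and keeping track of signs in the final simplification since $\sigma^2/N - \mu_1(\mu_1 - \mu) < 0$.
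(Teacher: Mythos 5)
Your proof is correct and takes essentially the same route as the paper: both write the MSE under the report $c x_1$ as an explicit quadratic in $c$, minimize it in closed form, and use the hypothesis $\mu_1(\mu_1 - \mu) > \sigma^2/N$ to certify that the minimizer exceeds $1$, arriving at the same $c^\star$ and the same minimal value. Your sign check via $\Phi'(1) < 0$ plus convexity is just a slightly cleaner packaging of the paper's observation that the optimal $c$ lies in the admissible interval.
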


The proof of Proposition~\ref{proposition:mean_example} alongside additional results for this example setup can be found in Appendix~\ref{appendix:mean-estimation}. The assumption $\mu_1(\mu_1 - \mu) > \sigma^2 / N$ holds whenever the heterogeneity of the data distributions ($\mu - \mu_1$) is large with respect to the variance $\sigma^2/N$ of the aggregated message. 

The conclusion of Proposition~\ref{proposition:mean_example} is that there is some client that is incentivized to magnify their sample in order to reduce their final error. In words, client $1$, whose target mean $\mu_1$ is far from the global $\mu$, can attempt to bias the aggregated mean estimate in their favor by altering their update to be more extreme.

\begin{remark}
    In Appendix~\ref{appendix:game-eqilibrium} we investigate the Nash equilibrium of the mean estimation game, where all clients can scale their gradient updates by a personal constant $c_i$. Contrast this with Proposition~\ref{proposition:mean_example}, which considers the incentive-compatible regime---we fix all but client~$i$ to report truthful gradients and then derive the optimal strategy for~$i$. We find that, for a large number of clients and at the equilibrium, the clients have an incentive to deviate and, on average, increase their weight, thus making their contribution a biased estimate of their true mean. This results in a higher error for all participants, because the final estimate is now biased and has higher variance.
\end{remark}

\begin{figure}[t]
    \centering
    \begin{tikzpicture}
    \begin{axis}[
        xmin=-4,
        xmax=8,
        ymin=-1,
        ymax=10,
        samples=100,
        xtick={-6,-4,-2, 0, 2, 4, 6, 8},
        ytick={0, 2, 4, 6, 8},
        legend style={
            at={(1,1)}, 
            anchor=north east, 
            font=\scriptsize,
            row sep=-2pt,
            inner xsep=1pt,
            inner ysep=1pt,
        },
        legend cell align={left},
        legend image post style={scale=0.5},
        grid=major,
        width=9cm,
        height=8cm,
    ]
    
    % Plot the three Gaussian functions
    \addplot[color=red, ultra thick] {x^2};
    \addlegendentry{$x^2$}

    \addplot[color=red, ultra thick, dotted, line width=0.75mm] {3 * x^2};
    \addlegendentry{${\mathbf{3}} x^2$}
    
    \addplot[color=blue, ultra thick] {(x - 1)^2 + 1};
    \addlegendentry{$(x - 1)^2 + 1$}

    \addplot[color=green, ultra thick] {3 * (x - 2)^2 + 2};
    \addlegendentry{$3 (x - 2)^2 + 2$}

    \addplot[color=orange, ultra thick] {(x - 0.5)^2 + 3};
    \addlegendentry{$(x - 0.5)^2 + 3$}

    \addplot[color=black, ultra thick] {1.5 * x^2 - 3.75 * x + 4.8125};
    \addlegendentry{$x^* = 5/4$}

    \addplot[color=black, ultra thick, dotted, line width=0.75mm] {2 * x^2 - 3.75 * x + 4.8125};
    \addlegendentry{$x^{**} = 15/16$}

    \end{axis}
    \end{tikzpicture}
    \caption{The plot follows Proposition~\ref{proposition:sgd-example}. Clients are represented by colors, and their respective loss functions (variants of quadratic loss) are shown next to their color in the legend. The black curve is the average loss function over all clients, and the legend shows the optimum. The red client scales their gradients by a constant, in this case $3 \times$; the dotted red line is their new loss function, and the dotted black line is the new average loss. The new global optimum is better for the red client, while it's worse for the green client.}
    \label{figure:sgd-example}
\end{figure}
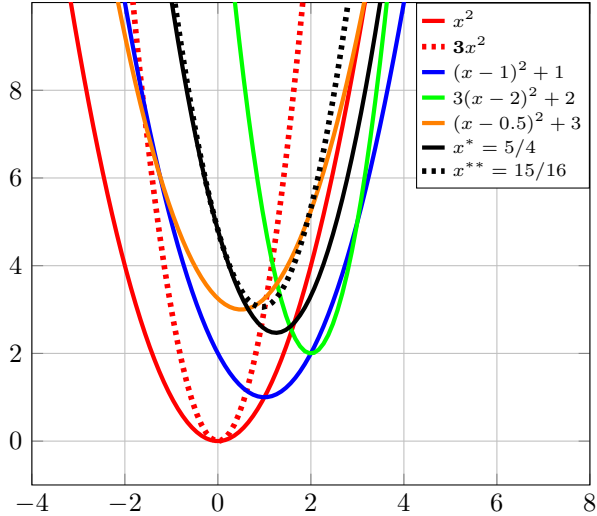

\subsection{Example with Stochastic Gradient Descent}
In order to show the effects of gradient modification we opt to work with the simpler gradient descent, so both the objective function and the gradients are deterministic, without stochasticity from data sampling or noise. Proposition~\ref{proposition:sgd-example} considers a set of four clients each having a square loss, where one of them has incentive to amplify their gradient update.
\begin{example}\label{proposition:sgd-example}
    Consider four clients with objective functions: $F_1(x) = x^2$, $F_2(x) = (x - 1)^2 + 1$, $F_3(x) = 3 (x - 2)^2 + 2$, $F_4(x) = (x - 0.5)^2 + 3$. Their average objective is $F(x) = 1.5 x^2 - 3.75 x + 4.8125$ and achieves minimal value at $x^* = \frac{5}{4}$. Suppose client 1 sends a gradient $g(x) = 6x$ as if his objective is $F'_1(x) = 3x^2$. The new average objective is $F'(x) = 2 x^2 - 3.75 x + 4.8125$ with minimal value at $x^{**} = \frac{15}{16}$. Then client 1 is better off, i.e. $F_1(x^{**}) < F_1(x^*)$, while client 3 is worse off, i.e. $F_3(x^{**}) > F_3(x^*)$. 
\end{example}
See Figure~\ref{figure:sgd-example} for visualization of Proposition~\ref{proposition:sgd-example}. The proposition illustrates how gradient amplification can be beneficial for some client, while making other clients worse off. The final model is better performing for client 1 (red on Figure~\ref{figure:sgd-example}), while client 1's modification leads to client 3 (green on Figure~\ref{figure:sgd-example}) being worse off compared to when all clients were truthful.

\section{Game-theoretic framework}\label{section:game}

Motivated by the examples in Section~\ref{section:motivating-examples} we introduce a game setup which captures these strategic interactions.

\paragraph{Action Space.}
We consider a game in which the clients seek to improve their own loss function by manipulating the messages they send to the server. Specifically, at time $t$ each client sends message $m^i_t = a^i_t g_i(\theta_t) + b^i_t \xi^i_t$, for some constants $\abs{a^i_t} \geq 1$, $b^i_t \geq 0$ chosen by the client. Here $\xi^i_t$ is a random vector that is independent of everything else and satisfies $\EE{\xi^i_t} = 0$ and $\EE{\norm{\xi^i_t}^2} = 1$. We also cover generalizations of the action space in Subsection~\ref{subsection:action-spaces-extension}. The rest of the process remains unchanged, with the server computing $\bar{m}_t = \frac{1}{N} \sum_{i = 1}^N m^i_t$ and $\theta_{t + 1} = \theta_t - \gamma_t \bar{m}_t$, and communicating $\theta_{t + 1}$ to all clients at each round.

The scaling factor $a^i_t$ gives a client~$i$ the ability to magnify their gradient update, as in the examples in Section \ref{section:motivating-examples}, so that the aggregation method computes a mean gradient that could produce a final model that is more beneficial for client~$i$. The additional noise term $b^i_t \xi^i_t$ is a natural way to also account for (1) clients wanting to obscure portions of their data, while amplifying the gradient effect of other portions, and/or (2) the clients adding differential-privacy noise to their updates for privacy-preserving reasons.

\paragraph{Utility and Reward Functions.}
As is standard in game theory, we assume that the clients are rational, i.e. they seek to select actions $(a^i_t, b^i_t)$ that increase their utility. We let the utility $U_i$ of client $i$ to be \[U_i = R_i(\theta) - p_i,\] where $R_i$ is the reward client $i$ gets from the end model $\theta$, and $p_i$ is the total payment client $i$ pays. Under a standard FL setting, $p_i = 0$ and clients observe only a reward that is a function of the final model. We introduce the option of payments to mitigate the incentive for gradient modification. In our theoretical analysis (see Section~\ref{section:theoretical-results}) we require that $R_i(\theta)$ is $L$-Lipschitz in $\theta$.

\begin{definition}[Lipschitzness]
    A function $R : \calX \subseteq \RR^n \to \RR^m$ is \emph{$L$-Lipschitz} if for all $x, y \in \calX$:
    \[
        \norm{R(x) - R(y)} \leq L \norm{x - y}.
    \]
\end{definition}

\paragraph{Desiderata.}
We seek to design a payment mechanism, or payment rule, such that the clients are incentivized to send meaningful updates, in order for the final model to achieve small loss on the global objective $F(\theta)$. In particular, we would like our payment mechanism to satisfy two properties:
(1) client~$i$ does not lose much if they send truthful updates, i.e. whenever all clients $j \neq i$ are reporting truthfully, then truthful reporting is $\varepsilon$-close to the optimal utility for client~$i$ (see Definition~\ref{definition:FL-BIC}), and (2) client~$i$'s best strategy is not too different from their true update, i.e. the best response strategy of client~$i$ when everyone else is reporting truthfully is $\varepsilon$-close to truthful reporting (see Definition~\ref{definition:truthful-reporting}). Below we formally define these two desirable properties for a federated learning protocol.\footnote{Appendix~\ref{appendix:BIC} compares the usual definition of Bayesian Incentive Compatibility with the one in this section. Usually the definition is phrased in terms of player types, which determine the payoff matrix/valuation function. Here we consider an equivalent formulation in terms of step-wise strategies and the final utility they produce.}
% \vspace{0.35em}
\begin{definition}[$\varepsilon$-Bayesian Incentive Compatibility]\label{definition:FL-BIC}
    A federated learning protocol $M$ is $\varepsilon$-\emph{Bayesian Incentive Compatible} (BIC) if:
    \begin{align*}
        \EE{U_i^M\left(\{\mathbf{1}_j\}_{j = 1}^N\right)}
        \geq \EE{U_i^M\left(\mathbf{s}_{i}, \{\mathbf{1}_j\}_{j \not= i}\right)} - \varepsilon,
    \end{align*}
    where $\mathbf{1}_i = (1, \dots, 1) \in \RR^T$ denotes fully truthful participation by client $i$, $\mathbf{s}_i = \left\{s_t^i\right\}_{t = 1}^T$ is some arbitrary strategy of client $i$, where $s_t^i$ is the strategy used by client $i$ at step $t$, and $U_i^M(\mathbf{v})$ is the utility of client~$i$ from $M$ when clients are using the strategy profile $\mathbf{v}$. Note that expectation is taken over the randomness in the clients' distributions, and any randomness (possibly none) in the protocol.
\end{definition}

In other words, a protocol is $\varepsilon$-Bayesian Incentive Compatibility if there exists a $\varepsilon$-Bayesian Nash Equilibrium of the game, where all participants are reporting truthfully their gradient updates. Therefore, for each client reporting truthfully is $\varepsilon$-close to best responding to all other clients reporting truthfully.\footnote{Moreover, the Relevation Principle from Mechanism Design states that any mechanism, where players are not necessarily being truthful, can be implemented by an incentive compatible mechanism with the same equilibrium payoffs~\citep{Nisan_Roughgarden_Tardos_Vazirani_2007}.}

% \vspace{0.35em}
\begin{definition}[$\varepsilon$-Approximately truthful reporting]\label{definition:truthful-reporting}
A strategy $(a^i_t, b^i_t)$ of client $i$ is \emph{approximately truthful} if it satisfies $\EE{\norm{a_t^i g_i(\theta_t) - g_i(\theta_t)}^2} \leq \varepsilon^2$ and $b^i_t \leq \varepsilon$.\footnote{Note that it is not truly necessary to have $b^i_t \leq \varepsilon$. We can assume that there is a global constant $b$, such that all clients can add a mean-zero variance-one noise term scaled by at most $b$, and the condition becomes $b^i_t \leq b + \varepsilon$} Moreover, in our analysis we require that the best response of client $i$ to truthful participation by clients $j \neq i$ is approximately truthful.
\end{definition}

\subsection{A note on reward functions}\label{subsection:reward-function}
Notice that our only constraint on the reward function $R_i(\theta)$ is for it to be $L$-Lipschitz. This allows our framework to cover a wide range of reward scenarios of practical interest. Below, we present several examples, starting with our primary motivation of modeling selfishness of heterogeneous clients.

\paragraph{Selfishness due to heterogeneity.}
Our primary motivation is to model clients who are selfish due to heterogeneity---they would like the final model to be as good as possible on their data distribution and the heterogeneity of client distributions creates an incentive to deviate from truthful participation. Mathematically, we can write this as $R_i(\theta) \sim -F_i(\theta)$, where we use $\sim$ to denote that $R_i(\theta)$ grows with $-F_i(\theta)$, i.e. as $F_i(\theta)$ decreases, $R_i(\theta)$ increases. For example, we could capture the phenomenon of \emph{diminishing returns} by modeling the reward for client $i$ with the logistic function $R_i(\theta) = \frac{1}{1 + e^{-1/F_i(\theta)}}$~\citep{Girdzijauskas_Štreimikienė_2007}.

\paragraph{Inter-client dynamics.}
We can model cooperative behavior if the reward $R_i(\theta) \sim \frac{-1}{\abs{S_i}} \sum_{j \in S_i} F_j(\theta)$ depends on the average loss of the clients in the group $S_i$ that client $i$ belongs to (akin to~\citet{donahue2020model, donahue2021optimality}). Moreover, we can also model antagonistic behavior similar to~\citet{dorner2024incentivizing} by setting the reward $R_i(\theta) \sim -\alpha F_i(\theta) + \frac{\beta}{N - 1} \sum_{j \not= i} F_j(\theta)$ to depend on some weighted sum of the model performance for client~$i$ and the negative of the model performance for the other clients (so client $i$ want to do well, while others shouldn't).

\paragraph{Additional fixed utilities.}
Because the Lipschitz property is oblivious to translations (in $\RR^d$), the reward function can accommodate situations where clients have fixed upfront utility that is independent of the final model $\theta$ and the history of the protocol: (1) upfront costs, (2) incentives to participate, (3) incentives to collect and transmit quality data, (4) subsidies. In particular, because of this our setup complements models that seek to handle free-riding~\citep{karimireddy2022mechanisms, han2023effect} and quality data collection~\citep{blum_one_2021}.

\section{Theoretical results}\label{section:theoretical-results}

In this section, we study a payment scheme which we prove is both $\varepsilon$-BIC and incentivizes approximately truthful reporting. We also provide explicit bounds on the penalties a client may pay and on the achieved rates of convergence of the global model, whenever the clients are approximately truthful. Finally, we discuss trade-offs between truthfulness, payment size and convergence.

At each step $t$ the server charges client $i$ the payment:
\begin{align}\label{eqn:payment_rule}
    p_t^i(\vec{m}_t) = C_t\left[\norm{m_t^i}^2 - \frac{1}{N - 1} \sum_{j \not= i} \norm{m_t^j}^2\right],
\end{align}
where $C_t$ is some client-independent constant (see the individual results below for definition). The total payment for each client for the protocol is then $p_i = \sum_{i=1}^T p_t^i(\vec{m}_t)$.
Note that this payment rule is \emph{budget-balanced}, that is at each step the server neither makes nor loses money because $ \sum_{i = 1}^N p_t^i(\vec{m}_t) = 0$, so it is possible that some clients get paid while others get charged according to what update they communicate. The budget-balanced property guarantees that there are no conflicting incentives with the server, because the server makes no profit from the payment scheme. Moreover, the payments can be computed solely based on the clients' messages, so no additional communication is required.

\subsection{Incentive compatibility and approximately truthful reporting}

First we show that the payment scheme is both $\varepsilon$-Bayesian Incentive Compatible and incentivizes approximately truthful reporting.

\begin{theorem}[Properties of the payment scheme]\label{theorem:eps-BIC}
    For every client $i$, suppose their objective $F_i$ is $H$-smooth and $m$-strongly-convex, and their reward $R_i$ is $L$-Lipschitz.
    Set $C_t = \frac{\sqrt{2 \mathcal{C}_t} \gamma_t L}{N \varepsilon}$, where $\mathcal{C}_t = \prod_{l = t + 1}^T c_{l}$ and $\ds c_l = 2 \left(1 - 2\gamma_l m + \gamma_l^2 H^2\right)$. Then
    \begin{enumerate}[nolistsep]
        \item the FedSGD protocol with Payment Rule (\ref{eqn:payment_rule}) is $\frac{\sqrt{2}LG \varepsilon}{N}$-BIC, where $G = \sum_{t = 1}^T \gamma_t \sqrt{\mathcal{C}_t}$,
        \item the best response strategy $(a_t^i, b_t^i)$ of client~$i$ to truthful participation from everyone else satisfies \\ $\ds \EE{\norm{a_t^i g_i(\theta_t) - g_i(\theta_t)}^2} \leq \varepsilon^2$ and $b_t^i \leq \varepsilon$ for all~$t$.
    \end{enumerate}
\end{theorem}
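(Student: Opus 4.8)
Fix client $i$ and suppose every $j\neq i$ reports truthfully. Let $\theta_t$ be the trajectory produced when $i$ plays a strategy $\mathbf{s}_i=(a^i_t,b^i_t)_{t=1}^T$, and $\theta'_t$ the trajectory when $i$ is also truthful; I would couple the two runs so that at each round all clients use the same data samples and $i$ uses the same noise vector $\xi^i_t$. Set $u_t:=(a^i_t-1)g_i(\theta_t)+b^i_t\xi^i_t$, the extra update $i$'s deviation injects at round $t$, so that $\theta_{t+1}=\theta_t-\gamma_t\bigl(\tfrac1N\sum_j g_j(\theta_t)+\tfrac1N u_t\bigr)$ and, conditioning on the history, $\EE{\norm{u_t}^2}=(a^i_t-1)^2\EE{\norm{g_i(\theta_t)}^2}+(b^i_t)^2$; this quantity dominates both $\EE{\norm{a^i_t g_i(\theta_t)-g_i(\theta_t)}^2}$ and $(b^i_t)^2$, so conclusion~2 reduces to showing $\EE{\norm{u_t}^2}\le\varepsilon^2$ at a best response. \emph{Step 1 (sensitivity of the final model):} put $\delta_t:=\theta_t-\theta'_t$, $\delta_1=\dzero$; expanding $\norm{\delta_{t+1}}^2$ and conditioning on the history, the mean-zero parts of the coupled stochastic gradients and of $\xi^i_t$ drop out of the cross terms, $m$-strong convexity and $H$-smoothness of $F=\tfrac1N\sum_j F_j$ give the one-step contraction $\norm{(x-\gamma_t\nabla F(x))-(y-\gamma_t\nabla F(y))}^2\le(1-2\gamma_t m+\gamma_t^2 H^2)\norm{x-y}^2$, the injected term $\tfrac{\gamma_t}{N}u_t$ is split off with $\norm{p+q}^2\le 2\norm{p}^2+2\norm{q}^2$, and the residual variance of the coupled-gradient difference is itself $O(\tfrac{H^2}{N}\norm{\delta_t}^2)$ by smoothness of $f(\cdot;z)$ --- this is exactly where the factor $2$ in $c_l=2(1-2\gamma_l m+\gamma_l^2 H^2)$ appears, yielding $\EE{\norm{\delta_{t+1}}^2}\le c_t\EE{\norm{\delta_t}^2}+\tfrac{2\gamma_t^2}{N^2}\EE{\norm{u_t}^2}$. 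Unrolling and using $\sqrt{\sum_t a_t}\le\sum_t\sqrt{a_t}$ gives $\EE{\norm{\delta_{T+1}}}\le\tfrac{\sqrt2}{N}\sum_t\gamma_t\sqrt{\mathcal{C}_t}\sqrt{\EE{\norm{u_t}^2}}$, hence by $L$-Lipschitzness of $R_i$, $\bigl|\EE{R_i(\theta_{T+1})}-\EE{R_i(\theta'_{T+1})}\bigr|\le\tfrac{\sqrt2 L}{N}\sum_t\gamma_t\sqrt{\mathcal{C}_t}\sqrt{\EE{\norm{u_t}^2}}$.

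\textbf{Step 2 (payments) and the combination.} The change in $i$'s expected payment splits into a direct round-$t$ term and cross-trajectory corrections. For the direct term, conditioning on $\theta_t$, $\EE{\norm{m^i_t}^2}-\EE{\norm{g_i(\theta_t)}^2}=((a^i_t)^2-1)\EE{\norm{g_i(\theta_t)}^2}+(b^i_t)^2\ge(a^i_t-1)^2\EE{\norm{g_i(\theta_t)}^2}+(b^i_t)^2=\EE{\norm{u_t}^2}$, using $(a-1)^2\le a^2-1$ for $a\ge1$ (the regime $a^i_t\le-1$ is handled separately: reversing the sign of one's own gradient pushes $\theta$ away from one's minimizer, so it only lowers $i$'s reward and cannot be a best response, hence does not threaten the BIC direction). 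The credit terms $-\tfrac1{N-1}\sum_{j\neq i}\norm{m^j_t}^2$ and the effect of the round-$t$ perturbation on $\norm{g_j(\theta_l)}^2$ at later rounds $l>t$ are bounded by the displacement estimate of Step~1 together with smoothness and are of lower order, so $\EE{p_i^{\mathbf{s}_i}}-\EE{p_i^{\mathbf{1}_i}}\ge\sum_t C_t\EE{\norm{u_t}^2}$ up to a lower-order remainder. Writing $x_t:=\EE{\norm{u_t}^2}$ and substituting $C_t=\tfrac{\sqrt{2\mathcal{C}_t}\gamma_t L}{N\varepsilon}$, Steps~1--2 give
\[
\EE{U_i(\mathbf{1}_i)}-\EE{U_i(\mathbf{s}_i)}\ \ge\ \sum_{t=1}^T \frac{\sqrt2\,L\gamma_t\sqrt{\mathcal{C}_t}}{N}\left(\frac{x_t}{\varepsilon}-\sqrt{x_t}\right).
\]
Since $x/\varepsilon-\sqrt x\ge-\varepsilon/4$ for all $x\ge0$, the right side is at least $-\tfrac{\varepsilon}{4}\cdot\tfrac{\sqrt2 L}{N}\sum_t\gamma_t\sqrt{\mathcal{C}_t}=-\tfrac{\sqrt2 LG\varepsilon}{4N}\ge-\tfrac{\sqrt2 LG\varepsilon}{N}$, which is conclusion~1. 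For conclusion~2, let $\mathbf{s}_i$ be a best response; since reverting $\mathbf{s}_i$ to truthful at a single round $t$ cannot raise the utility, the one-round version of Steps~1--2 yields $C_t x_t\le\tfrac{\sqrt2 L\gamma_t\sqrt{\mathcal{C}_t}}{N}\sqrt{x_t}$, i.e.\ $\sqrt{x_t}\le\varepsilon$, which gives exactly $\EE{\norm{a^i_t g_i(\theta_t)-g_i(\theta_t)}^2}\le\varepsilon^2$ and $b^i_t\le\varepsilon$.

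\textbf{Main obstacle.} The technical heart is Step~1: producing the recursion with precisely the stated $c_l$ (hence $\mathcal{C}_t$) requires the coupling to be set up so that the non-deviators' stochastic-gradient noise cancels up to an $O(\tfrac{H^2}{N}\norm{\delta_t}^2)$ remainder and so that the deviation's contribution collapses to $\tfrac{2\gamma_t^2}{N^2}\EE{\norm{u_t}^2}$; the single-round argument behind conclusion~2 must additionally control how the round-$t$ perturbation interacts with $i$'s own (a priori unknown, a posteriori small) deviations at later rounds, which calls for a short bootstrap. By comparison, the payment bookkeeping of Step~2 --- the credit terms, the cross-trajectory corrections, and the $a^i_t\le-1$ case --- is routine.
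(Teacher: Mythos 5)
Your proposal reproduces the paper's argument almost step for step: the one-round trajectory recursion with the same contraction factor $c_l = 2\left(1 - 2\gamma_l m + \gamma_l^2 H^2\right)$, the unrolled bound $\EE{\norm{\delta_{T+1}}} \leq \frac{\sqrt{2}}{N}\sum_t \gamma_t\sqrt{\mathcal{C}_t}\sqrt{x_t}$ followed by $L$-Lipschitzness of $R_i$, the payment lower bound via $(a-1)^2 \leq a^2 - 1$ for $a \geq 1$, and the downward parabola in $\sqrt{x_t}$ whose roots at $0$ and $\varepsilon$ deliver both conclusions. (Your $-\varepsilon/4$ bound from the vertex is in fact a factor of four sharper than the paper's, which only uses $\sqrt{x_t}\leq\varepsilon$ on the positive region of the parabola.)

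The one place you genuinely depart from the paper is the combination step, and that is where there is a gap. You compare the fully deviating run against the fully truthful run in one shot, which forces you to account for the fact that at round $t$ the two runs evaluate the payment at different models $\theta_t \neq \theta'_t$: both your own term ($\norm{g_i(\theta_t)}^2$ versus $\norm{g_i(\theta'_t)}^2$) and the credit terms ($\norm{g_j(\theta_t)}^2$ versus $\norm{g_j(\theta'_t)}^2$) acquire cross-trajectory corrections. You assert these are ``of lower order,'' but each such correction carries the factor $C_t = \Theta(1/\varepsilon)$, and smoothness only gives something like $C_t\, H\norm{\delta_t}\left(\norm{g_j(\theta_t)}+\norm{g_j(\theta'_t)}\right)$ with $\norm{\delta_t}$ of order $\frac{1}{N}\sum_{s<t}\gamma_s\sqrt{\mathcal{C}_s/\mathcal{C}_t}\,\sqrt{x_s}$ --- i.e.\ terms of order $\sqrt{x_s}/\varepsilon$, which are not dominated by the per-round signal $x_t/\varepsilon - \sqrt{x_t}$ (and which also require a uniform gradient-norm bound the setup does not supply). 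The paper avoids this entirely by a hybrid telescoping argument (Proposition~\ref{proposition:bound-utility-total-at-time-T-plus-1} and Claim~\ref{claim:bound-on-reward-loss-per-turn}): it compares adjacent strategies that coincide through round $t$, differ only in client $i$'s action at round $t$, and are truthful afterwards, so the credit terms at round $t$ cancel exactly and the own-payment comparison is exact (Claim~\ref{claim:bound-on-payment-per-turn-hold-prev-fixed}); only the final-reward difference needs the trajectory bound. Switching to that decomposition closes your argument and also supplies the ``short bootstrap'' you flag for conclusion~2. Two smaller remarks: the inequality $(a-1)^2 \leq a^2 - 1$ fails for $a \leq -1$, and your dismissal of that regime (``reversing the sign only lowers the reward'') does not follow for a general $L$-Lipschitz $R_i$, though the paper's own proof is equally silent there; and both your write-up and the paper's leave the post-deviation payments (rounds after $t$, truthful messages evaluated at different models) unaccounted for.
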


Intuitively, the two results demonstrate that if all other players $j\neq i$ are truthful, then it is in the interest of player $i$ to be (approximately) truthful as well, and this will yield close to optimal utility. Note that above the variables $C_t$ and $\mathcal{C}_t$ depend on the learning rate, so for different learning schedules we get different payment values. In our experiments in Section~\ref{section:experimental-results} we instead use constant $C_t$ and still obtain positive results.

\subsection{Payments and convergence}

Next, we provide explicit upper bounds on the total penalty paid by each player, as well as on the convergence speed for the loss function of each client, whenever the clients are $\varepsilon$-approximately truthful. These bounds allow us to discuss the interplay between learning quality and penalties, as well as their dependence on the parameters of the FL protocol.

\begin{theorem}[Bound on individual payments]\label{theorem:bound-on-payments}
    Suppose all participants are reporting approximately truthfully at each time step.
    Then the total payment paid by client $i$ is bounded by
    \begin{align*}
        \sum_{t = 1}^T p_t^i(\vec{m}_t)
        \leq   \frac{\sqrt{2}L G}{N} \left[2\varepsilon^2 + 2 \varepsilon \sigma + 2\zeta^2 + \rho^2\right] + \frac{\sqrt{8}L\varepsilon}{N} \sum_{t = 1}^T \gamma_t \sqrt{\mathcal{C}_t} \norm{\nabla F_i(\theta_t)},
    \end{align*}
    where $G = \sum_{t = 1}^T \gamma_t \sqrt{\mathcal{C}_t}$.
\end{theorem}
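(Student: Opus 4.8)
The plan is to bound the payment $\sum_{t=1}^T p_t^i(\vec m_t)$ directly by controlling $\norm{m_t^i}^2 - \frac{1}{N-1}\sum_{j\neq i}\norm{m_t^j}^2$ term-by-term using the approximate-truthfulness hypothesis, and then summing against the known value of $C_t = \tfrac{\sqrt{2\calC_t}\,\gamma_t L}{N\varepsilon}$ from Theorem~\ref{theorem:eps-BIC}. First I would expand $m_t^j = a_t^j g_j(\theta_t) + b_t^j \xi_t^j$ and take expectations. Since $\xi_t^j$ is mean-zero, variance-one and independent of everything, $\EE{\norm{m_t^j}^2} = \EE{\norm{a_t^j g_j(\theta_t)}^2} + (b_t^j)^2$. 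For the scaled-gradient piece I would write $a_t^j g_j = g_j + (a_t^j g_j - g_j)$, so $\EE{\norm{a_t^j g_j}^2} \le \EE{\norm{g_j}^2} + 2\EE{\norm{g_j}\norm{a_t^j g_j - g_j}} + \EE{\norm{a_t^j g_j - g_j}^2}$; the cross term is handled by Cauchy--Schwarz as $2\sqrt{\EE{\norm{g_j}^2}}\sqrt{\EE{\norm{a_t^j g_j - g_j}^2}} \le 2\sqrt{\EE{\norm{g_j}^2}}\,\varepsilon$, and the last term is $\le \varepsilon^2$. Using $\EE{\norm{g_j(\theta_t)}^2} = \norm{\nabla F_j(\theta_t)}^2 + \EE{\norm{e_j(\theta_t)}^2} \le \norm{\nabla F_j(\theta_t)}^2 + \sigma^2$, and $\sqrt{\EE{\norm{g_j}^2}} \le \norm{\nabla F_j(\theta_t)} + \sigma$, each $\EE{\norm{m_t^j}^2}$ is sandwiched between $\norm{\nabla F_j(\theta_t)}^2 + \EE{\norm{e_j(\theta_t)}^2}$ and $\norm{\nabla F_j(\theta_t)}^2 + \EE{\norm{e_j(\theta_t)}^2} + 2\varepsilon\norm{\nabla F_j(\theta_t)} + 2\varepsilon\sigma + 2\varepsilon^2$ (the lower bound since $|a_t^j|\ge 1$ forces $\EE{\norm{a_t^j g_j}^2}\ge\EE{\norm{g_j}^2}$, and $b_t^j\ge 0$).

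Next I would form the difference $\EE{\norm{m_t^i}^2} - \frac{1}{N-1}\sum_{j\neq i}\EE{\norm{m_t^j}^2}$. Using the upper bound on the $i$-term and the lower bound on each $j$-term, the leading-order gradient and noise contributions become $\norm{\nabla F_i(\theta_t)}^2 - \frac{1}{N-1}\sum_{j\neq i}\norm{\nabla F_j(\theta_t)}^2$ plus $\EE{\norm{e_i(\theta_t)}^2} - \frac{1}{N-1}\sum_{j\neq i}\EE{\norm{e_j(\theta_t)}^2}$, plus the $\varepsilon$-order slack $2\varepsilon^2 + 2\varepsilon\sigma + 2\varepsilon\norm{\nabla F_i(\theta_t)}$. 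Here Assumption~\ref{assumption:gradient-difference} gives $\norm{\nabla F_i(\theta_t)}^2 - \norm{\nabla F_j(\theta_t)}^2 \le 2\zeta^2$, so the averaged gradient-norm difference is $\le 2\zeta^2$; and Assumption~\ref{assumption:variance-difference} gives the averaged variance difference $\le \rho^2$. (One should double-check whether the intended bound uses $\zeta^2$ directly rather than $2\zeta^2$ — it may be that the cleaner route is $\norm{\nabla F_i}^2 - \norm{\nabla F}^2 \le \zeta^2$ combined with $\norm{\nabla F}^2 \le \frac{1}{N-1}\sum_{j\neq i}\norm{\nabla F_j}^2 + O(\zeta^2/N)$; I would reconcile this against the stated constant $2\zeta^2$.) Thus $\EE{\norm{m_t^i}^2} - \frac{1}{N-1}\sum_{j\neq i}\EE{\norm{m_t^j}^2} \le 2\zeta^2 + \rho^2 + 2\varepsilon^2 + 2\varepsilon\sigma + 2\varepsilon\norm{\nabla F_i(\theta_t)}$.

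Finally I would multiply by $C_t = \tfrac{\sqrt{2\calC_t}\,\gamma_t L}{N\varepsilon}$ and sum over $t$. The $\varepsilon$-independent bracket $[2\zeta^2+\rho^2+2\varepsilon^2+2\varepsilon\sigma]$ pairs with $\sum_t C_t = \tfrac{\sqrt 2 L}{N\varepsilon}\sum_t \gamma_t\sqrt{\calC_t} = \tfrac{\sqrt2 L G}{N\varepsilon}$, and after the $\varepsilon$ in $C_t$'s denominator cancels against... wait — it does not cancel for that bracket, so I would instead note the bracket itself carries no $1/\varepsilon$, giving $\tfrac{\sqrt2 L G}{N}\big[\tfrac{2\varepsilon^2 + 2\varepsilon\sigma + 2\zeta^2 + \rho^2}{\varepsilon}\big]$; since the claimed bound is $\tfrac{\sqrt2 LG}{N}[2\varepsilon^2+2\varepsilon\sigma+2\zeta^2+\rho^2]$, I suspect $C_t$ should be read without the $1/\varepsilon$ in this theorem's normalization, or the $\zeta,\rho$ terms are meant to be divided by $\varepsilon$; I would state the bound matching the chosen $C_t$ and flag the discrepancy. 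The term $2\varepsilon\norm{\nabla F_i(\theta_t)}$ pairs with $C_t$ to give $\tfrac{2\varepsilon\sqrt{2\calC_t}\gamma_t L}{N\varepsilon}\norm{\nabla F_i(\theta_t)} = \tfrac{\sqrt8 L}{N}\gamma_t\sqrt{\calC_t}\norm{\nabla F_i(\theta_t)}$, which sums to exactly the second stated term. The main obstacle I anticipate is precisely this bookkeeping of the $\varepsilon$ factors and the $\zeta^2$ vs.\ $2\zeta^2$ constant — getting the inequalities to land on the stated constants rather than something off by a factor — together with carefully justifying the lower bound on the $j$-terms (which needs $|a_t^j|\ge1$, and is what makes the subtraction go the right way). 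The gradient/variance manipulations and the telescoping sum are routine once the signs are pinned down.
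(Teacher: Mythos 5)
Your proposal is correct and follows essentially the same route as the paper's proof: bound each $\EE{\norm{m_t^i}^2} - \frac{1}{N-1}\sum_{j\neq i}\EE{\norm{m_t^j}^2}$ term-by-term, using $\abs{a_t^j}\geq 1$ to lower-bound the other clients' contributions, the pairwise consequences of Assumptions~\ref{assumption:gradient-difference} and~\ref{assumption:variance-difference} to get $2\zeta^2+\rho^2$, Cauchy--Schwarz plus $\sqrt{\EE{\norm{g_i(\theta_t)}^2}}\leq\norm{\nabla F_i(\theta_t)}+\sigma$ for the $\varepsilon$-order terms, and then multiplying by $C_t$ and summing. The $1/\varepsilon$ bookkeeping issue you flagged is genuine: the paper's own final summation is only consistent with reading $C_t = \sqrt{2\mathcal{C}_t}\,\gamma_t L/N$ (no $1/\varepsilon$), under which both stated terms come out exactly, whereas the $C_t$ of Theorem~\ref{theorem:eps-BIC} would leave a residual $1/\varepsilon$ on the first bracket and remove the $\varepsilon$ from the second term.
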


\begin{theorem}[Convergence rate]\label{theorem:convergence-of-sgd}
    Suppose all players are reporting $\varepsilon$-approximately truthfully at each time step.
    Assume that there exist scalars $M \geq 0$ and $M_V \geq 0$, such that for every $t$ the gradient error is bounded by \[\EE{\norm{e_i(\theta_t)}^2} \leq M + M_V \norm{\nabla F_i(\theta_t)}^2.\] Let $\eta = \frac{4 H (2 M_V + N)}{mN}$ and $\gamma_t = \frac{4}{m (\eta + t)}$.
    Then
    \begin{align*}
        \EE{F(\theta_T) - F(\theta^*)}
        \leq   \max\left\{\frac{16 H (2\varepsilon^2 + M + M_V \zeta^2)}{3 N m^2(\eta + T)}, \frac{(\eta + 1)(F(\theta_1) - F(\theta^*))}{\eta + T} \right\}.
    \end{align*}
\end{theorem}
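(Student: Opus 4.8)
The plan is to reduce Theorem~\ref{theorem:convergence-of-sgd} to the textbook analysis of SGD on a smooth, strongly convex objective, treating the aggregate message $\bar m_t = \frac1N\sum_i m_t^i$ as a slightly biased stochastic gradient of $F$ at $\theta_t$. First I would apply $H$-smoothness of $F$ to the update $\theta_{t+1} = \theta_t - \gamma_t \bar m_t$ to obtain the descent inequality
\[
    F(\theta_{t+1}) \le F(\theta_t) - \gamma_t \inner{\nabla F(\theta_t)}{\bar m_t} + \frac{H\gamma_t^2}{2}\norm{\bar m_t}^2,
\]
and then take expectations conditional on $\theta_t$, over the sampling $z\sim D_i$ and the independent noise terms $\xi_t^i$.

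The core is to control the two resulting expectations. I would split $\bar m_t = \hat g_t + d_t$, where $\hat g_t = \frac1N\sum_i g_i(\theta_t)$ is the honest averaged stochastic gradient and $d_t = \frac1N\sum_i(a_t^i-1)g_i(\theta_t) + \frac1N\sum_i b_t^i\xi_t^i$ is the strategic perturbation. Then $\EE{\hat g_t\mid\theta_t} = \nabla F(\theta_t)$, and independence of the $g_i$ across clients, together with the assumed bound $\EE{\norm{e_i(\theta_t)}^2}\le M + M_V\norm{\nabla F_i(\theta_t)}^2$ and Assumption~\ref{assumption:gradient-difference} (which gives $\norm{\nabla F_i(\theta_t)}^2 \le \norm{\nabla F(\theta_t)}^2 + \zeta^2$), yields $\EE{\norm{\hat g_t - \nabla F(\theta_t)}^2\mid\theta_t} \le \frac1N(M + M_V\zeta^2 + M_V\norm{\nabla F(\theta_t)}^2)$. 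For the perturbation, Definition~\ref{definition:truthful-reporting} gives $\EE{\norm{d_t}^2\mid\theta_t} = O(\varepsilon^2)$, and $\norm{\EE{d_t\mid\theta_t}} \le \varepsilon$ since $\norm{(a_t^i-1)\nabla F_i(\theta_t)} \le \sqrt{\EE{\norm{a_t^i g_i(\theta_t) - g_i(\theta_t)}^2}} \le \varepsilon$ by Jensen. Feeding these into the descent inequality, using $\EE{\inner{\nabla F(\theta_t)}{\bar m_t}\mid\theta_t} = \norm{\nabla F(\theta_t)}^2 + \inner{\nabla F(\theta_t)}{\EE{d_t\mid\theta_t}}$, the convex-smooth bound $\norm{\nabla F(\theta_t)}^2 \le 2H(F(\theta_t) - F(\theta^*))$ (since $\nabla F(\theta^*)=0$), the Polyak--{\L}ojasiewicz inequality $\norm{\nabla F(\theta_t)}^2 \ge 2m(F(\theta_t) - F(\theta^*))$ implied by $m$-strong convexity, and Young's inequality on the bias cross-term, I would collapse everything into a one-step recursion
\[
    u_{t+1} \le (1 - c_1\gamma_t)\,u_t + c_2\gamma_t^2, \qquad u_t := \EE{F(\theta_t) - F(\theta^*)},
\]
with $c_1 \asymp m$ and $c_2 \asymp \tfrac{H}{N}(\varepsilon^2 + M + M_V\zeta^2)$; the requirement that $\gamma_t$ be small enough for the $\norm{\bar m_t}^2$ term not to overwhelm the contraction is exactly what pins down $\eta = \tfrac{4H(2M_V + N)}{mN}$ and $\gamma_t = \tfrac4{m(\eta+t)}$.

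The last step is the standard unrolling. With this $\gamma_t$, I would prove by induction on $t$ that $u_t \le \tfrac{\nu}{\eta+t}$ for $\nu = \max\{\tfrac{16H(2\varepsilon^2 + M + M_V\zeta^2)}{3Nm^2},\ (\eta+1)(F(\theta_1) - F(\theta^*))\}$: the base case $t=1$ is immediate from the second term of the maximum, and the inductive step reduces to the elementary inequality $(1-c_1\gamma_t)\tfrac{\nu}{\eta+t} + c_2\gamma_t^2 \le \tfrac{\nu}{\eta+t+1}$ after substituting the value of $\gamma_t$. Taking $t=T$ gives the claimed bound.

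I expect the main obstacle to be the second step, and specifically the bias. Because $\lvert a_t^i\rvert \ge 1$, the aggregate $\bar m_t$ is \emph{not} an unbiased estimator of $\nabla F(\theta_t)$, so the classical unbiased-SGD recursion does not apply verbatim: a crude Young split of the bias cross-term $\inner{\nabla F(\theta_t)}{\EE{d_t\mid\theta_t}}$ produces an $O(\gamma_t\varepsilon^2)$ contribution, which (since $\sum_t\gamma_t$ diverges) would leave a non-vanishing error floor rather than the transient $O(\varepsilon^2/(Nm^2(\eta+T)))$ term appearing in the statement. The delicate point is to handle the bias so that it lands in the $c_2\gamma_t^2$ part of the recursion; I would expect this to hinge on exploiting the strong-convexity structure more carefully (e.g. that $\norm{\theta_t-\theta^*}$ and $\norm{\nabla F(\theta_t)}$ both scale like $\sqrt{F(\theta_t)-F(\theta^*)}$, so the cross-term can be absorbed into the contraction and into the $\gamma_t^2$ second-moment term) rather than on a generic Young inequality, and possibly on the noise level $M,M_V$ diluting the admissible scaling $\lvert a_t^i-1\rvert$. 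The remaining work---carrying the constants $\eta$, $\tfrac{16H}{3}$ and the $\tfrac1N$ factor faithfully through the recursion and its unrolling---is careful but routine bookkeeping.
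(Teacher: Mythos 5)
Your plan is essentially the paper's proof: the paper likewise reduces the theorem to a second-moment bound on the aggregated message (its Claim~\ref{claim:bound-on-variance} shows $\Var{\bar m_t}\leq \frac{2(2\varepsilon^2+M+M_V\zeta^2)}{N}+\frac{2M_V}{N}\norm{\nabla F(\theta_t)}^2$, exactly the quantity you derive from the split $\bar m_t=\hat g_t+d_t$) and then runs the standard smooth/strongly-convex SGD recursion with $\gamma_t=\tfrac{4}{m(\eta+t)}$, the only difference being that the paper outsources the descent inequality, the one-step recursion and the inductive unrolling to Theorem~4.7 of \citet{bottou2018optimization} rather than re-deriving them. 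The ``main obstacle'' you flag---that $\bar m_t$ is a biased estimator of $\nabla F(\theta_t)$, so the cross-term threatens an $O(\gamma_t\varepsilon^2)$ error floor---is not actually resolved in the paper either: Claim~\ref{claim:bound-on-variance} measures variance around the biased mean $\frac1N\sum_i a_t^i\nabla F_i(\theta_t)$, and Lemma~\ref{lemma:bottou} is then invoked with $\mu=\mu_G=1$, which in the original reference forces $\EE{\bar m_t\mid\theta_t}=\nabla F(\theta_t)$; so your hesitation points at a genuine soft spot shared by the published argument rather than a defect of your own route.
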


\paragraph{Discussion.}
The following two scenarios explore the effect of higher levels of heterogeneity on the expected payment of each player and the convergence rate of the federated learning protocol. In particular, we seek to understand what is the trade-off between heterogeneity and payments/convergence rate. Both of these directly rely on Theorems~\ref{theorem:eps-BIC} and~\ref{theorem:convergence-of-sgd}.

\begin{example}[Constant heterogeneity bounds]
    Suppose $\zeta$ and $\rho$ are both constants fixed before the learning process is ever run. Then the total payment made by each player is at most $O\left(\frac{1}{N}\right)$ and the convergence rate becomes $O\left(\frac{1}{NT}\right)$. Hence, both the expected payments and the convergence rate decrease linearly in $N$, while preserving the $\varepsilon$-BIC property and the approximate truthful reporting property.
\end{example}

\begin{example}[Scaling the heterogeneity bounds for large $N$]
    Notice that even if the heterogeneity bounds increase with the number of clients, our results still give reasonable bounds on the individual payments and the convergence rate. Suppose $\zeta$ and $\rho$ are both of order $O\left(\sqrt[4]{N}\right)$. Then the total payment made by each player is at most $O\left(\frac{1}{\sqrt{N}}\right)$ and the convergence rate becomes $O\left(\frac{1}{\sqrt{N}T}\right)$. Hence, as we increase the number of participants $N$ we can simultaneously (1) increase the threshold for heterogeneity; and (2) reduce the maximal individual payment, while (3) preserving convergence.
\end{example}

\subsection{Extensions of the action space}\label{subsection:action-spaces-extension}
This subsection presents several generalization of the strategy space we defined in Subsection~\ref{section:game}. All three claims follow from our original theoretical analysis with slight modifications (see Appendix~\ref{appendix:extending-action-space} for details).

\begin{claim}[Mixed strategies]\label{claim:mixed-strategy}
    Suppose the message that client $i$ sends is $m^i_t = a^i_t g_i(\theta_t) + b^i_t \xi^i_t$, where $a^i_t$ and $b^i_t$ are random variables, such that $\abs{a^i_t} \geq 1$ almost surely and both variables are independent of the gradient updates and the protocol history. Then the results from Theorem~\ref{theorem:eps-BIC}, Proposition~\ref{theorem:bound-on-payments} and Theorem~\ref{theorem:convergence-of-sgd} still hold.
\end{claim}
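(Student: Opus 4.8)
The plan is to reduce the randomized (mixed) setting back to the pure-strategy setting already analyzed, by conditioning on the auxiliary randomness. Write $\omega$ for the joint realization of all the random coefficients $\{a^i_t, b^i_t\}_{i \in [N],\, t \in [T]}$. By hypothesis $\omega$ is independent of the gradient noise $\{e_j(\theta_t)\}_{j,t}$, of the perturbation vectors $\{\xi^j_t\}_{j,t}$, and of the protocol history; hence, conditionally on $\omega$, the entire execution is exactly an instance of the original pure-strategy protocol in which each client $j$ uses the fixed coefficients $a^j_t(\omega), b^j_t(\omega)$, which still satisfy $\abs{a^j_t(\omega)} \ge 1$ and $b^j_t(\omega) \ge 0$ almost surely (as in the original action space). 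The constants $C_t$ and $\mathcal{C}_t$ of Theorem~\ref{theorem:eps-BIC} are strategy-independent, hence unchanged. Consequently every upper bound established in Section~\ref{section:theoretical-results} holds for the conditional process, and since all three statements are one-sided inequalities, taking the expectation over $\omega$ and using the tower property transfers each of them to the mixed-strategy process.

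Concretely, for part~1 of Theorem~\ref{theorem:eps-BIC} and an arbitrary (possibly randomized) deviation $\mathbf{s}_i$ of client~$i$ against truthful $\{\mathbf{1}_j\}_{j \ne i}$, applying Theorem~\ref{theorem:eps-BIC}(1) conditionally on $\omega$ to the pure deviation $\mathbf{s}_i(\omega)$ gives
\[
\EE{U_i^M(\mathbf{s}_i, \{\mathbf{1}_j\}_{j \ne i})}
= \udrEE{\omega}{\EE{U_i^M(\mathbf{s}_i(\omega), \{\mathbf{1}_j\}_{j \ne i}) \mid \omega}}
\le \EE{U_i^M(\{\mathbf{1}_j\}_{j=1}^N)} + \frac{\sqrt{2}\,L G \varepsilon}{N},
\]
i.e.\ $\varepsilon$-BIC with the same constant. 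For part~2, note that $\EE{U_i^M(\mathbf{s}_i, \{\mathbf{1}_j\}_{j \ne i})}$ is the $\omega$-average of the pure-strategy payoffs $\EE{U_i^M(\mathbf{s}_i(\omega), \{\mathbf{1}_j\}_{j \ne i}) \mid \omega}$, each of which is at most the pure best-response value; hence the pure and mixed best-response values coincide, and any mixed best response places all its mass (up to a null set) on pure best responses. Applying Theorem~\ref{theorem:eps-BIC}(2) to each such realization gives $\EE{\norm{a^i_t g_i(\theta_t) - g_i(\theta_t)}^2 \mid \omega} \le \varepsilon^2$ and $b^i_t(\omega) \le \varepsilon$ for a.e.\ $\omega$ and every $t$; averaging over $\omega$ recovers exactly the conditions of Definition~\ref{definition:truthful-reporting}.

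For Proposition~\ref{theorem:bound-on-payments} and Theorem~\ref{theorem:convergence-of-sgd} one first fixes the meaning of ``approximately truthful'' for a randomized strategy: we require each realization to be approximately truthful in expectation over the gradient noise, i.e.\ $\EE{\norm{a^i_t g_i(\theta_t) - g_i(\theta_t)}^2 \mid \omega} \le \varepsilon^2$ and $b^i_t(\omega) \le \varepsilon$ almost surely (one may instead take the expectation in Definition~\ref{definition:truthful-reporting} over $\omega$ as well, at the cost of the affine-dependence remark below). Under this hypothesis the conditional process satisfies the assumptions of Proposition~\ref{theorem:bound-on-payments} and Theorem~\ref{theorem:convergence-of-sgd} for a.e.\ $\omega$, so the conclusions hold conditionally on $\omega$ and, averaging over $\omega$, unconditionally (for the payment bound using that both sides of $\EE{\sum_t p^i_t \mid \omega} \le (\cdot)$ are $\omega$-measurable, and for the convergence bound using $\EE{F(\theta_T) - F(\theta^*)} = \udrEE{\omega}{\EE{F(\theta_T) - F(\theta^*) \mid \omega}}$). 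The only real work — and the main, if minor, obstacle — is the bookkeeping needed to check that the pure-strategy proofs use $a^i_t, b^i_t$ solely through quantities that are affine in their conditional law, such as the second moments entering $\EE{\norm{m^i_t}^2}$ and $\EE{m^i_t}$; this is exactly what makes the $\omega$-average commute with the estimates (and what lets one replace a per-realization truthfulness slack by its $\omega$-average wherever the dependence is affine). No genuinely new difficulty arises.
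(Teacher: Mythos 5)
Your proposal is correct and follows essentially the same route as the paper: both arguments reduce the mixed-strategy case to the pure-strategy analysis by exploiting that $a^i_t, b^i_t$ are independent of the gradients and the history, so that every bound in the pure proofs (which already holds under an expectation) survives once the expectation also integrates over the auxiliary randomness. The paper phrases this as folding the randomness of $a^i_t$ into the existing expectations and replacing $\abs{b^i_t}$, $(b^i_t)^2$ by $\sqrt{\EE{(b^i_t)^2}}$, $\EE{(b^i_t)^2}$, whereas you condition on the realization and apply the tower property to the final one-sided bounds --- the same observation in slightly different packaging, and at a comparable level of bookkeeping rigor.
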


\begin{claim}[History-dependent strategies]\label{claim:history-dependent-strategy}
    Suppose the message that client $i$ sends is $m^i_t = a^i_t g_i(\theta_t) + b^i_t \xi^i_t$, where $a^i_t$ and $b^i_t$ are random variables, such that $\abs{a^i_t} \geq 1$ almost surely and both variables are allowed to depend on the protocol history. Then the results from Theorem~\ref{theorem:eps-BIC}, Proposition~\ref{theorem:bound-on-payments} and Theorem~\ref{theorem:convergence-of-sgd} still hold.
\end{claim}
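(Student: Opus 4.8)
The plan is to show that the proofs of Theorem~\ref{theorem:eps-BIC}, Proposition~\ref{theorem:bound-on-payments} and Theorem~\ref{theorem:convergence-of-sgd} already proceed through step-wise inequalities that can be stated conditionally on the protocol history, so that allowing $a_t^i$ and $b_t^i$ to be history-measurable changes nothing. Let $\calH_t$ be the $\sigma$-algebra generated by everything revealed before the messages of round $t$ are sent (the models $\theta_1,\dots,\theta_t$ and all past messages and payments); by ``protocol history'' I mean exactly this information, so $\theta_t$ is $\calH_t$-measurable and, by hypothesis, so are $a_t^i$ and $b_t^i$, while the round-$t$ stochastic gradients $g_i(\theta_t)$ (equivalently the noise $e_i(\theta_t)$) and the vectors $\xi_t^i$ are drawn afresh and are independent of $\calH_t$. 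Hence, conditioned on $\calH_t$, the coefficients $a_t^i,b_t^i$ are fixed scalars with $\abs{a_t^i}\ge 1$, $b_t^i\ge 0$, and the fresh quantities still satisfy $\EE{e_i(\theta_t)\mid\calH_t}=0$, $\EE{\norm{e_i(\theta_t)}^2\mid\calH_t}\le\sigma^2$ (or $\le M+M_V\norm{\nabla F_i(\theta_t)}^2$), $\EE{\xi_t^i\mid\calH_t}=0$, $\EE{\norm{\xi_t^i}^2\mid\calH_t}=1$, with $\xi_t^i$ independent of the $e_j(\theta_t)$'s given $\calH_t$. This is precisely the configuration handled in the constant-coefficient proofs, and the approximate-truthfulness conditions of Definition~\ref{definition:truthful-reporting} are to be read conditionally on $\calH_t$ (which, as noted below, is also what the best-response analysis produces).

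With this in place I would re-derive each result at the level of $\EE{\cdot\mid\calH_t}$ and then take an outer expectation. For Theorem~\ref{theorem:convergence-of-sgd}, the one-step bound on $\EE{\norm{\theta_{t+1}-\theta^*}^2\mid\calH_t}$ in terms of $\norm{\theta_t-\theta^*}^2$, $\norm{\nabla F_i(\theta_t)}^2$, $\varepsilon$, $M$, $M_V$ and $\zeta$ uses only the conditional moment bounds above, hence is unchanged; iterating the tower property reproduces the same recursion and the same rate. For Proposition~\ref{theorem:bound-on-payments}, the conditional expansion $\EE{\norm{m_t^i}^2\mid\calH_t} = (a_t^i)^2(\norm{\nabla F_i(\theta_t)}^2+\EE{\norm{e_i(\theta_t)}^2\mid\calH_t}) + (b_t^i)^2$ (the cross term vanishes since $\xi_t^i$ is mean-zero and, given $\calH_t$, independent of $g_i(\theta_t)$), combined with $\abs{a_t^i}\ge 1$, the conditional approximate-truthfulness conditions, and Assumptions~\ref{assumption:gradient-difference} and~\ref{assumption:variance-difference}, yields the same per-step estimate; summing with weights $C_t$ and taking expectations gives the stated bound. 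For Theorem~\ref{theorem:eps-BIC}, the same coupling argument bounds the divergence of the final model caused by client $i$'s deviation by a sum over $t$ of the step-$t$ injected differences $(a_t^i-1)g_i(\theta_t)+b_t^i\xi_t^i$ propagated through the $m$-strongly-convex, $H$-smooth dynamics with the same contraction factors $c_l$; comparing this gain (via $L$-Lipschitzness of $R_i$) against the conditional expected change in payment, with the same $C_t$, shows no deviation is profitable and forces any best response to satisfy $\EE{\norm{a_t^i g_i(\theta_t)-g_i(\theta_t)}^2\mid\calH_t}\le\varepsilon^2$ and $b_t^i\le\varepsilon$ $\calH_t$-a.s., so that taking expectations recovers the unconditional statements.

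The genuinely new point, and the step I expect to require the most care, is the best-response characterization in Theorem~\ref{theorem:eps-BIC}: the deviating client now optimizes over $\calH_t$-measurable policies rather than over constants, so I must argue that the optimal random policy is near-truthful on almost every history. This follows by a round-by-round optimization argument: for each $t$ the optimal $\calH_t$-measurable pair $(a_t^i,b_t^i)$ is obtained by maximizing, pointwise over realizations of $\calH_t$, the conditional continuation utility, which for a fixed history coincides with the constant-coefficient objective already shown to be (approximately) maximized only by near-truthful reports, so pointwise optimality of the integrand forces near-truthfulness $\calH_t$-a.s. The remaining work is bookkeeping: verifying that no step of the original proofs extracts $a_t^i$ or $b_t^i$ from an expectation ranging over the randomness of rounds $\le t$, which never occurs since these coefficients are $\calH_t$-measurable and every such expectation is taken conditionally on $\calH_t$ (or a later history). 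I would collect these modifications in Appendix~\ref{appendix:extending-action-space}.
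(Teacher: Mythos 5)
Your proposal is correct and follows essentially the same route as the paper: condition on the protocol history so that $a_t^i, b_t^i$ become fixed constants while the round-$t$ gradient noise and $\xi_t^i$ remain fresh and independent of that history, observe that every step-wise inequality in the original proofs holds conditionally, and take an outer expectation. Your explicit filtration bookkeeping and the pointwise-over-histories best-response argument are a more careful spelling-out of the same idea the paper states in sketch form.
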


\begin{claim}[Directional strategies]\label{claim:allowing-angles}
    Suppose the message that client $i$ sends is $m^i_t = h^i_t + b^i_t \xi^i_t$, where
    \begin{itemize}[nolistsep]
        \item $h^i_t$ is a ``gradient'' vector, such that $\norm{h^i_t} = a^i_t g_i(\theta_t)$ and $\inner{h^i_t}{g_i(\theta_t)} \geq \norm{g_i(\theta_t)}^2$,\footnote{Cauchy-Schwarz gives $\abs{\inner{h^i_t}{g_i(\theta_t)}} \leq a^i_t \norm{g_i(\theta_t)}^2$.}
        \item $\xi^i_t$ is the additive noise term from before.
    \end{itemize}
    Then the results from Theorem~\ref{theorem:eps-BIC}, Proposition~\ref{theorem:bound-on-payments} and Theorem~\ref{theorem:convergence-of-sgd} still hold.
\end{claim}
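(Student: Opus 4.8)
The plan is to treat the claim as a reduction to the already-established Theorem~\ref{theorem:eps-BIC}, Proposition~\ref{theorem:bound-on-payments} and Theorem~\ref{theorem:convergence-of-sgd}, by isolating the only features of a message $m^i_t = a^i_t g_i(\theta_t) + b^i_t \xi^i_t$ that those proofs actually use and checking that the directional message $m^i_t = h^i_t + b^i_t\xi^i_t$ reproduces all of them. Concretely, I expect the three proofs to invoke $m^i_t$ only through (i) its conditional second moment $\EE{\norm{m^i_t}^2 \mid \calF_t}$ (this is what enters Payment Rule~\eqref{eqn:payment_rule}), (ii) the conditional mean deviation $\EE{m^i_t - g_i(\theta_t)\mid \calF_t}$ together with its correlations $\inner{\cdot}{g_i(\theta_t)}$ and $\inner{\cdot}{\nabla F(\theta_t)}$, and (iii) the conditional deviation magnitude $\EE{\norm{m^i_t - g_i(\theta_t)}^2 \mid \calF_t}$, where $\calF_t$ denotes the protocol history together with the realized gradients at round $t$ (so that $a^i_t, b^i_t, h^i_t$ may depend on $g_i(\theta_t)$, exactly as in the original). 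Because $\xi^i_t$ is mean-zero, unit-variance and independent of everything else, for the directional message these are $\EE{\norm{m^i_t}^2\mid\calF_t} = \norm{h^i_t}^2 + (b^i_t)^2$, $\EE{m^i_t - g_i(\theta_t)\mid\calF_t} = h^i_t - g_i(\theta_t)$, and $\EE{\norm{m^i_t - g_i(\theta_t)}^2\mid\calF_t} = \norm{h^i_t - g_i(\theta_t)}^2 + (b^i_t)^2$ --- term-for-term what the original computation produces with $h^i_t$ in place of $a^i_t g_i(\theta_t)$, and the noise contribution $b^i_t\xi^i_t$ is literally unchanged.

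The second step is to show that the two structural hypotheses on $h^i_t$ reproduce, as inequalities, every bound the original proofs derived from the identity $h^i_t = a^i_t g_i(\theta_t)$. From $\norm{h^i_t} = a^i_t\norm{g_i(\theta_t)}$ with $a^i_t \ge 1$ we get $\EE{\norm{m^i_t}^2\mid\calF_t} = (a^i_t)^2\norm{g_i(\theta_t)}^2 + (b^i_t)^2 \ge \norm{g_i(\theta_t)}^2$, which is the inequality used to argue that the payment incurred by a deviator is at least that of a truthful client, hence that deviating cannot improve the payment term of the utility. From $\inner{h^i_t}{g_i(\theta_t)} \ge \norm{g_i(\theta_t)}^2$, expanding the square gives the single chain
\[
    \norm{h^i_t - g_i(\theta_t)}^2 = \norm{h^i_t}^2 - 2\inner{h^i_t}{g_i(\theta_t)} + \norm{g_i(\theta_t)}^2 \le \norm{h^i_t}^2 - \norm{g_i(\theta_t)}^2 = \big((a^i_t)^2-1\big)\norm{g_i(\theta_t)}^2 ,
\]
which simultaneously (a) matches the bound $\norm{a^i_t g_i(\theta_t) - g_i(\theta_t)}^2 = (a^i_t-1)^2\norm{g_i(\theta_t)}^2 \le ((a^i_t)^2-1)\norm{g_i(\theta_t)}^2$ used in the original analysis of the trajectory perturbation and of the approximate-truthfulness condition of Definition~\ref{definition:truthful-reporting} (now read with $h^i_t$ replacing $a^i_t g_i(\theta_t)$), and (b) shows that the excess squared norm penalized by the payment, $\norm{h^i_t}^2-\norm{g_i(\theta_t)}^2$, dominates the deviation magnitude $\norm{h^i_t - g_i(\theta_t)}^2$ --- the mechanism by which the original proof forces the best response to be approximately truthful. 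Finally, $\inner{h^i_t}{g_i(\theta_t)} \ge \norm{g_i(\theta_t)}^2$ directly replaces the lower bound $\inner{a^i_t g_i(\theta_t)}{g_i(\theta_t)} = a^i_t\norm{g_i(\theta_t)}^2 \ge \norm{g_i(\theta_t)}^2$ used to keep $\bar m_t$ a valid descent direction in the convergence proof (combined there, as before, with Assumption~\ref{assumption:gradient-difference} and the $\varepsilon$-approximate truthfulness assumed of all clients). Plugging these (in)equalities into the corresponding lines of the three proofs leaves every subsequent step untouched, so the conclusions --- including the explicit constants $C_t$, $\mathcal{C}_t$, $G$ and the $\varepsilon$-scaling of the $\varepsilon$-BIC bound --- carry over verbatim.

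The main obstacle I anticipate is bookkeeping rather than conceptual: I need to walk through the two Lyapunov recursions behind the cited results --- the one in the convergence proof that produces $c_l = 2(1 - 2\gamma_l m + \gamma_l^2 H^2)$ and hence $\mathcal{C}_t$, and the analogous recursion bounding $\EE{\norm{\theta_t^{\mathrm{dev}} - \theta_t^{\mathrm{truth}}}^2}$ that underlies the $\varepsilon$-BIC and payment bounds --- and confirm that at no point is the \emph{direction} of $h^i_t$ used beyond its norm, its correlation with $g_i(\theta_t)$, and the difference norm $\norm{h^i_t - g_i(\theta_t)}$. If some intermediate step does single out the component of $h^i_t$ transverse to $g_i(\theta_t)$, the remedy is the standard orthogonal decomposition $h^i_t = \alpha^i_t g_i(\theta_t) + v^i_t$ with $v^i_t \perp g_i(\theta_t)$: the hypotheses give $\alpha^i_t \ge 1$ and $\norm{v^i_t}^2 = \norm{h^i_t}^2 - (\alpha^i_t)^2\norm{g_i(\theta_t)}^2 \le \big((a^i_t)^2 - (\alpha^i_t)^2\big)\norm{g_i(\theta_t)}^2$, so any transverse component is fully charged for through the $\norm{m^i_t}^2$ term of the payment while contributing nothing that could reduce it --- exactly the situation the penalty was designed to handle --- and the argument closes as before. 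With these checks in place, all three statements follow from Theorem~\ref{theorem:eps-BIC}, Proposition~\ref{theorem:bound-on-payments} and Theorem~\ref{theorem:convergence-of-sgd}.
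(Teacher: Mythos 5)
Your proposal is correct and follows essentially the same route as the paper: substitute $h^i_t$ for $a^i_t g_i(\theta_t)$ throughout the three proofs and use $\norm{g_i(\theta_t)}^2 \leq \inner{h^i_t}{g_i(\theta_t)} \leq a^i_t \norm{g_i(\theta_t)}^2$ to verify that every bound survives. Your explicit chain $\norm{h^i_t - g_i(\theta_t)}^2 \leq \norm{h^i_t}^2 - \norm{g_i(\theta_t)}^2$ is precisely the content of the paper's remark that the payment bound in Claim~\ref{claim:bound-on-payment-per-turn-hold-prev-fixed} must be kept in the form $C_t \EE{\norm{g_i(\theta_t)}^2 - \norm{h^i_t}^2}$ rather than further bounded, so that it still dominates the reward gain in the quadratic argument.
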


\section{Experimental results}\label{section:experimental-results}
Finally, we present our experiments on three tasks from natural language processing and computer vision in an FL setting. Our goal is to evaluate the effectiveness of our payment scheme for FedSGD on non-convex, non-smooth objectives encountered in practice. Furthermore, we test the payment rule on a robust variant of FedSGD with coordinate-wise median aggregation and on the classic FedAvg protocol. Our results indicate that the incentives for gradient manipulation due to heterogeneity arise also in non-convex FL settings and that our payment mechanism is still successful in disincentivizing such manipulations from a client, provided that all others are truthful.

\begin{figure*}[htbp]
    \centering
    \subfloat[FeMNIST dataset]
    {
        \includegraphics[width=0.31\textwidth, trim=0mm 0mm 15mm 14mm, clip]{./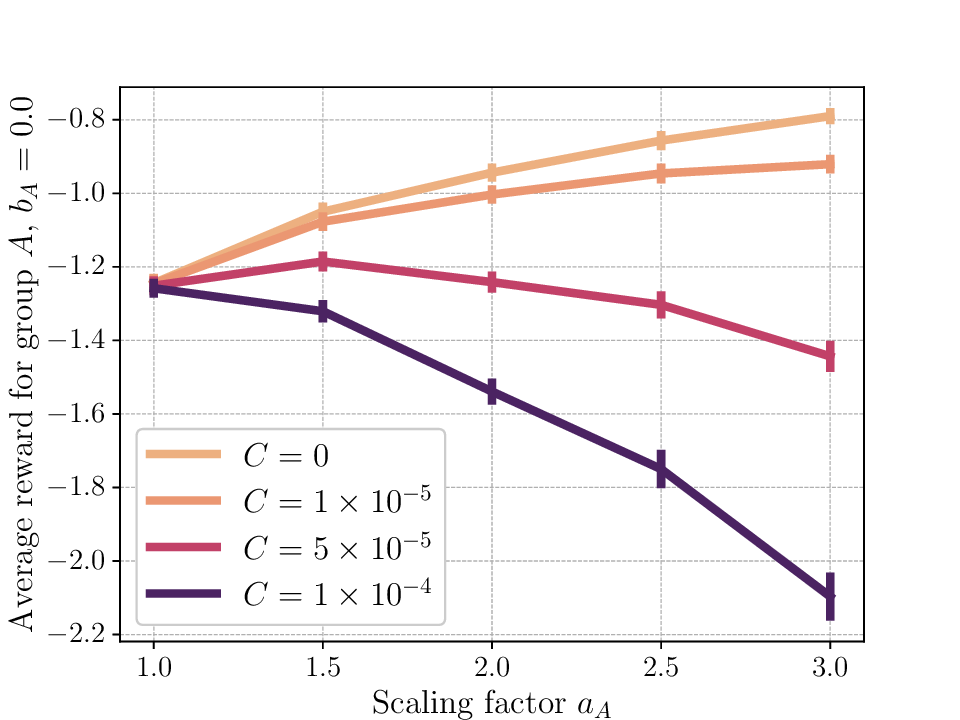}
        \label{figure:fedsgd-femnist}
    }
    \hfill
    \subfloat[Shakespeare dataset]
    {
        \includegraphics[width=0.31\textwidth, trim=0mm 0mm 15mm 14mm, clip]{./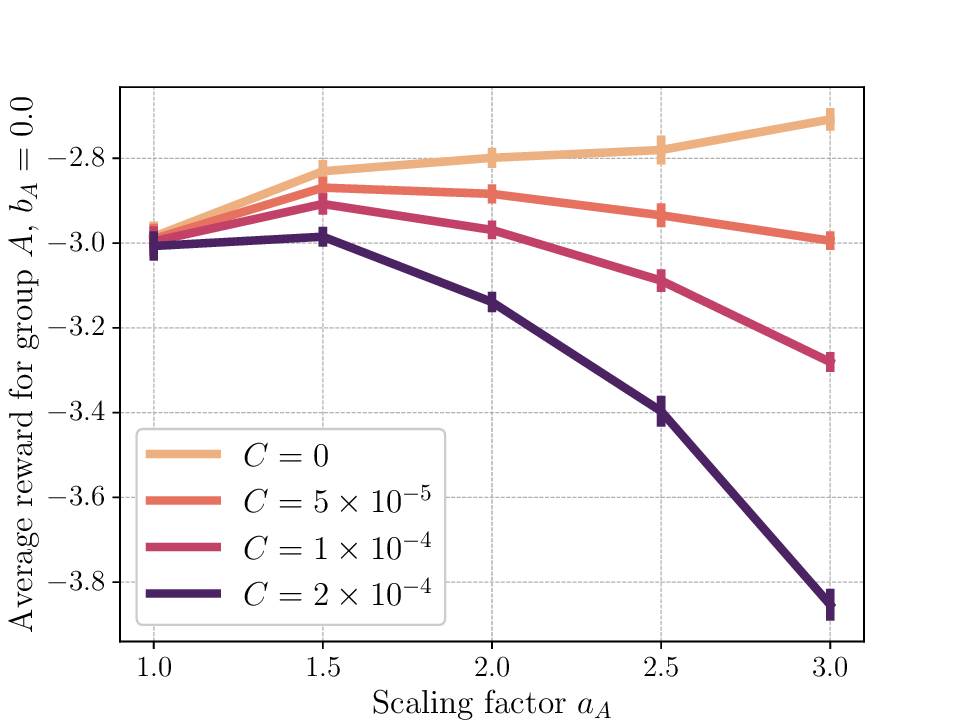}
        \label{figure:fedsgd-shakespear}
    }
    \hfill
    \subfloat[Twitter/Sent140 dataset]
    {
        \includegraphics[width=0.31\textwidth, trim=0mm 0mm 15mm 14mm, clip]{./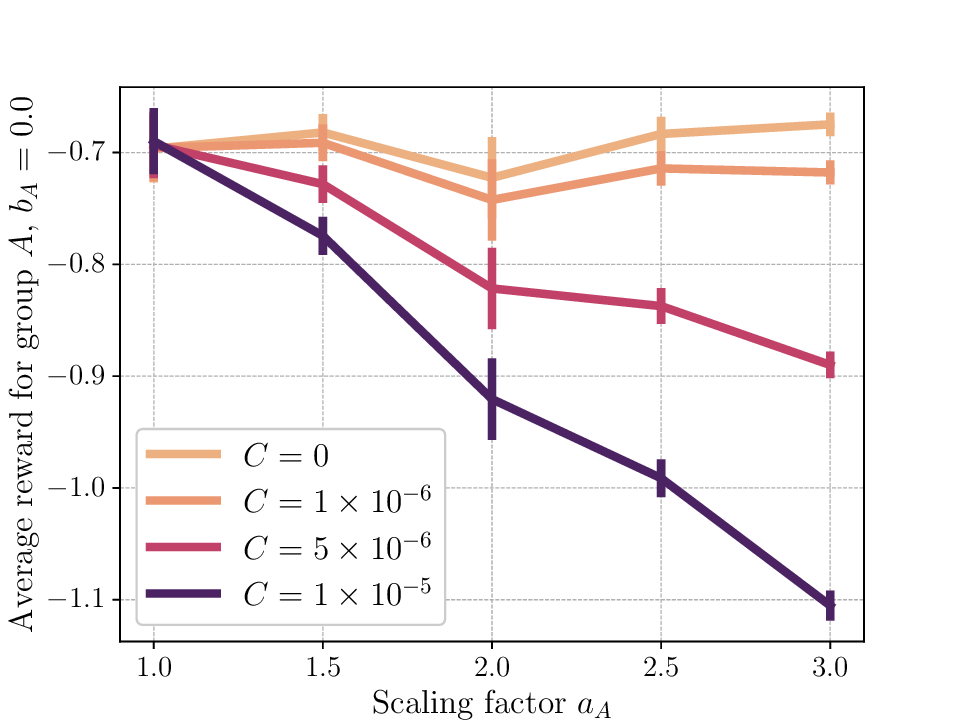}
        \label{figure:fedsgd-twitter}
    }
    \caption{The three plots illustrate the result of applying our payment scheme to the FedSGD protocol. Each line is the average of 10 runs of FedSGD, and the error bars show standard error. The constant $C$ controls the magnitude of the payment; smaller $C$ corresponds to smaller payment. The success of our mechanism is particularly prominent for the FeMNIST and Shakespeare datasets. For the Twitter dataset, the scaling factor is marginally beneficial to begin with. In all three experiments the misreporting client is only amplifying their gradient, without adding noise (so $b_A = 0.0$). For experiments with various levels of noise see Appendix~\ref{appendix:experiments}.}
    \label{figure:fedsgd}
\end{figure*}

\subsection{Tasks, data and training}

\paragraph{Image classification.\ } We use a two-layer Convolutional Neural Network. The first layer has 3 input channels and 32 output channels, while the second layer has 32 input channels and 64 output channels. Both layers apply a $5 \times 5$ kernel with stride $1$ and padding 2. After each layer we add ReLU activation and $2 \times 2$ max pooling with stride 2. We train on the FeMNIST dataset~\citep{caldas2018leaf} for $T = 10650$ steps with constant learning rate $\gamma = 0.06$.

\paragraph{Sentiment analysis.} We use a two-layer linear classifier with 384 hidden neurons on top frozen BERT embeddings. We train on the Twitter dataset~\citep{caldas2018leaf} for $T = 3550$ steps with constant learning rate $\gamma = 0.06$.

\paragraph{Next-character prediction.} We use a two-layer LSTM with embedding dimension of 8, 80 classes and 256 hidden units per layer. We train on the Shakespeare dataset~\citep{caldas2018leaf} for $T = 3550$ steps with learning rate $\gamma = 0.06$.

\paragraph{Data generation.} We use the federated learning datasets from ~\citet{caldas2018leaf}. We generate a collection of clients, each with their own training ($80\%$ of data) and test ($20\%$ of data) dataset.\footnote{We generate the maximal number of clients possible: $3597$ for FeMNIST, $2153$ for Twitter, and $800$ for Shakespeare.} Each client's dataset is generated non-iid from the complete data, ensuring data heterogeneity.

\paragraph{Client actions during training.} The following holds for all experimental tasks. In our evaluations, we work with $N = 3$ meta-clients obtained by grouping the datasets provided by LEAF~\citep{caldas2018leaf} into three equal parts, ensuring that each client gets diverse heterogeneous data. We choose a small constant number of clients because it allows for a simple visualization of the performance of our scheme. At each time step, one of these clients modifies their gradient according to a fixed strategy (determined by the individual experiment), while the other two truthfully send their gradients. Let $A$ be the set of misreporting clients, in this case $\abs{A} = 1$, and $B$ be the set of truthful clients, in this case $\abs{B} = 2$. Let $a_A$ and $b_A$ be the strategy parameters chosen by the client in $A$. For each client we calculate their utility as $U_i = - F_i(\theta_{\text{final}}) - C \cdot \sum_{t = 1}^T p^i_t$, with the test loss $F_i(\theta_{\text{final}})$ evaluated on each client's test data.

\begin{figure*}[htbp]
    \centering
    \subfloat[Performance of FedSGD with median-based aggregation on the FeMNIST dataset under our payment schemes. The magnitude of the payment is controlled by the constant $C$. Each line is the average of 10 runs, and the error bars show standard error.]
    {
        \includegraphics[width=0.45\linewidth, trim=0mm 0mm 15mm 14mm, clip]{./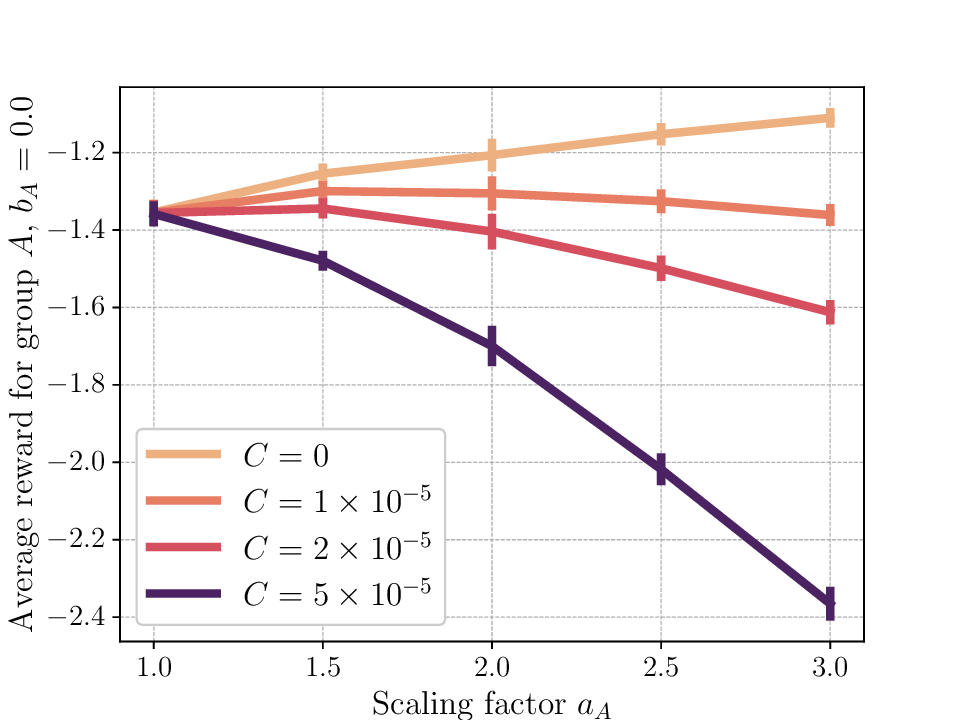}
        \label{figure:fedsgd-median}
    }
    \hfill
    \subfloat[Performance of FedAvg on the FeMNIST dataset under our payment schemes. The magnitude of the payment is controlled by the constant $C$. Each line is the average of 10 runs, and the error bars show standard error.]
    {
        \includegraphics[width=0.45\linewidth, trim=0mm 0mm 15mm 13mm, clip]{./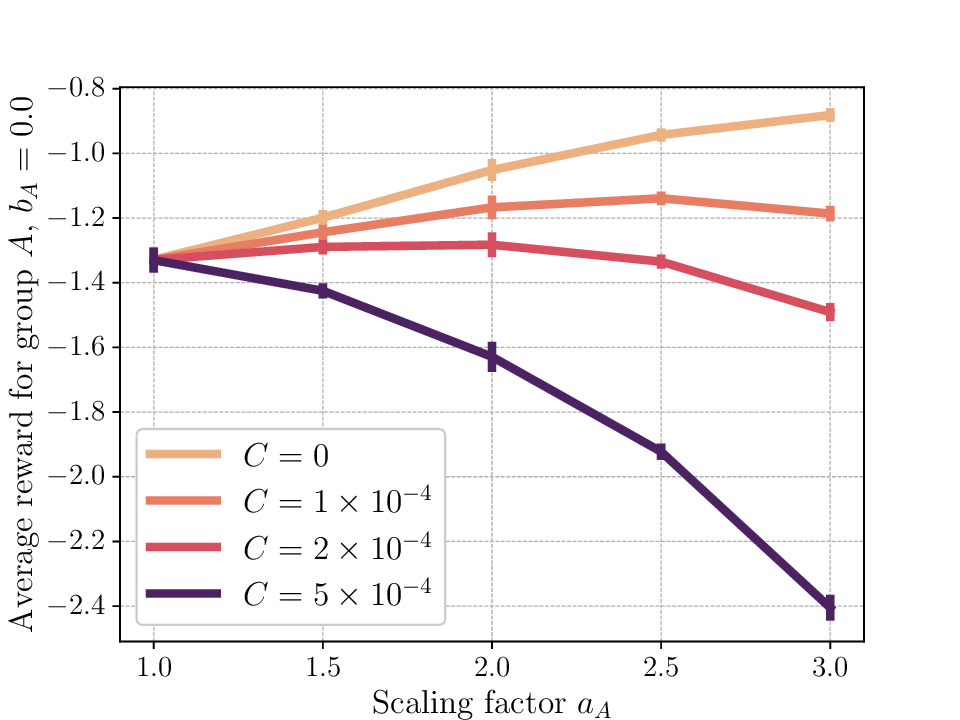}
        \label{figure:fedavg}
    }
    \caption{Experiments with other federated learning protocols}
    \label{figure:other-protocols}
\end{figure*}

\subsection{Discussion}
We visualize our results for FedSGD in Figure~\ref{figure:fedsgd}. We plot the utility of the clients in $A$ versus the scaling factor $a_A$ they use to upscale their updates. Here $b_A = 0$. The different curves correspond to the utility for different values of the penalty parameter $C$.

We see that our mechanism successfully disincentivizes gradient modification for the FeMNIST and Shakespeare datasets. This is because the utility of group $A$ decreases with $a_A$ for the curves with larger values of $C$, so the players in $A$ do not benefit from up-scaling their updates. For the Twitter dataset, the scaling factor is marginally beneficial to begin with, but the payment rule is still effective.

These results continue to hold for median-based FedSGD (see Figure~\ref{figure:fedsgd-median}) and FedAvg (see Figure~\ref{figure:fedavg}). We note that the effects of gradient modification are smaller for median-based FedSGD compared to standard FedSGD and FedAvg, which is expected because coordinate-wise median aggregation has been used as a benchmark for robust aggregators~\citep{pillutla2022robust}.

We refer to Appendix~\ref{appendix:experiments} for results from additional experiments with median-based FedSGD and FedAvg, as well as experiments with varying noise levels (instead of scaling factors).

\section{Conclusion}\label{section:conclusion}
In this work we showed that update manipulation incentives in federated learning can arise due to data heterogeneity, even without adversarial intent. To mitigate this, we introduced a game-theoretic framework and a budget-balanced payment mechanism that ensures approximate incentive compatibility and approximate truthful reporting. Empirical work confirmed the effectiveness of our method across multiple tasks and its adaptability to multiple FL protocols.

\section*{Impact Statement}
This paper presents work whose goal is to advance the field of Machine Learning. Our theoretical analysis contributes to better understanding of the impact of machine learning on society. Therefore, we expect that our results can serve a positive purpose in increasing the awareness about ML impact and encouraging further research on related topics. There are many potential societal consequences of our work, none which we feel must be specifically highlighted here.

\section*{Acknowledgements}
This research was partially funded by the Ministry of Education and Science of Bulgaria (support for INSAIT, part of the Bulgarian National Roadmap for Research Infrastructure). The authors would like to thank Florian Dorner and Kumar Kshitij Patel for helpful discussions at various stages of the project.

% \newpage
\bibliography{bibliography}
\bibliographystyle{plainnat}

\newpage
\clearpage
\onecolumn
\appendix

\section*{Summary of supplementary materials}
\begin{itemize}[noitemsep, topsep=0pt, parsep=0pt, partopsep=0pt]
    \item Appendix~\ref{appendix:definitions-refresher} is an overview of important definitions.
    \item Appendix~\ref{appendix:heterogeneity-assumptions} gives background on heterogeneity assumptions.
    \item Appendix~\ref{appendix:BIC} makes a brief remark on incentive compatibility.
    \item Appendix~\ref{appendix:mean-estimation} contains all our results on mean estimation from Section~\ref{section:motivating-examples}.
    \item Appendix~\ref{appendix:approximate-truthfulness} contains the proof of Theorem~\ref{theorem:eps-BIC}.
    \item Appendix~\ref{appendix:payment} contains the proof of Theorem~\ref{theorem:bound-on-payments}.
    \item Appendix~\ref{appendix:convergence} contains the proof of Theorem~\ref{theorem:convergence-of-sgd}.
    \item Appendix~\ref{appendix:extending-action-space} proves the action space extensions from Subsection~\ref{subsection:action-spaces-extension}.
    \item Appendix~\ref{appendix:gradient-norm} is a miscellaneous fact on gradient norm.
    \item Appendix~\ref{appendix:experiments} presents additional plots from our experiments.
\end{itemize}

\section{Theoretical refresher}\label{appendix:definitions-refresher}

\begin{definition}[Lipschitzness]
    A function $f : \calX \subseteq \RR^n \to \RR^m$ is \emph{$L$-Lipschitz} if
    \[
        \norm{f(x) - f(y)}_m \leq L \norm{x - y}_n
    \]
    for all $x, y \in \calX$.
\end{definition}

\begin{definition}[Smoothness]
    A differentiable function $f : \calX \subseteq \RR^d \to \RR$ is \emph{$H$-smooth} if
    \[
        \norm{\nabla f(x) - \nabla f(y)} \leq H \norm{x - y},
    \]
    for all $x, y \in \calX$. In other words, $f$ is $H$-smooth if its gradient $\nabla f$ is $H$-Lipschitz. Moreover, this condition is equivalent to
    \[
        \abs{f(x) - f(y) - \inner{\nabla f(y)}{x - y}} \leq \frac{H}{2} \norm{x - y}^2.
    \]
\end{definition}

\begin{definition}[Strong convexity]\label{definition:strong-convexity}
    Let $f : \calX \subseteq \RR^d \to \RR$ be a differentiable function. Then $f$ is \emph{$m$-strongly-convex} if
    \[
        f(x) \geq f(y) + \inner{\nabla f(y)}{x - y} + \frac{m}{2} \norm{x - y}^2.
    \]
\end{definition}

\begin{proposition}
    Let $f : \calX \subseteq \RR^d \to \RR$ be a differentiable function that is both $H$-smooth and $m$-strongly-convex, then
    \[
        \frac{m}{2} \norm{x - y}^2 \leq f(x) - f(y) - \inner{\nabla f(y)}{x - y} \leq \frac{H}{2} \norm{x - y}^2,
    \]
    In other words, the error between $f$ and its linear approximation is bounded by quadratics from both above and below.
\end{proposition}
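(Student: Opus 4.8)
The plan is to handle the two inequalities separately, since they come from the two different hypotheses. The left-hand bound $\frac{m}{2}\norm{x-y}^2 \leq f(x) - f(y) - \inner{\nabla f(y)}{x-y}$ is simply a rearrangement of the definition of $m$-strong convexity (Definition~\ref{definition:strong-convexity}), so no argument is needed for it. All of the work is in the right-hand (smoothness) bound.

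For the upper bound I would use the fundamental theorem of calculus along the segment joining $y$ to $x$ (which lies in $\calX$, assuming as usual that the domain is convex). Setting $\phi(t) = f(y + t(x-y))$ for $t \in [0,1]$, one has $\phi'(t) = \inner{\nabla f(y + t(x-y))}{x - y}$, and therefore
\[
    f(x) - f(y) - \inner{\nabla f(y)}{x - y} = \int_0^1 \inner{\nabla f(y + t(x-y)) - \nabla f(y)}{x - y} \dif t.
\]
I would then bound the integrand using Cauchy--Schwarz followed by $H$-smoothness,
\[
    \inner{\nabla f(y + t(x-y)) - \nabla f(y)}{x - y} \leq \norm{\nabla f(y + t(x-y)) - \nabla f(y)}\,\norm{x - y} \leq H\, t\, \norm{x - y}^2,
\]
and finish by integrating, since $\int_0^1 H t \norm{x-y}^2 \dif t = \frac{H}{2}\norm{x-y}^2$. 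Alternatively, one can just cite the equivalent form of $H$-smoothness recorded in the Smoothness definition of this appendix, namely $\abs{f(x) - f(y) - \inner{\nabla f(y)}{x-y}} \leq \frac{H}{2}\norm{x-y}^2$, which already yields the upper bound directly.

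There is no genuinely hard step: this is a standard textbook fact, and the calculation above is routine. The only things worth a word of care are that $\calX$ should be convex so that the segment $y + t(x-y)$ stays in the domain and the FTC computation is valid, and that the one-sided form of each bound (rather than an absolute value) is the correct thing to extract, which is immediate because $\frac{m}{2}\norm{x-y}^2$ and $\frac{H}{2}\norm{x-y}^2$ are nonnegative.
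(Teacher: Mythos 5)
Your proof is correct: the lower bound is indeed just the definition of $m$-strong convexity, and your FTC-plus-Cauchy--Schwarz derivation of the upper bound (the standard descent lemma) is the right argument, with the appropriate caveat that $\calX$ be convex. The paper itself states this proposition without proof in its refresher appendix, relying on exactly the equivalent form of $H$-smoothness you mention as the alternative route, so your write-up simply supplies the standard textbook details the paper omits.
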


\section{Heterogeneity assumptions from previous works}\label{appendix:heterogeneity-assumptions}

For this section assume we work on some space $\Theta \subseteq \RR^d$. In the literature, many heterogeneity assumptions have been formulated mainly for the study of convergence of different flavors of distributed SGD~\citep{koloskova_unified_2020, woodworth_minibatch_2020, patel2024limits}. Our focus here is to compare Assumption~\ref{assumption:gradient-difference} to the most relevant assumptions from the literature. To be the best our knowledge constraints similar to Assumption~\ref{assumption:variance-difference} that bound the heterogeneity of gradient variance have not been extensively studied.

\begin{assumption}[Bounded First-Order Heterogeneity,~\citep{koloskova_unified_2020, woodworth_minibatch_2020}]\label{assumption:bounded-first-order-heterogeneity}
    For any client $i$ and any $\theta \in \Theta$ we have:
    \[
        \sup_{\theta \in \Theta, i \in [N]}\norm{\nabla F_i(\theta) - \nabla F(\theta)}_2^2 \leq \zeta^2.
    \]
\end{assumption}

\begin{assumption}[Relaxed First-Order Heterogeneity,~\citep{karimireddy_scaffold_2020}]\label{assumption:relaxed-first-order-heterogeneity}
    There exist constants $G \geq 0$ and $D \geq 1$, such that for every $\theta \in \Theta$ the following holds:
    \[
        \frac{1}{N} \sum_{i = 1}^N \norm{\nabla F_i(\theta)}^2 \leq G^2 + D^2 \norm{\nabla F(\theta)}^2.
    \]
\end{assumption}

One can interpret Assumption~\ref{assumption:gradient-difference} as a relaxed version of Assumption~\ref{assumption:bounded-first-order-heterogeneity}, where we ignore the angular (or directional) heterogeneity between two gradients, and only focus on their difference in magnitude. On the other hand, Assumption~\ref{assumption:relaxed-first-order-heterogeneity} relaxes Assumption~\ref{assumption:gradient-difference} even further, by putting a constraint only on the average magnitude of the communicated gradients, as opposed to having a bound for each individual client. This positions our assumption in-between these two widely used heterogeneity constraints.

\begin{remark}
    If Assumption~\ref{assumption:relaxed-first-order-heterogeneity} holds with $D = 0$ and $\ds \theta^* \in \argmin_{\theta \in \Theta} F(\theta)$, then we recover another assumption used in the literature---\emph{Bounded First-Order Heterogeneity at Optima}~\citep{koloskova_unified_2020,woodworth_minibatch_2020,glasgow_sharp_2022}.
\end{remark}

\section{Note on Incentive Compatibility}\label{appendix:BIC}

The usual definition of Bayesian Incentive Compatibility from Mechanism Design is the following. Suppose that each player $i$ has a true type $t_i \in T_i$, and a utility function $u_i(t, o)$ that takes as input a type $t$ and an outcome $o$, and outputs a value. Let $o(t_i, t_{-i})$ be the outcome of the mechanism $M$ on input $(t_i, t_{-i})$. Let $p_i(t_i, t_{-i})$ be the payment of player $i$ according to $M$.

\begin{definition}[Bayesian Incentive Compatibility]\label{definition:BIC}
    A mechanism $M$ is \emph{Bayesian Incentive Compatible} (BIC) if:
    \[
        \EE{u_i(t_i, o(t_i, t_{-i})) - p_i(t_i, t_{-i})} \geq \EE{u_i(t_i, o(t'_i, t_{-i})) - p_i(t'_i, t_{-i})},
    \]
    where $t'_i \in T_i$ is any type for player $i$, and expectation is taken with respect to the randomness in the types of everyone but player $i$.

    Moreover, we can relax the condition to have an approximately BIC mechanism up to an additive constant, denoted $\varepsilon$-BIC:
    \[
        \EE{u_i(t_i, o(t_i, t_{-i})) - p_i(t_i, t_{-i})} \geq \EE{u_i(t_i, o(t_i, t_{-i})) - p_i(t'_i, t_{-i})} - \varepsilon.
    \]
\end{definition}
\section{Results on Mean Estimation}\label{appendix:mean-estimation}
\newcommand{\eqsign}{\operatorname*{eq}}

\subsection{Proof of Proposition~\ref{proposition:mean_example}}

One can easily check that if the first player is truthful, then the MSE is as stated.
Now, consider the case where the first client can lie by selecting a constant $c>1$.
In particular, if we select a constant $c$ such that
\[
1 < c < \frac{2\mu_1(\mu_1 - \mu)N + \mu_1^2 - \sigma^2}{\mu_1^2 + \sigma^2}
,\]
then the we have $\mathbb{E}\left[ \overline{\mu} \right] = \frac{c-1}{N}\mu_1 + \mu$ and $Var(\overline{\mu}) = \frac{\sigma^2}{N} + \frac{c^2-1}{N^2}\sigma^2$. Then the MSE for the first player is
\begin{align*}
\mathbb{E}\left[ \left( \overline{\mu} - \mu_1 \right) ^2 \right] &= \left( \mu + \frac{c-N-1}{N}\mu_1 \right) ^2 + \frac{\sigma^2}{N} + \frac{c^2-1}{N^2}\sigma^2 \\
    &= \left( \mu-\mu_1 \right) ^2 + 2 \frac{c-1}{N}\mu_1(\mu - \mu_1) + \frac{(c-1)^2}{N^2}\mu_1^2 + \frac{c^2-1}{N^2}\sigma^2 + \frac{\sigma^2}{N} \\
    &= \left( \mu-\mu_1 \right) ^2 + \frac{\sigma^2}{N} + \frac{c-1}{N^2} \left( 2\mu_1\left( \mu-\mu_1 \right)N + \left( c-1 \right)\mu_1^2 + \left( c+1 \right)\sigma^2 \right)  \\
    &=  \left( \mu-\mu_1 \right) ^2 + \frac{\sigma^2}{N} + \frac{c-1}{N^2} \left( c\left( \mu_1^2 + \sigma^2 \right)  -2\mu_1\left( \mu_1-\mu \right)N -\mu_1^2 + \sigma^2 \right)
.\end{align*}
Note that the last term is a quadratic in $c$ and hence is minimized at
\[
c = \frac{\mu_1^2 + \mu_1(\mu_1 - \mu)N}{\mu_1^2 + \sigma^2} > 1
.\]
Note that $c>1$ is guaranteed by the assumed inequality $\mu_1(\mu_1 - \mu)>\sigma^2 / N$. In this case, the MSE of the client is
\[
\mathbb{E}\left[ (\bar{\mu} - \mu_1)^2 \right] = (\mu-\mu_1)^2 + \frac{\sigma^2}{N} -  \frac{(\sigma^2/N - \mu_1(\mu_1 - \mu))^2}{\mu_1^2+\sigma^2}
,\]
which completes our proof.

\subsection{Individually optimal estimators without strategic actions}
\label{appendix:bayes-mean}

Similarly to Proposition~\ref{proposition:mean_example}, suppose that $N$ clients seek to estimate their respective means $\mu_1,\ldots,\mu_{N} \in \mathbb{R}$ sampled from a prior $\mu_i \sim \calN\left(\mu, 1 / \tau\right)$, where $\mu \sim \calN\left(0, 1 / \tau_0\right)$. Furthermore, assume that each client $i$ receives samples $\forall j \in [n] \: x^j_{i} \sim \mathcal{N}(\mu_i, \sigma_i^2)$. First, we want to understand how much clients could benefit from each other's samples in this Bayesian setting.

\begin{proposition}
    \label{proposition:bayes-mean}
    Consider $N$ clients who seek to find estimators $\{\hat{\mu}_i\}_{i=1}^N$ for their local parameters $\{\mu_i\}_{i=1}^N$ that are good in terms of mean squared error $(\hat{\mu}_i - \mu_i)^2$. Each client $i$ has $n$ independent noisy observations of $\mu_i$, such that $x_i^j \sim \calN\left(\mu_i, \sigma_i^2\right)$ for every $j \in [n]$ and $i \in [N]$. Moreover, we assume that $\{\mu_i\}_{i=1}^N$ are related to each other through a prior $\mu_i \sim \calN\left(\mu, 1 / \tau\right)$, and the prior mean is distributed as $\mu \sim \calN\left(0, 1 / \tau_0\right)$.
    Given all samples, the means follow
    \[
        \mu_i \sim \calN\left(\frac{\tau_i
        \bar{x}_i + \beta_i \sum_{j \neq i} \rho_j \bar{x}_j}{\tau_i + \beta_i
        (\sum_{j \neq i} \rho_j + \tau_0)}, \left(\tau_i +
        \beta_i \left(\sum_{j \neq i} \rho_j
        + \tau_0\right)\right)^{-1}\right),
    \]
    where $\ds \tau_j = \frac{n}{\sigma_j^2}$, $\ds \rho_j = \frac{\tau \tau_j}{\tau + \tau_j}$, and $\ds \beta_i = \frac{\tau}{\sum_{j \neq i} \rho_j + \tau + \tau_0}$.
\end{proposition}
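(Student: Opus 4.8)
The statement is a standard Bayesian hierarchical Gaussian computation: we have a three-level model (hyperparameter $\mu$, local means $\mu_i$, observations $x_i^j$) and we want the posterior of $\mu_i$ given all observations. The plan is to compute the joint posterior of $(\mu_1, \dots, \mu_N, \mu)$ by writing down the product of all Gaussian densities, then marginalize out $\mu$ and all $\mu_j$ with $j \neq i$. First I would reduce each client's $n$ observations to their sufficient statistic: conditional on $\mu_i$, the sample mean $\bar{x}_i$ is $\calN(\mu_i, \sigma_i^2/n)$, i.e.\ it carries precision $\tau_i = n/\sigma_i^2$, so we may replace the $n$ raw samples by $\bar x_i$ without loss of information. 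Then the model is a simple Gaussian chain: $\mu \sim \calN(0, 1/\tau_0)$, $\mu_i \mid \mu \sim \calN(\mu, 1/\tau)$, $\bar x_i \mid \mu_i \sim \calN(\mu_i, 1/\tau_i)$.

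Next I would integrate out the local means $\mu_j$ one at a time conditional on $\mu$. Since $\bar x_j \mid \mu$ is a convolution of two independent Gaussians, $\bar x_j \mid \mu \sim \calN(\mu, 1/\tau + 1/\tau_j) = \calN(\mu, (\tau+\tau_j)/(\tau\tau_j))$, which has precision exactly $\rho_j = \tau\tau_j/(\tau+\tau_j)$ — this is where the $\rho_j$ in the statement comes from. The posterior of $\mu$ given all the $\bar x_j$, $j \neq i$, and given $\bar x_i$ and $\mu_i$ is then obtained by the usual Gaussian-conjugacy precision-weighting: the prior on $\mu$ contributes precision $\tau_0$ at mean $0$, each $\bar x_j$ ($j\neq i$) contributes precision $\rho_j$ at mean $\bar x_j$, and conditioning on $\mu_i$ contributes precision $\tau$ at mean $\mu_i$. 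So $\mu \mid \mu_i, \{\bar x_j\}_{j\neq i}$ is Gaussian with precision $\sum_{j\neq i}\rho_j + \tau + \tau_0$ and the corresponding weighted mean; note $\beta_i = \tau / (\sum_{j\neq i}\rho_j + \tau + \tau_0)$ is exactly the weight this posterior places on $\mu_i$.

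Finally I would combine: $\bar x_i \mid \mu_i \sim \calN(\mu_i, 1/\tau_i)$ (precision $\tau_i$) together with the marginal $\mu_i \mid \{\bar x_j\}_{j\neq i}$, which is obtained by integrating the above posterior for $\mu$ against $\mu_i \mid \mu \sim \calN(\mu, 1/\tau)$. Carrying out that last marginalization and then applying Gaussian conjugacy one more time to fold in the likelihood $\bar x_i \mid \mu_i$ yields the stated posterior precision $\tau_i + \beta_i(\sum_{j\neq i}\rho_j + \tau_0)$ and the stated posterior mean $\big(\tau_i \bar x_i + \beta_i \sum_{j\neq i}\rho_j \bar x_j\big) \big/ \big(\tau_i + \beta_i(\sum_{j\neq i}\rho_j + \tau_0)\big)$. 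The bookkeeping is linear-algebraic and entirely routine; the only place to be careful is keeping track of which quantities have been marginalized versus conditioned on at each stage — in particular that integrating out $\mu$ after conditioning on $\mu_i$ produces the $\beta_i$-weighted effective prior for $\mu_i$, so that the $\tau$ contributions partially cancel and what remains is $\beta_i(\sum_{j\neq i}\rho_j + \tau_0)$ rather than $\tau + \sum_{j\neq i}\rho_j + \tau_0$. An alternative, possibly cleaner, route is to observe that $(\mu_1,\dots,\mu_N,\mu)$ is jointly Gaussian, write its precision matrix explicitly (arrowhead structure: diagonal entries $\tau_i+\tau$ for the $\mu_i$ blocks and $N\tau+\tau_0$ for the $\mu$ block, off-diagonal $-\tau$ coupling each $\mu_i$ to $\mu$), condition on the data by the standard formula, and read off the $i$-th marginal via Schur complement; I would present whichever is shorter. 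The main obstacle is purely organizational — ensuring the iterated marginalization is done in a consistent order so the $\beta_i$ factor emerges correctly — not any conceptual difficulty.
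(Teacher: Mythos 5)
Your proposal is correct and follows essentially the same route as the paper's proof: first marginalize the $\mu_j$ for $j \neq i$ (producing the effective precisions $\rho_j$ for $\bar x_j$ given $\mu$), then marginalize $\mu$ (producing the $\beta_i$-weighted effective prior on $\mu_i$), and finally fold in $\bar x_i$; the paper merely carries this out by explicitly completing the square in the joint log-density rather than invoking Gaussian conjugacy at each stage. You also correctly flag the one delicate point, namely that the $\tau$ contribution partially cancels so the residual prior precision is $\beta_i(\sum_{j\neq i}\rho_j+\tau_0)$.
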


\begin{proof}
To derive the distribution of $\mu_i$, we should integrate the probability
density function $f$ over $\mu, \{\mu_j\}_{j \neq i}$. This function is
proportional to
\[
    f \propto \exp\left(\sum_{j=1}^N -\frac{\tau_j (\bar{x}_j -
    \mu_j)^2}{2} + \sum_{j=1}^N -\frac{\tau (\mu_j - \mu)^2}{2} - \frac{\tau_0
    \mu^2}{2}\right).
\]

First, we want to integrate over $\{\mu_j\}_{j \neq i}$. By collecting the
terms relevant to $\mu_j$, we get
\begin{align*}
    -\frac{\tau_j (\bar{x}_j - \mu_j)^2}{2} - \frac{\tau (\mu_j -
    \mu)^2}{2}
    &= -\frac{\tau_j (\bar{x}_j^2 - 2 \bar{x}_j \mu_j + \mu_j^2)}{2} -
    \frac{\tau (\mu_j^2 - 2 \mu_j \mu + \mu^2)}{2} \\
    &= -\frac{(\tau_j + \tau) \mu_j^2}{2} + (\tau_j \bar{x}_j + \tau \mu) \mu_j - \frac{\tau_j \bar{x}_j^2}{2} - \frac{\tau \mu^2}{2} \\
    &= -\frac{(\tau_j + \tau) \left(\mu_j - \frac{\tau_j \bar{x}_j + \tau \mu}{\tau_j + \tau}\right)^2}{2} + \frac{(\tau_j \bar{x}_j + \tau \mu)^2}{2(\tau_j + \tau)} - \frac{\tau_j \bar{x}_j^2}{2} - \frac{\tau \mu^2}{2}
\end{align*}
and
\begin{align*}
    \frac{(\tau_j \bar{x}_j + \tau \mu)^2}{2 (\tau_j + \tau)} - \frac{\tau_j \bar{x}_j^2}{2} - \frac{\tau \mu^2}{2}
    &= \frac{(\tau_j \bar{x}_j)^2 + 2 \tau \tau_j \bar{x}_j \mu + \tau^2 \mu^2}{2 (\tau_j + \tau)} - \frac{\tau_j \bar{x}_j^2}{2} - \frac{\tau \mu^2}{2}\\
    &= -\frac{\rho_j \mu^2}{2} + \rho_j \bar{x}_j \mu - \frac{\rho_j \bar{x}_j^2}{2}
\end{align*}
Thus, after integrating over $\mu_j$, we get that the density function $f$ will
be proportional to
\[
    f \propto \exp\left(\sum_{j \neq i} \left(-\frac{\rho_j
    \mu^2}{2} + \rho_j \bar{x}_j \mu - \frac{\rho_j \bar{x}_j^2}{2}\right) -
    \frac{\tau_i (\bar{x}_i - \mu_i)^2}{2} -\frac{\tau (\mu_i - \mu)^2}{2}  -
    \frac{\tau_0 \mu^2}{2}\right).
\]

Now, we want to integrate over $\mu$. By collecting the terms relevant to
$\mu$, we get
\begin{align*}
    -\frac{\tau (\mu_i - \mu)^2}{2} - \frac{\tau_0 \mu^2}{2} + \sum_{j \neq
    i} -\frac{\rho_j \mu^2}{2} + \rho_j \bar{x}_j \mu
    &= -\frac{(\tau + \tau_0 + \sum_{j \neq i} \rho_j) \mu^2}{2} +
    \left(\tau \mu_i + \sum_{j \neq i} \rho_j \bar{x}_j\right) \mu - \frac{\tau
    \mu_i^2}{2} \\
    &= -\frac{(\tau + \tau_0 + \sum_{j \neq i} \rho_j) \left(\mu -
    \frac{\tau \mu_i + \sum_{j \neq i} \rho_j \bar{x}_j}{\tau + \tau_0 +
    \sum_{j \neq i} \rho_j}\right)^2}{2} + \frac{(\tau \mu_i + \sum_{j \neq i} \rho_j
    \bar{x}_j)^2}{2 (\tau + \tau_0 + \sum_{j \neq i} \rho_j)} - \frac{\tau
    \mu_i^2}{2}
\end{align*}
Thus, the final density will be proportional to
\begin{align*}
    f &\propto \exp\left(\frac{\left(\tau \mu_i + \sum_{j \neq i} \rho_j
    \bar{x}_j\right)^2}{2 (\tau + \tau_0 + \sum_{j \neq i} \rho_j)} - \frac{\tau
    \mu_i^2}{2} - \frac{\tau_i (\bar{x}_i - \mu_i)^2}{2}\right)\\
    &= \exp\left(-\frac{\mu_i^2 \cdot \left(\frac{\tau (\tau_0 + \sum_{j \neq i}
    \rho_j)}{\tau + \tau_0 + \sum_{j \neq i} \rho_j} + \tau_i\right) }{2} +
    \left(\tau_i \bar{x}_i + \frac{\tau \sum_{j \neq i} \rho_j
    \bar{x}_j}{\tau + \tau_0 + \sum_{j \neq i} \rho_j}\right) \mu_i + \dots\right)\\
    &= \exp\left(-\frac{(\tau_i + \beta_i (\tau_0 + \sum_{j \neq i}
    \rho_j)) \left(\mu_i - \frac{\tau_i \bar{x}_i + \beta_i \sum_{j \neq
    i} \rho_j \bar{x}_j}{\tau_i + \beta_i (\tau_0 + \sum_{j \neq i}
    \rho_j)}\right)^2}{2} + \dots\right),
\end{align*}
which gives us the desired conditional distribution of $\mu_i$.

\end{proof}

\paragraph{Variance reduction incentives}

First, notice in the absence of heterogeneity, clients are not adverse to each
other. To see it, consider the situation when the means of all clients are the
same, $\tau \to \infty$. In this situation, individually optimal estimators
converge to the same estimator
\[
    \hat{\mu}^* = \frac{\sum_{j=1}^N \tau_j \bar{x}_j}{\tau_0 + \sum_{j=1}^N \tau_j}.
\]
This estimator is the best estimator for $\mu$ in the Bayesian setting. Thus,
in the absence of heterogeneity, the clients want cooperate with each other to
reduce the variance of individual estimators.

\paragraph{Bias personalization incentives}

However, if heterogeneity is present, the clients will also want to skew bias
of the estimator closer to their mean. To see this effect, consider the
situation when variance reduction incentives are symmetric, $\tau_i = \tau_j$ for all $i, j$. In this situation, we might notice that the weight of a
client's own samples becomes bigger than the weight the samples of any other
client in the individually optimal estimator,
\[
    \tau_i = \tau_j > \beta_i \rho_j.
\]
Thus, in the presence of heterogeneity, the clients want to increase the weight
of the own samples in the final model, which will bias the final model closer
to their own mean.

\subsection{Nash equilibrium in the presence of scaling attacks}
\label{appendix:game-eqilibrium}

Now, we consider a game-theoretic setting, where each client $i$ sends a message $m_{i} = \frac{c_i}{n} \sum_{j=1}^n x_i^j$ to the server. Then, the server computes an aggregate $\overline{\mu} = \frac{1}{N} \sum_{i=1}^{N} m_{i}$ and broadcasts it to all clients. Each client $i$ would like to receive an estimate of their local mean with minimal mean squared error $\mathbb{E}[ \left( \overline{\mu} - \mu_i \right)^2 ]$.

\begin{proposition}
    \label{proposition:game-eqilibrium}
    The Nash equilibrium of the game above is
    \[
        c_i^{\eqsign} = \frac{N \rho_i (1/\tau + 1/\tau_0)}{1 + (\sum_{j=1}^N
        \rho_j) / \tau_0},
    \]
    which results in the following final error:
    \[
        -R_i^{\eqsign} = \EE{(\overline{\mu} - \mu^i)^2} = \frac{(1/\tau + 1/\tau_0)
        (\sum_{j=1}^N \rho_j - 2 \rho_i + \tau)}{\tau (1 + \sum_{j=1}^N \rho_j
        / \tau_0)}.
    \]
\end{proposition}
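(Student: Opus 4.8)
The plan is to reduce the game to a coordinate-wise quadratic optimization and solve the resulting linear system of first-order conditions. Since client $i$'s message is $m_i = \frac{c_i}{n}\sum_{j=1}^n x_i^j = c_i \bar{x}_i$ with $\bar{x}_i$ the $i$-th sample mean, under the hierarchical model ($\mu \sim \calN(0, 1/\tau_0)$, $\mu_j \mid \mu \sim \calN(\mu, 1/\tau)$, $\bar{x}_j \mid \mu_j \sim \calN(\mu_j, 1/\tau_j)$ with $\tau_j = n/\sigma_j^2$) the vector $(\bar{x}_1, \dots, \bar{x}_N, \mu_i)$ is jointly Gaussian with all coordinates mean zero. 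Hence $\EE{(\overline{\mu} - \mu_i)^2} = \Var{\overline{\mu} - \mu_i}$, and I would expand this using $\Var{\bar x_j} = 1/\tau_0 + 1/\tau + 1/\tau_j$, $\Cov(\bar x_j, \bar x_k) = 1/\tau_0$ for $j \neq k$, $\Cov(\bar x_i, \mu_i) = 1/\tau_0 + 1/\tau$, and $\Cov(\bar x_j, \mu_i) = 1/\tau_0$ for $j \neq i$. Using the identity $\frac{1}{\tau} + \frac{1}{\tau_j} = \frac{1}{\rho_j}$, this gives the closed form
\[
    \Phi_i(c) := \EE{(\overline{\mu} - \mu_i)^2} = \frac{1}{N^2}\left[\frac{1}{\tau_0}\Big(\sum_j c_j\Big)^2 + \sum_j \frac{c_j^2}{\rho_j}\right] - \frac{2}{N}\left[\frac{c_i}{\tau} + \frac{1}{\tau_0}\sum_j c_j\right] + \frac{1}{\tau} + \frac{1}{\tau_0}.
\]

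Next I would characterize the equilibrium. For fixed $c_{-i}$, $\Phi_i$ is a strictly convex quadratic in $c_i$ (second derivative $\frac{2}{N^2}(1/\tau_0 + 1/\rho_i) > 0$), so client $i$'s best response is the unique stationary point. Setting $\partial_{c_i}\Phi_i = 0$ yields
\[
    \frac{c_i}{\rho_i} = N\Big(\frac{1}{\tau} + \frac{1}{\tau_0}\Big) - \frac{1}{\tau_0}\sum_{j} c_j =: A,
\]
and the key observation is that the right-hand side is independent of $i$; so at any equilibrium $c_i = A\rho_i$. Summing over $i$ and writing $P = \sum_j \rho_j$ gives $AP = N(1/\tau + 1/\tau_0) - AP/\tau_0$, hence $A = \frac{N(1/\tau + 1/\tau_0)}{1 + P/\tau_0}$ and $c_i^{\eqsign} = A\rho_i = \frac{N\rho_i(1/\tau + 1/\tau_0)}{1 + (\sum_j \rho_j)/\tau_0}$, the claimed profile. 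The fixed-point system for $(c_1, \dots, c_N)$ is linear with a unique solution, so existence and uniqueness of the equilibrium are immediate.

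Finally I would substitute $c_j = A\rho_j$ (so $\sum_j c_j = AP$, $\sum_j c_j^2/\rho_j = A^2 P$, $c_i = A\rho_i$) back into $\Phi_i$. Writing $B = 1/\tau + 1/\tau_0$ and $Q = 1 + P/\tau_0$ so that $A = NB/Q$, the expression collapses to $\Phi_i(c^{\eqsign}) = \frac{B}{Q}\big(BP - \tfrac{2\rho_i}{\tau} - \tfrac{P}{\tau_0} + 1\big)$; since $BP - P/\tau_0 = P/\tau$, this equals $\frac{B}{Q}\cdot\frac{P - 2\rho_i + \tau}{\tau}$, which is exactly $-R_i^{\eqsign}$ as stated. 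No conceptual obstacle remains once the objective is in the displayed quadratic form; the only care needed is in assembling the covariance matrix of the Gaussian model correctly and carrying the final algebraic simplification through without sign errors.
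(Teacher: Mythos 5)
Your proposal is correct and follows essentially the same route as the paper: derive the same quadratic objective $\Phi_i(c)$ (the paper gets it by decomposing $\bar{x}_j = (\bar{x}_j-\mu_j)+(\mu_j-\mu)+\mu$ and using independence, you by direct covariance bookkeeping, which is the same computation), take the first-order condition, observe that it forces $c_i \propto \rho_i$, and solve the resulting scalar equation. Your explicit back-substitution to verify the equilibrium error and your remark on uniqueness of the linear best-response system are slightly more complete than the paper's write-up, but nothing differs in substance.
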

\begin{proof}

Given actions profile, the error of client $i$ is
\begin{align*}
    \EE{(\overline{\mu} - \mu_i)^2}
    &=  \EE{\left(\sum_{j=1}^N \frac{c_j}{N} \bar{x}_j -
    \mu_i\right)^2} \\
    &=  \EE{\left(\sum_{j=1}^N \frac{c_j}{N} (\bar{x}_j -
    \mu_j) + \sum_{j=1}^N \frac{c_j}{N} \mu_j - \mu_i\right)^2} \\
    &=  \sum_{j=1}^N \frac{(c_j)^2}{N^2 \tau_j} +
    \EE{\left(\sum_{j=1}^N \frac{c_j}{N} \mu_j - \mu_i\right)^2} \\
    &=  \sum_{j=1}^N \frac{(c_j)^2}{N^2 \tau_j} + \EE{\left(\sum_{j \neq i} \frac{c_j}{N} (\mu_j - \mu) + (\mu_i - \mu) \cdot \left(\frac{c_i}{N} - 1\right) + \mu \cdot \left(\frac{\sum_{j=1}^N c_j}{N} - 1\right) \right)^2} \\
    &= \sum_{j=1}^N \frac{(c_j)^2}{N^2 \tau_j} + \frac{\sum_{j \neq i} (c_j)^2}{N^2 \tau} + \frac{1}{\tau} \cdot \left(\frac{c_i}{N} - 1\right)^2  + \frac{1}{\tau_0} \cdot \left(\frac{\sum_{j=1}^N c_j}{N} - 1\right)^2  \\
    &= \sum_{j=1}^N \frac{(c_j)^2}{N^2 \rho_j} + \frac{\left(\sum_{j=1}^N
    c_j\right)^2}{N^2 \tau_0} - \frac{2 c_i}{N \tau} - \frac{2 \sum_{j=1}^N
    c_j}{N \tau_0} + \frac{1}{\tau} + \frac{1}{\tau_0}.
\end{align*}
Thus, the best response of client $i$ satisfies
\[
    \frac{c_i^*}{\rho_i} + \frac{c_i^*}{\tau_0} + \frac{\sum_{j \neq i}
    c_j}{\tau_0} = \frac{N}{\tau} + \frac{N}{\tau_0}.
\]
So, in equilibrium, for every client $i$ we get
\[
    \frac{c_i^{\eqsign}}{\rho_i} + \frac{\sum_{j=1}^N c_j^{\eqsign}}{\tau_0}
    = \frac{N}{\tau} + \frac{N}{\tau_0}.
\]
Thus, $c_i^{\eqsign} = \rho_i \lambda$ and
\[
    \left(1 + \frac{\sum_{j=1}^N \rho_j}{\tau_0}\right) \cdot \lambda =
    \frac{N}{\tau} + \frac{N}{\tau_0},
\]
which gives us the desired formula.

\end{proof}

\paragraph{Penalty from personalization incentives}
Notice that, in the case of truthful reporting, we get
\[
    -R_i^0 = \EE{(\overline{\mu} - \mu_i)^2} = \frac{1}{N} \left(\sum_{j=1}^N
    \frac{1}{\rho_j} - \frac{2}{\tau}\right) + \frac{1}{\tau}.
\]
In the limit $N \to \infty$, we get
\[
    -R_i^{\eqsign} \to \frac{1}{\tau} + \frac{\tau_0}{\tau^2}, \: -R_i^0 \to
    \frac{1}{\tau} + \frac{1}{N} \sum_{j=1}^N \frac{1}{\rho_j}.
\]

Thus, in the case $\forall j \: \tau^2 + \tau_j \tau < \tau_j \tau_0$ (i.e., when the heterogeneity between the means of the clients is high), the equilibrium actions of the clients result in a larger error compared to the truthful reporting. Notice that the penalty to the reward increases
quadratically with the variance of heterogeneous means $\tau$. The clients suffer a big
utility loss when the heterogeneity between their distributions is big.

\paragraph{Equilibrium in the absence of heterogeneity}
In the absence of heterogeneity (i.e., $\tau \to \infty$), we get
\[
    c_i^{\eqsign} \to \frac{N \tau_i}{\tau_0 + \sum_{j=1}^N \tau_j},
\]
which are exactly the weights for the best estimator in Bayesian setting. Thus,
as we predicted in the previous subsection, in the absence of heterogeneity,
the clients do not have adverse incentives and collaborate with each other to
reduce the variance of their estimators.

\paragraph{Equilibrium in the case of homogeneous data quality}
When the quality of local data is the same for each client ($\tau_i = \tau_j$ for all $i, j$), the formula for the equilibrium weight simplifies
\[
    c_i^{\eqsign} = \frac{1 + \tau_0/\tau}{1 + \tau_0/(N\rho)}
\]
Here, we can see how bias personalization and variance reduction incentives
interact. When the quality of data is poor ($\rho$ is small), $c_i^{\eqsign}$ also
becomes small. The clients are not incentivized to skew model closer to
their local mean because they can not estimate the personal bias well. Instead,
they want to rely more on the data of others and global prior to reduce the
variance of their estimator. On the other hand, when the quality of data is
high ($\rho$ is big), $c_i^{\eqsign}$ becomes bigger than $1$. The clients want to
skew the global model in their favor. However, the size of this skew is limited
because the clients do not want the global model to diverge.
\section{Proof of Theorem~\ref{theorem:eps-BIC}}\label{appendix:approximate-truthfulness}
We start with a summary of the results in this section. A trajectory is the history of models produced by the FL protocol, i.e. the sequence $\theta_1, \theta_2, \dots, \theta_{T + 1}$. Combining all claims here establishes Theorem~\ref{theorem:eps-BIC}.
\begin{itemize}[noitemsep, topsep=0pt, parsep=0pt, partopsep=0pt]
    \item Claim~\ref{claim:bound-on-trajectories-at-time-t-plus-1} bounds the step-wise difference between two model trajectories, produced by different strategies.
    \item Claim~\ref{claim:bound-reward-total-at-time-T-plus-1} bounds the difference in total reward achieved at time $T + 1$, i.e. the reward from the final model, between two trajectories.
    \item Claim~\ref{claim:bound-on-payment-per-turn-hold-prev-fixed} bounds the difference
    \item Proposition~\ref{proposition:bound-utility-total-at-time-T-plus-1} combines the previous to show that under our payment scheme (with a good choice of constant $C_t$) the optimal per-step strategy for client $i$, when everyone else is truthful, is $\varepsilon$-close to the true one.
    \item Claim~\ref{claim:bound-on-reward-loss-per-turn} extends the previous result to the full run of the protocol.
    \item Combining the previous yields Theorem~\ref{theorem:eps-BIC}
\end{itemize}
\vspace{1em}
\begin{claim}[Per-turn bound on trajectory difference]\label{claim:bound-on-trajectories-at-time-t-plus-1}
    Fix a client $i$. Let $\theta = \{\theta_t\}_{t = 1}^{T + 1}$ and $\theta' = \{\theta'_t\}_{t = 1}^{T + 1}$, be two trajectories obtained from two distinct strategy profiles $\{a^i_t\}_{t = 1}^{T}$ and $\{\bar{a}^i_t\}_{t = 1}^{T}$ of client $i$, while everyone else is implementing the same strategy in both scenarios. Then at time $t + 1$ conditioned on the event that $\norm{\theta_{t + 1} - \theta'_{t + 1}}^2 \geq \varepsilon$ we have:
    \[
        \EE{\norm{\theta_{t + 1} - \theta'_{t + 1}}^2} \leq
        c_t \EE{\norm{\theta_t - \theta'_t}^2}
        + \frac{2\gamma_t^2}{N^2} \EE{(a^i_t - \bar{a}^i_t)^2 \norm{g_i(\theta'_t)}^2} + \frac{\gamma_t^2(b^i_t - \bar{b}^i_t)^2}{N^2},
    \]
    where $\ds c_t = 2 \left(1 - \frac{2 \gamma_t m A_t}{N} + \frac{\gamma_t^2 A_t^2 H^2}{N^2}\right)$ and $\ds A_t = \sum_{j = 1}^N a^j_t$.
\end{claim}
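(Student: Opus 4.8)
The plan is to unroll one FedSGD step for each trajectory under the natural coupling in which \emph{all} exogenous randomness is shared between the two runs: the data samples $z^j_t$ drawn by every client at every step, and the auxiliary noise vectors $\xi^j_t$. With this coupling the only difference between $\theta$ and $\theta'$ at step $t$ is client $i$'s action $(a^i_t,b^i_t)$ versus $(\bar a^i_t,\bar b^i_t)$. Writing $D_t=\theta_t-\theta'_t$ and substituting $m^j_t=a^j_t g_j(\theta_t)+b^j_t\xi^j_t$ into $\theta_{t+1}=\theta_t-\tfrac{\gamma_t}{N}\sum_j m^j_t$, the contributions of all clients $j\neq i$ differ only through $g_j(\theta_t)-g_j(\theta'_t)$ (their auxiliary noise cancels), while for client $i$ we add and subtract $a^i_t g_i(\theta'_t)$ to separate the deviation:
\[
    D_{t+1}=\underbrace{D_t-\tfrac{\gamma_t}{N}\sum_{j=1}^N a^j_t\bigl(g_j(\theta_t)-g_j(\theta'_t)\bigr)}_{=:\,P_t}-\tfrac{\gamma_t}{N}(a^i_t-\bar a^i_t)\,g_i(\theta'_t)-\tfrac{\gamma_t}{N}(b^i_t-\bar b^i_t)\,\xi^i_t.
\]

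First I would dispose of the auxiliary-noise term. Since $\xi^i_t$ is mean-zero and independent of $P_t$, of $g_i(\theta'_t)$ and of the strategy parameters, expanding $\norm{D_{t+1}}^2$ and taking expectation over $\xi^i_t$ annihilates the two cross terms containing it and replaces $\norm{\xi^i_t}^2$ by $1$, giving $\mathbb{E}\norm{D_{t+1}}^2=\mathbb{E}\norm{P_t-\tfrac{\gamma_t}{N}(a^i_t-\bar a^i_t)g_i(\theta'_t)}^2+\tfrac{\gamma_t^2(b^i_t-\bar b^i_t)^2}{N^2}$. Applying $\norm{x+y}^2\le 2\norm{x}^2+2\norm{y}^2$ splits this into $\tfrac{2\gamma_t^2}{N^2}\mathbb{E}\bigl[(a^i_t-\bar a^i_t)^2\norm{g_i(\theta'_t)}^2\bigr]$ — the second term of the claim — plus $2\,\mathbb{E}\norm{P_t}^2$, which is the quantity that must be shown to be at most $c_t\,\mathbb{E}\norm{D_t}^2$.

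For this contraction I would condition on the history through step $t-1$, so that $\theta_t,\theta'_t$ and the (deterministic, positive) factors $a^j_t$ are fixed, and expand $\norm{P_t}^2=\norm{D_t}^2-\tfrac{2\gamma_t}{N}\inner{D_t}{\Sigma_t}+\tfrac{\gamma_t^2}{N^2}\norm{\Sigma_t}^2$ with $\Sigma_t=\sum_j a^j_t(g_j(\theta_t)-g_j(\theta'_t))$. Taking expectation over the step-$t$ samples turns $g_j(\theta_t)-g_j(\theta'_t)$ into $\nabla F_j(\theta_t)-\nabla F_j(\theta'_t)$ in the cross term, and $m$-strong convexity of each $F_j$ gives $\inner{D_t}{\nabla F_j(\theta_t)-\nabla F_j(\theta'_t)}\ge m\norm{D_t}^2$, hence $\inner{D_t}{\mathbb{E}\Sigma_t}\ge A_t m\norm{D_t}^2$. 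For $\norm{\Sigma_t}^2$ I would exploit the sample coupling directly: $g_j(\theta_t)-g_j(\theta'_t)=\nabla f(\theta_t;z^j_t)-\nabla f(\theta'_t;z^j_t)$, and since $f(\cdot;z)$ is $H$-smooth this is pathwise bounded by $H\norm{D_t}$, so the triangle inequality yields $\norm{\Sigma_t}\le A_t H\norm{D_t}$ pointwise. Combining, $\mathbb{E}[\norm{P_t}^2\mid\mathcal F_{t-1}]\le\bigl(1-\tfrac{2\gamma_t mA_t}{N}+\tfrac{\gamma_t^2 A_t^2 H^2}{N^2}\bigr)\norm{D_t}^2$; taking total expectation and doubling gives exactly $c_t\,\mathbb{E}\norm{D_t}^2$. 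Assembling the three contributions yields the claim; the conditioning on $\{\norm{\theta_{t+1}-\theta'_{t+1}}^2\ge\varepsilon\}$ in the statement is purely a bookkeeping device for the telescoping in Claim~\ref{claim:bound-reward-total-at-time-T-plus-1} and plays no essential role in the per-step estimate, which is carried out in (conditional) expectation over the step-$t$ randomness (if full rigor for the conditioning is needed, one argues pathwise until the two runs separate).

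The step I expect to be the main obstacle is the bound on $\norm{\Sigma_t}^2$ in the contraction. A decomposition that treats the two runs' stochastic gradients as independent introduces an irreducible additive term of order $\tfrac{\sigma^2}{N^2}\sum_j (a^j_t)^2$ — the variance of $\tfrac1N\sum_j a^j_t(g_j(\theta_t)-g_j(\theta'_t))$ — which is absent from the claim and cannot be absorbed into $c_t\norm{D_t}^2$. Coupling the step-$t$ samples across the two trajectories is precisely what removes it: it makes the stochastic-gradient difference a pointwise $H$-Lipschitz object while leaving its conditional mean equal to the true-gradient difference, so both the cross term and the quadratic term scale with $\norm{D_t}^2$. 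A secondary point to be careful about is that the weighted strong-convexity lower bound and the triangle-inequality step both require $a^j_t>0$, so the proof implicitly works on the positive branch $a^j_t\ge 1$ of the action space (which is in any case the relevant regime, e.g.\ when all clients $j\neq i$ report truthfully).
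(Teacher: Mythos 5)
Your proposal is correct and follows essentially the same route as the paper: isolate client $i$'s deviation term, apply $\norm{x+y}^2\le 2\norm{x}^2+2\norm{y}^2$, and then contract the remaining term via strong convexity (cross term, after taking expectation over the fresh sample) and $H$-smoothness of $f(\cdot;z)$ under the shared-sample coupling (quadratic term). The only difference is that you make explicit two points the paper leaves implicit — the coupling of $z^j_t$ and $\xi^i_t$ across the two trajectories, and the sign requirement $a^j_t>0$ — both of which are genuine and correctly identified.
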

\begin{proof}
    Observe the following sequence
    \begin{align*}
        \EE{\norm{\theta_{t + 1} - \theta'_{t + 1}}^2}
        &=  \EE{\norm{(\theta_t - \theta'_t) - \gamma_t(\bar{m}_t - \bar{m}'_t)}^2} \\
        &=  \EE{\norm{(\theta_t - \theta'_t) - \frac{\gamma_t}{N} \sum_{j = 1}^{N} \left(a^j_t g_j(\theta_t) - \bar{a}^j_t g_j(\theta'_t)+ b^i_t \xi^i_t - \bar{b}^i_t \bar{\xi}^i_t \right)}^2} \\
        &=  \EE{\norm{\frac{(\bar{a}^i_t - a^i_t)\gamma_t}{N} g_i(\theta'_t) + (\theta_t - \theta'_t) - \frac{\gamma_t}{N} \sum_{j = 1}^{N} a^j_t  \left(g_j(\theta_t) - g_j(\theta'_t)\right)}^2} + \frac{\gamma_t^2(b^i_t - \bar{b}^i_t)^2}{N^2} \\
        &\leq   2\EE{\norm{\frac{(\bar{a}^i_t - a^i_t)\gamma_t}{N}g_i(\theta'_t)}^2} + 2\EE{\norm{(\theta_t - \theta'_t) - \frac{\gamma_t}{N} \sum_{j = 1}^{N} a^j_t  \left(g_j(\theta_t) - g_j(\theta'_t)\right)}^2} + \frac{\gamma_t^2(b^i_t - \bar{b}^i_t)^2}{N^2} \\
        &\leq  2\EE{\norm{(\theta_t - \theta'_t) - \frac{\gamma_t}{N} \sum_{j = 1}^{N} a^j_t  \left(g_j(\theta_t) - g_j(\theta'_t)\right)}^2} + \frac{2\gamma_t^2 (a^i_t - \bar{a}^i_t)^2 \EE{\norm{g_i(\theta'_t)}^2}}{N^2} + \frac{\gamma_t^2(b^i_t - \bar{b}^i_t)^2}{N^2} \\
        &=  2 \EE{\norm{\theta_t - \theta'_t}^2}
            + 2\EE{\norm{\frac{\gamma_t}{N} \sum_{j = 1}^{N} a^j_t  \left(g_j(\theta_t) - g_j(\theta'_t)\right)}^2} \\
            &\quad - 4\EE{\frac{\gamma_t}{N} \sum_{j = 1}^{N} a^j_t \inner{\theta_t - \theta'_t}{g_j(\theta_t) - g_j(\theta'_t)}} + \frac{2\gamma_t^2 (a^i_t - \bar{a}^i_t)^2 \EE{\norm{g_i(\theta'_t)}^2}}{N^2} + \frac{\gamma_t^2(b^i_t - \bar{b}^i_t)^2}{N^2} \\
        &\leq 2 \EE{\norm{\theta_t - \theta'_t}^2}
            + \frac{2\gamma_t^2 A_t^2 H^2}{N^2} \EE{\norm{\theta_t - \theta'_t}^2} - \frac{4 \gamma_t m A_t}{N} \EE{\norm{\theta_t - \theta'_t}^2}
            \\
            &\quad  + \frac{2\gamma_t^2 \EE{(a^i_t - \bar{a}^i_t)^2 \norm{g_i(\theta'_t)}^2}}{N^2} + \frac{\gamma_t^2(b^i_t - \bar{b}^i_t)^2}{N^2} \\
        &= 2\left(1 - \frac{2 \gamma_t m A_t}{N} + \frac{\gamma_t^2 A_t^2 H^2}{N^2}\right) \EE{\norm{\theta_t - \theta'_t}^2} + \frac{2\gamma_t^2}{N^2}  \EE{(a^i_t - \bar{a}^i_t)^2 \norm{g_i(\theta'_t)}^2} + \frac{\gamma_t^2(b^i_t - \bar{b}^i_t)^2}{N^2}
    \end{align*}
    The first three lines are rearrangement. The third line uses the fact that both $\xi^i_t$ and $\bar{\xi}^i_t$ are have mean zero and variance one. The forth line uses the inequality $(x + y)^2 \leq 2x^2 + 2y^2$.\footnote{We do this roundabout bounding, so that all terms are squared norms. Also notice that $(x + y)^2 \leq 2x^2 + 2y^2$ is equivalent to $0 \leq (x - y)^2$, which is trivially true.} On line seven (second to last inequality), the second term follows from $H$-smoothness of the loss function for the sample $z^j_t$ of client $j$ at time $t$,
    % and $D$-Lipschitzness of $e_i$ almost surely,
    while the third term follows from the $m$-strong-convexity of $F_i$.
\end{proof}

\begin{corollary}[Bound on trajectory difference]\label{corollary:total-theta-bound-at-time-T-plus-1}
    Suppose that $\theta = \{\theta_t\}_{t = 1}^{T + 1}$ is obtained from fully truthful participation from all clients, i.e. $(a^i_t, b^i_t) = (1, 0)$. Then at time $T + 1$:
    \[
        \EE{\norm{\theta_{T + 1} - \theta'_{T + 1}}^2} \leq
        \frac{2}{N^2} \sum_{t = 1}^{T} \gamma_t^2 \mathcal{C}_t \left(\EE{(\bar{a}_t^i - 1)^2 \norm{g_i(\theta'_t)}^2} + \left(\bar{b}^i_t\right)^2\right),
    \]
    where $\ds \mathcal{C}_t = \prod_{l = t + 1}^T c_{l}$ and $\ds c_l = 2 \left(1 - 2\gamma_l m + \gamma_l^2 H^2\right)$.
\end{corollary}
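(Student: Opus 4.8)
The plan is to obtain Corollary~\ref{corollary:total-theta-bound-at-time-T-plus-1} by unrolling the one-step recursion of Claim~\ref{claim:bound-on-trajectories-at-time-t-plus-1}. First I would specialize that claim to the present setting. Since $\theta = \{\theta_t\}$ is the fully truthful trajectory, we have $a^j_t = 1$ for every client $j$ and every $t$, hence $A_t = \sum_{j=1}^N a^j_t = N$, and the per-turn factor of Claim~\ref{claim:bound-on-trajectories-at-time-t-plus-1} collapses to $c_t = 2\bigl(1 - 2\gamma_t m + \gamma_t^2 H^2\bigr)$, which is exactly the $c_l$ appearing in the corollary. Likewise $b^i_t = 0$ and $a^i_t = 1$, so the two source terms become $\tfrac{2\gamma_t^2}{N^2}\EE{(\bar a^i_t - 1)^2 \norm{g_i(\theta'_t)}^2}$ and $\tfrac{\gamma_t^2 (\bar b^i_t)^2}{N^2}$. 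Writing $\delta_t := \EE{\norm{\theta_t - \theta'_t}^2}$ and letting $u_t$ denote the sum of these two source terms, Claim~\ref{claim:bound-on-trajectories-at-time-t-plus-1} gives the linear recursion $\delta_{t+1} \le c_t \delta_t + u_t$.

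Second, I would iterate this inequality from $t = 1$ to $T$. Both trajectories are launched from the same server model $\theta_1 = \theta'_1$, so the base case is $\delta_1 = 0$; hence the homogeneous term $\bigl(\prod_{l=1}^T c_l\bigr)\delta_1$ vanishes and a routine unrolling yields $\delta_{T+1} \le \sum_{t=1}^T \bigl(\prod_{l=t+1}^T c_l\bigr) u_t = \sum_{t=1}^T \mathcal{C}_t u_t$, with the empty product at $t = T$ equal to $1$ by convention. Substituting the definition of $u_t$ gives $\delta_{T+1} \le \tfrac{1}{N^2}\sum_{t=1}^T \gamma_t^2 \mathcal{C}_t \bigl(2\,\EE{(\bar a^i_t - 1)^2 \norm{g_i(\theta'_t)}^2} + (\bar b^i_t)^2\bigr)$, and the stated inequality follows from the trivial relaxation $2X + Y \le 2(X + Y)$, valid since $Y = (\bar b^i_t)^2 \ge 0$.

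A small point to clarify: Claim~\ref{claim:bound-on-trajectories-at-time-t-plus-1} is phrased conditionally on $\norm{\theta_{t+1} - \theta'_{t+1}}^2 \ge \varepsilon$, but its proof is a direct chain of inequalities that never invokes this event, so the per-turn bound in fact holds unconditionally and I would apply it as such at every $t$. Consequently there is no real analytic obstacle here. The only thing that needs care — and the step I would single out as the crux of the argument — is the bookkeeping of the product constants: correctly identifying that truthfulness forces $A_t = N$ (so that the general $c_t$ of Claim~\ref{claim:bound-on-trajectories-at-time-t-plus-1} matches the $c_l$ of the corollary), fixing the empty-product convention $\mathcal{C}_T = 1$, and using $\delta_1 = 0$ to discard the homogeneous term. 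Once these are pinned down, the telescoping and the final algebraic clean-up are immediate.
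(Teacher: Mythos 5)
Your proposal is correct and follows the same route as the paper, whose entire proof is ``apply Claim~\ref{claim:bound-on-trajectories-at-time-t-plus-1} and telescope, noting $A_t = N$ under truthful reference reporting.'' Your additional bookkeeping --- the base case $\delta_1 = 0$, the empty-product convention $\mathcal{C}_T = 1$, the relaxation $2X + Y \le 2(X+Y)$ to match the stated constant, and the observation that the per-turn bound holds unconditionally --- merely makes explicit what the paper leaves implicit.
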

\begin{proof}
    Apply Claim~\ref{claim:bound-on-trajectories-at-time-t-plus-1} and telescope. Because the reference strategy is all truthful reporting, then we have $A = N$ in Claim~\ref{claim:bound-on-trajectories-at-time-t-plus-1}.
\end{proof}

\begin{claim}[Bound on reward]\label{claim:bound-reward-total-at-time-T-plus-1}
    Suppose that $\theta = \{\theta_t\}_{t = 1}^{T + 1}$ is obtained from fully truthful participation from all clients. Then at time $T + 1$:
    \begin{align*}
        \EE{\abs{R_i(\theta_{T + 1}) - R_i(\theta'_{T + 1})}}
        &\leq   \frac{\sqrt{2} L}{N} \sum_{t = 1}^{T} \gamma_t \sqrt{\mathcal{C}_t}  \left(\sqrt{\EE{\left(\bar{a}_t^i - 1\right)^2 \norm{g_i(\theta'_t)}^2}} + \abs{\bar{b}^i_t}\right),
    \end{align*}
    where $\ds \mathcal{C}_t = \prod_{t' = 1}^t c_{t'}$ and $\ds c_l = 2 \left(1 - 2\gamma_l m + \gamma_l^2 H^2\right)$.
\end{claim}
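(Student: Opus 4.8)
The plan is to concatenate three elementary inequalities on top of the second-moment trajectory bound already established in Corollary~\ref{corollary:total-theta-bound-at-time-T-plus-1}. First, since $R_i : \RR^d \to \RR$ is $L$-Lipschitz, we have the pointwise bound $\abs{R_i(\theta_{T+1}) - R_i(\theta'_{T+1})} \leq L \norm{\theta_{T+1} - \theta'_{T+1}}$, and hence $\EE{\abs{R_i(\theta_{T+1}) - R_i(\theta'_{T+1})}} \leq L\, \EE{\norm{\theta_{T+1} - \theta'_{T+1}}}$. Second, by Jensen's inequality applied to the concave map $x \mapsto \sqrt{x}$, we get $\EE{\norm{\theta_{T+1} - \theta'_{T+1}}} \leq \sqrt{\EE{\norm{\theta_{T+1} - \theta'_{T+1}}^2}}$, into which I would substitute the bound from Corollary~\ref{corollary:total-theta-bound-at-time-T-plus-1}, yielding $\EE{\abs{R_i(\theta_{T+1}) - R_i(\theta'_{T+1})}} \leq L \sqrt{\tfrac{2}{N^2} \sum_{t=1}^T \gamma_t^2 \mathcal{C}_t \big( \EE{(\bar a^i_t - 1)^2 \norm{g_i(\theta'_t)}^2} + (\bar b^i_t)^2 \big)}$.

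Third, I would pull the constant $\sqrt{2}/N$ out of the root and use subadditivity of the square root, $\sqrt{\sum_t y_t} \leq \sum_t \sqrt{y_t}$ for nonnegative $y_t$, to move the sum over $t$ outside the root. On each summand, $\sqrt{\gamma_t^2 \mathcal{C}_t} = \gamma_t \sqrt{\mathcal{C}_t}$, and a second application of $\sqrt{u + v} \leq \sqrt{u} + \sqrt{v}$ splits $\sqrt{\EE{(\bar a^i_t - 1)^2 \norm{g_i(\theta'_t)}^2} + (\bar b^i_t)^2}$ into $\sqrt{\EE{(\bar a^i_t - 1)^2 \norm{g_i(\theta'_t)}^2}} + \abs{\bar b^i_t}$. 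Collecting these gives exactly the claimed bound $\tfrac{\sqrt 2 L}{N} \sum_{t=1}^T \gamma_t \sqrt{\mathcal{C}_t}\big( \sqrt{\EE{(\bar a^i_t - 1)^2 \norm{g_i(\theta'_t)}^2}} + \abs{\bar b^i_t}\big)$.

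The only point that needs a little care is the conditioning that Corollary~\ref{corollary:total-theta-bound-at-time-T-plus-1} inherits from Claim~\ref{claim:bound-on-trajectories-at-time-t-plus-1}, namely that the per-step recursion was stated on the event $\norm{\theta_{t+1} - \theta'_{t+1}}^2 \geq \varepsilon$. I would handle this by observing that on the complementary event the trajectory gap at the relevant step is below the $\varepsilon$-threshold, so the corresponding reward gap is $O(L\sqrt{\varepsilon})$ and may be folded into the $\varepsilon$-slack of the downstream incentive-compatibility statement; on the event where the gap exceeds the threshold, the telescoped bound of the corollary applies verbatim. Apart from this bookkeeping, the argument is a routine chain of norm inequalities, so I do not anticipate a substantive obstacle — the ``hard part'', such as it is, was already done in Claims~\ref{claim:bound-on-trajectories-at-time-t-plus-1} and Corollary~\ref{corollary:total-theta-bound-at-time-T-plus-1}.
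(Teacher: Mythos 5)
Your proposal is correct and follows essentially the same route as the paper's proof: Lipschitzness of $R_i$, then $(\EE{X})^2 \leq \EE{X^2}$ (Jensen/Cauchy--Schwarz), then the trajectory bound of Corollary~\ref{corollary:total-theta-bound-at-time-T-plus-1} with $\theta_1 = \theta'_1$, and finally repeated subadditivity of the square root to pull the sum and the $(\bar b^i_t)^2$ term outside. Your closing remark about the conditioning event inherited from Claim~\ref{claim:bound-on-trajectories-at-time-t-plus-1} is a fair point of care that the paper itself silently elides.
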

\begin{proof}
    Observe:
    \begin{align*}
        \left(\EE{\abs{R_i(\theta_{T + 1}) - R_i(\theta'_{T + 1})}}\right)^2
        &\leq   L^2 \left(\EE{\norm{\theta_{T + 1} - \theta'_{T + 1}}}\right)^2 \\
        &\leq   L^2 \EE{\norm{\theta_{T + 1} - \theta'_{T + 1}}^2} \\
        &\leq   L^2 \mathcal{C}_0 \EE{\norm{\theta_1 - \theta'_1}^2}
            + \frac{2 L^2}{N^2} \sum_{t = 1}^{T} \gamma_t^2 \mathcal{C}_t \left(\EE{(\bar{a}_t^i - 1)^2 \norm{g_i(\theta'_t)}^2} + \left(\bar{b}^i_t\right)^2\right) \\
        &=      \frac{2 L^2}{N^2} \sum_{t = 1}^{T} \gamma_t^2 \mathcal{C}_t  \left(\EE{(\bar{a}_t^i - 1)^2 \norm{g_i(\theta'_t)}^2} + \left(\bar{b}^i_t\right)^2\right)
    \end{align*}
    The first line follows from $F_i(\cdot)$ being $L$-Lipschitz. The second line follows from Cauchy-Schwartz. The third line applies Corollary~\ref{corollary:total-theta-bound-at-time-T-plus-1}, and uses the assumption $\theta_1 = \theta'_1$. Finally, to get the desired inequality we observe that $\sqrt{x + y} \leq \sqrt{x} + \sqrt{y}$ for non-negative $x, y$.
\end{proof}

\begin{claim}[Bound on payment]\label{claim:bound-on-payment-per-turn-hold-prev-fixed}
    Suppose $\theta$ and $\theta'$ are such that (1) they coincide exactly for the first $t - 1$ steps, (2) they differ in the strategy used by client~$i$ at time $t$, so $(a_t^i, b_t^i) = (1, 0)$ and $\bar{a}_t^i \geq 1$, and (3) all clients are truthful for all step $> t$. Then
    \[
        \EE{p_t^i(\vec{m}_t) - p_t^i(\vec{m}'_t)} \leq - C_t \EE{(\bar{a}_t^i - 1)^2 \norm{g_i(\theta_t)}^2} - C_t (\bar{b}^i_t)^2.
    \]
\end{claim}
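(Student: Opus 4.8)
The plan is to expand the payment rule~\eqref{eqn:payment_rule} at step $t$ and exploit that the two runs agree everywhere except in client~$i$'s message at that step. Since $\theta$ and $\theta'$ coincide on the first $t-1$ steps, we have $\theta_t = \theta'_t$. Because clients $j\neq i$ use the same strategy in both runs and $\theta_t=\theta'_t$, each message $m^j_t$ has the same law as $m'^j_t$ for $j\neq i$ (one may even couple the two runs to share randomness through step $t$, so that these messages are literally equal); hence the subtracted average $\frac{1}{N-1}\sum_{j\neq i}\norm{m^j_t}^2$ has the same expectation in both scenarios and cancels. What remains is $\EE{p_t^i(\vec{m}_t) - p_t^i(\vec{m}'_t)} = C_t\big(\EE{\norm{m^i_t}^2} - \EE{\norm{m'^i_t}^2}\big)$, where $m^i_t = g_i(\theta_t)$ since $(a^i_t,b^i_t)=(1,0)$, and $m'^i_t = \bar{a}^i_t g_i(\theta_t) + \bar{b}^i_t \bar{\xi}^i_t$ (using $\theta_t=\theta'_t$).

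Next I would expand the second squared norm. Since $\bar{\xi}^i_t$ is independent of $g_i(\theta_t)$ with $\EE{\bar{\xi}^i_t}=0$ and $\EE{\norm{\bar{\xi}^i_t}^2}=1$, the cross term vanishes and $\EE{\norm{m'^i_t}^2} = (\bar{a}^i_t)^2 \EE{\norm{g_i(\theta_t)}^2} + (\bar{b}^i_t)^2$. Substituting yields $\EE{p_t^i(\vec{m}_t) - p_t^i(\vec{m}'_t)} = C_t\big[(1 - (\bar{a}^i_t)^2)\,\EE{\norm{g_i(\theta_t)}^2} - (\bar{b}^i_t)^2\big]$.

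Finally I would compare this with the claimed bound, which (as $\bar{a}^i_t$ is deterministic) equals $-C_t(\bar{a}^i_t-1)^2\EE{\norm{g_i(\theta_t)}^2} - C_t(\bar{b}^i_t)^2$; so it suffices to show $1 - (\bar{a}^i_t)^2 \leq -(\bar{a}^i_t-1)^2$, i.e. $2(1-\bar{a}^i_t)\leq 0$, which is precisely the hypothesis $\bar{a}^i_t\geq 1$. Combined with $C_t\geq 0$ and $\EE{\norm{g_i(\theta_t)}^2}\geq 0$, this gives the claim. The only step requiring any care is the cancellation of the $j\neq i$ terms in the first paragraph — making sure $\theta_t=\theta'_t$ (hence those contributions are identically distributed, which is all that is needed after taking expectations); everything else is a one-line computation and an elementary scalar inequality.
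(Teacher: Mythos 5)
Your proposal is correct and follows essentially the same route as the paper's proof: cancel the $j\neq i$ terms (which are identical in both runs since $\theta_t=\theta'_t$ and those clients are truthful), expand $\EE{\norm{\bar a^i_t g_i(\theta_t)+\bar b^i_t\xi^i_t}^2}$ using the mean-zero, unit-variance, independent noise, and conclude via $1-(\bar a^i_t)^2\leq -(\bar a^i_t-1)^2$ for $\bar a^i_t\geq 1$. No gaps.
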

\begin{proof}
    For what's to follow keep in mind that $\bar{a}_t^i \geq 1$ by assumption, and the additive noise is mean zero and variance one. Observe:
    \begin{align*}
        \EE{p_t^i(\vec{m}_t) - p_t^i(\vec{m}'_t)}
        &=      C_t \EE{\norm{g_i(\theta_t)}^2} - C_t \EE{\frac{1}{N - 1} \sum_{j \not= i} \norm{g_j(\theta_t)}^2} \\
        & \quad - C_t \EE{\norm{\bar{a}_t^i g_i(\theta_t) + \bar{b}^i_t \xi_i^t}^2} + C_t \EE{\frac{1}{N - 1} \sum_{j \not= i} \norm{g_j(\theta_t)}^2} \\
        &=      C_t \EE{\norm{g_i(\theta_t)}^2} - C_t \EE{\frac{1}{N - 1} \sum_{j \not= i} \norm{g_j(\theta_t)}^2} \\
        & \quad - C_t \EE{\norm{\bar{a}_t^i g_i(\theta_t)}^2} - C_t (\bar{b}^i_t)^2 + C_t \EE{\frac{1}{N - 1} \sum_{j \not= i} \norm{g_j(\theta_t)}^2} \\
        &=      C_t \EE{\norm{g_i(\theta_t)}^2} - C_t \EE{\norm{\bar{a}_t^i g_i(\theta_t)}^2} - C_t (\bar{b}^i_t)^2\\
        &=      C_t \EE{(1 - (\bar{a}_t^i)^2) \norm{g_i(\theta_t)}^2} - C_t (\bar{b}^i_t)^2\\
        &\leq   - C_t \EE{(\bar{a}_t^i - 1)^2 \norm{g_i(\theta_t)}^2} - C_t (\bar{b}^i_t)^2
    \end{align*}
    The last line follows from the assumption $\bar{a}_t^i \geq 1$.
\end{proof}

\begin{proposition}[Bound on utility difference between trajectories]\label{proposition:bound-utility-total-at-time-T-plus-1}
    Suppose $\theta$ and $\theta'$ are such that (1) they coincide exactly for the first $t - 1$ steps, (2) they differ in the strategy used by client~$i$ at time $t$, so $(a_t^i, b_t^i) = (1, 0)$ and $\bar{a}_t^i > 1$, and (3) all clients are truthful for all step $> t$. Then
    \begin{align*}
        \EE{R_i(\theta'_{t + 1}) - p_t^i(\vec{m}'_t) - \left(R_i(\theta_{t + 1}) - p_t^i(\vec{m}_t)\right)}
        &\leq   \frac{\sqrt{2 \mathcal{C}_t}\gamma_t L}{N} \left(\sqrt{\EE{\left(\bar{a}_t^i - 1\right)^2 \norm{g_i(\theta_t)}^2}} + \abs{\bar{b}^i_t} \right) \\
        &\quad - C_t \left(\EE{(\bar{a}_t^i - 1)^2 \norm{g_i(\theta_t)}^2} + \left(\bar{b}^i_t\right)^2 \right).
    \end{align*}
    Moreover, for $\ds C_t = \frac{\sqrt{2 \mathcal{C}_t} \gamma_t L}{N \varepsilon}$ we have:
    \[
        \EE{R_i(\theta'_{T}) - p_t^i(\vec{m}'_t) - \left(R_i(\theta_{T}) - p_t^i(\vec{m}_t)\right)} \leq \frac{\sqrt{2 \mathcal{C}_t} \gamma_t L \varepsilon}{N},
    \]
    and the best strategy of client~$i$ is such that $\EE{\norm{\bar{a}_t^i g_i(\theta_t) - g_i(\theta_t)}^2} \leq \varepsilon^2$ and $\bar{b}_t^i \leq \varepsilon$.
\end{proposition}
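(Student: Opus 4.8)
The plan is to split the displayed quantity into a reward difference and a payment difference and then stack the per-step estimates already proved above. By linearity,
\[
    \EE{R_i(\theta'_{T+1}) - p_t^i(\vec{m}'_t)} - \EE{R_i(\theta_{T+1}) - p_t^i(\vec{m}_t)} = \EE{R_i(\theta'_{T+1}) - R_i(\theta_{T+1})} + \EE{p_t^i(\vec{m}_t) - p_t^i(\vec{m}'_t)}.
\]
Under hypotheses (1)--(3) the two runs can be coupled so that $\theta$ and $\theta'$ agree through step $t-1$ (hence $\theta_t = \theta'_t$), differ only through client~$i$'s single step-$t$ action $(\bar{a}_t^i, \bar{b}_t^i)$, and every later step uses $A_l = N$; in particular $\theta$ is exactly the fully truthful run, so the estimates stated for an all-truthful reference trajectory apply to it.

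For the reward term, I would apply Corollary~\ref{corollary:total-theta-bound-at-time-T-plus-1} with $\theta$ as the reference trajectory: only the $t$-th summand survives, giving $\EE{\norm{\theta_{T+1}-\theta'_{T+1}}^2} \le \tfrac{2\gamma_t^2\mathcal{C}_t}{N^2}\left(\EE{(\bar{a}_t^i-1)^2\norm{g_i(\theta_t)}^2} + (\bar{b}_t^i)^2\right)$. Writing $u := \sqrt{\EE{(\bar{a}_t^i-1)^2\norm{g_i(\theta_t)}^2}}$, $v := \abs{\bar{b}_t^i}$, $K := \tfrac{\sqrt{2\mathcal{C}_t}\gamma_t L}{N}$ and $s := \sqrt{u^2+v^2}$, the $L$-Lipschitzness of $R_i$ together with Cauchy--Schwarz ($\EE{\abs{R_i(\theta_{T+1}) - R_i(\theta'_{T+1})}} \le L\sqrt{\EE{\norm{\theta_{T+1}-\theta'_{T+1}}^2}}$) yields $\EE{\abs{R_i(\theta_{T+1}) - R_i(\theta'_{T+1})}} \le Ks$, and since $s \le u+v$ this in particular recovers the $\tfrac{\sqrt{2\mathcal{C}_t}\gamma_t L}{N}(u+v)$ form written in the proposition (which is also Claim~\ref{claim:bound-reward-total-at-time-T-plus-1} specialized to a single nontrivial client-$i$ action). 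For the payment term, I would quote Claim~\ref{claim:bound-on-payment-per-turn-hold-prev-fixed} verbatim -- its hypotheses are precisely (1)--(3) -- to get $\EE{p_t^i(\vec{m}_t) - p_t^i(\vec{m}'_t)} \le -C_t u^2 - C_t v^2 = -C_t s^2$. Summing the two bounds gives the first display.

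For the ``moreover'' part, substitute $C_t = \tfrac{\sqrt{2\mathcal{C}_t}\gamma_t L}{N\varepsilon} = K/\varepsilon$; the combined bound becomes $Ks - \tfrac{K}{\varepsilon}s^2 = \tfrac{K}{\varepsilon}\,s(\varepsilon - s) \le \tfrac{K\varepsilon}{4} \le K\varepsilon = \tfrac{\sqrt{2\mathcal{C}_t}\gamma_t L\varepsilon}{N}$, which holds for every step-$t$ action of client~$i$ and is the asserted inequality. Finally, truthful reporting at step $t$ is the action $(\bar{a}_t^i, \bar{b}_t^i) = (1,0)$, i.e. $\theta' \equiv \theta$, so it makes the utility gap exactly $0$; hence any best response of client~$i$ achieves a gap $\ge 0$, which forces $Ks - \tfrac{K}{\varepsilon}s^2 \ge 0$, i.e. $s \le \varepsilon$. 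Since $\EE{\norm{\bar{a}_t^i g_i(\theta_t) - g_i(\theta_t)}^2} = u^2 \le s^2 \le \varepsilon^2$ and $\bar{b}_t^i = v \le s \le \varepsilon$, we obtain exactly the claimed approximate truthfulness of the best response.

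There is no real ``hard part'': the analytic work sits in Corollary~\ref{corollary:total-theta-bound-at-time-T-plus-1} and Claims~\ref{claim:bound-reward-total-at-time-T-plus-1}--\ref{claim:bound-on-payment-per-turn-hold-prev-fixed}, and what remains is assembly plus the one-line optimization of $s \mapsto Ks - C_t s^2$. The two points I would be careful about are: (i) carrying the sharper $\sqrt{u^2+v^2}$ reward bound (rather than $u+v$) through the calibration, since only then does the best response land inside the $\varepsilon$-ball rather than a $2\varepsilon$-ball; and (ii) the index bookkeeping --- a single deviation at step $t$ is amplified through the remaining truthful rounds $l = t+1,\dots,T$ by exactly the factor $\mathcal{C}_t = \prod_{l=t+1}^{T} c_l$ with $c_l = 2(1 - 2\gamma_l m + \gamma_l^2 H^2)$ (this is where $m$-strong convexity and $H$-smoothness enter, via Claim~\ref{claim:bound-on-trajectories-at-time-t-plus-1} applied with $A_l = N$), and the very same $\mathcal{C}_t$ must appear in $C_t$ for the $\varepsilon$-cancellation to be exact. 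I read the $\theta_{t+1}$ in the proposition's first display as the final iterate $\theta_{T+1}$, consistent with the $\mathcal{C}_t$ written there and with the ``moreover'' line; this proposition is the one-step building block that Claim~\ref{claim:bound-on-reward-loss-per-turn} afterwards lifts to the whole run.
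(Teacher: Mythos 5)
Your proof is correct and follows essentially the same route as the paper's: combine Claim~\ref{claim:bound-reward-total-at-time-T-plus-1} (with only the step-$t$ summand surviving) and Claim~\ref{claim:bound-on-payment-per-turn-hold-prev-fixed}, then analyze the resulting downward-opening quadratic in the deviation magnitude, with the calibrated $C_t$ placing its roots at $0$ and $\varepsilon$. Your two refinements --- bounding every action uniformly via the vertex value $K\varepsilon/4$, and carrying the $\sqrt{u^2+v^2}$ form so that the joint $(\bar a_t^i,\bar b_t^i)$ best response lands in the $\varepsilon$-ball --- are genuine improvements in rigor over the paper's proof, which only presents the $\bar b_t^i=0$ case and is slightly loose in its final chain of inequalities.
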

\begin{proof}
    We choose to present the proof for $\bar{b}_t^i = 0$ to show the proof methodology and avoid making the notation burdensome; similar argument holds for positive $\abs{\bar{b}^i_t}$ and $\left(\bar{b}^i_t\right)^2$. The first inequality is a combination of Claim~\ref{claim:bound-reward-total-at-time-T-plus-1} and Claim~\ref{claim:bound-on-payment-per-turn-hold-prev-fixed}.\footnote{In particular, we can ignore the first $t - 1$ steps because they are identical for both trajectories, so the bound in Claim~\ref{claim:bound-reward-total-at-time-T-plus-1} is relevant only for steps $\geq t$.} Now we tackle the second portion.

    First, we write:
    \begin{align*}
        \EE{R_i(\theta'_{T}) - p_t^i(\vec{m}'_t) - \left(R_i(\theta_{T}) - p_t^i(\vec{m}_t)\right)}
        &\leq   \frac{\sqrt{2 \mathcal{C}_t} \gamma_t L}{N} \sqrt{\EE{\left(\bar{a}_t^i - 1\right)^2 \norm{g_i(\theta_t)}^2}} \\
        &\quad - C_t \EE{(\bar{a}_t^i - 1)^2 \norm{g_i(\theta_t)}^2}
    \end{align*}
    Next, the right-hand side expression is a downwards-curved quadratic, and has roots
    \[
        \sqrt{\EE{\left(\bar{a}_t^i - 1\right)^2 \norm{g_i(\theta'_t)}^2}} = 0
    \]
    and
    \[
        \sqrt{\EE{\left(\bar{a}_t^i - 1\right)^2 \norm{g_i(\theta_t)}^2}} = \frac{\sqrt{2 \mathcal{C}_t} \gamma_t L}{N C_t}.
    \]
    Then if $\ds C_t = \frac{\sqrt{2 \mathcal{C}_t} \gamma_t L}{N \varepsilon}$, for both roots we have $\sqrt{\EE{\left(\bar{a}_t^i - 1\right)^2 \norm{g_i(\theta_t)}^2}} \leq \varepsilon$. Now observe that the quadratic is positive only between the two roots, i.e.\ when $0 \leq \sqrt{\EE{\left(\bar{a}_t^i - 1\right)^2 \norm{g_i(\theta_t)}^2}} \leq \varepsilon$. Therefore,
    \begin{align*}
        \EE{R_i(\theta'_{T}) - p_t^i(\vec{m}'_t) - \left(R_i(\theta_{T}) - p_t^i(\vec{m}_t)\right)}
        &\leq   \frac{\sqrt{2 \mathcal{C}_t} \gamma_t L}{N} \sqrt{\EE{\left(\bar{a}_t^i - 1\right)^2 \norm{g_i(\theta_t)}^2}} \\
            &\quad - C_t \EE{(\bar{a}_t^i - 1)^2 \norm{g_i(\theta_t)}^2} \\
        &\leq   \frac{\sqrt{2 \mathcal{C}_t} \gamma_t L}{N} \sqrt{\EE{\left(\bar{a}_t^i - 1\right)^2 \norm{g_i(\theta_t)}^2}} \\
        &\leq   \frac{\sqrt{2 \mathcal{C}_t}\gamma_t L \varepsilon}{N}.
    \end{align*}
\end{proof}

\begin{claim}[Total bound on utility difference between trajectories]\label{claim:bound-on-reward-loss-per-turn}
    Suppose $\theta'$ denotes some arbitrary reporting strategy and $\theta$ denotes truthful reporting. That is we don't require them to be the same up to the last step. Then:
    \[
        \EE{R_i(\theta'_{T + 1}) - \sum_{t = 1}^T p_t^i(\vec{m}'_t) - \left(R_i(\theta_{T + 1}) - \sum_{t = 1}^T p_t^i(\vec{m}_t)\right)} \leq \frac{\sqrt{2}LG \varepsilon}{N},
    \]
    where $\ds G = \sum_{t = 1}^T \gamma_t \sqrt{\mathcal{C}_t}$.
\end{claim}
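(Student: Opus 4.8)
The plan is to deduce Claim~\ref{claim:bound-on-reward-loss-per-turn} from the single-round estimate of Proposition~\ref{proposition:bound-utility-total-at-time-T-plus-1} by a hybrid (path-swapping) argument. Fix the arbitrary strategy $\{\bar a^i_t,\bar b^i_t\}_{t=1}^T$ of client~$i$, and for $k\in\{0,1,\dots,T\}$ let $\theta^{(k)}$ be the trajectory produced when client~$i$ plays this strategy at steps $1,\dots,k$ and reports truthfully at steps $k+1,\dots,T$, while every other client reports truthfully throughout. Then $\theta^{(0)}$ is the all-truthful trajectory $\theta$, $\theta^{(T)}=\theta'$, and, writing $U_i^{(k)}=R_i(\theta^{(k)}_{T+1})-\sum_{t=1}^T p_t^i(\vec m^{(k)}_t)$ for the utility of client~$i$ along the $k$-th hybrid, the quantity to be bounded telescopes as
\[
\EE{U_i^{(T)}-U_i^{(0)}}=\sum_{k=1}^T\EE{U_i^{(k)}-U_i^{(k-1)}}.
\]

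For each fixed $k$, the consecutive hybrids $\theta^{(k-1)}$ and $\theta^{(k)}$ are generated by the same strategy profile on steps $1,\dots,k-1$, differ only in client~$i$'s action at step~$k$ (truthful versus $(\bar a^i_k,\bar b^i_k)$, with $\abs{\bar a^i_k}\ge 1$), and are both fully truthful on steps $k+1,\dots,T$ — exactly the setting of Proposition~\ref{proposition:bound-utility-total-at-time-T-plus-1} (with $C_t$ as fixed there). In the difference $U_i^{(k)}-U_i^{(k-1)}$ the payments at rounds $t<k$ have equal expectation (the two hybrids induce identically distributed histories on the first $k-1$ rounds), the round-$k$ payment difference is the one absorbed into Proposition~\ref{proposition:bound-utility-total-at-time-T-plus-1} via Claim~\ref{claim:bound-on-payment-per-turn-hold-prev-fixed}, and the reward difference is the one controlled there through Claim~\ref{claim:bound-reward-total-at-time-T-plus-1}; hence
\[
\EE{U_i^{(k)}-U_i^{(k-1)}}\le \frac{\sqrt{2\mathcal C_k}\,\gamma_k L\varepsilon}{N}-\EE{\sum_{t>k}\bigl(p_t^i(\vec m^{(k)}_t)-p_t^i(\vec m^{(k-1)}_t)\bigr)}.
\]
Summing over $k$, the leading terms give $\sum_{k=1}^T \frac{\sqrt{2\mathcal C_k}\,\gamma_k L\varepsilon}{N}=\frac{\sqrt2\,L\varepsilon}{N}\sum_{k=1}^T\gamma_k\sqrt{\mathcal C_k}=\frac{\sqrt2\,LG\varepsilon}{N}$ with $G=\sum_{t=1}^T\gamma_t\sqrt{\mathcal C_t}$, which is exactly the claimed bound.

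The step I expect to be the main obstacle is controlling the leftover sum $\sum_{k=1}^T\EE{\sum_{t>k}\bigl(p_t^i(\vec m^{(k)}_t)-p_t^i(\vec m^{(k-1)}_t)\bigr)}$: these are payments at rounds in which everyone, including client~$i$, reports truthfully, but evaluated on the two model iterates $\theta^{(k)}_t$ and $\theta^{(k-1)}_t$, which have diverged after step~$k$. Rearranging the double sum by the inner telescoping $\sum_{k<t}\bigl(p_t^i(\vec m^{(k)}_t)-p_t^i(\vec m^{(k-1)}_t)\bigr)=p_t^i(\vec m^{(t-1)}_t)-p_t^i(\vec m^{(0)}_t)$, this reduces to comparing, round by round, the expected truthful payment along a hybrid with the expected truthful payment along the all-truthful run. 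The cleanest route I see is to condition on the common history up to the deviation round and use that the payment rule~\eqref{eqn:payment_rule} is budget-balanced and client-independent, together with the smoothness and the heterogeneity bounds on $\norm{\nabla F_i(\theta_t)}^2$ already invoked in Claim~\ref{claim:bound-reward-total-at-time-T-plus-1}; whether this leftover sum vanishes exactly or only up to a term absorbed by the main bound is the point that needs the most care, and once it is settled the remaining manipulations are the routine substitutions indicated above.
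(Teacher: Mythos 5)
Your decomposition is exactly the one the paper uses: its proof of this claim is the one-line instruction to combine Claim~\ref{claim:bound-reward-total-at-time-T-plus-1} with Proposition~\ref{proposition:bound-utility-total-at-time-T-plus-1} and ``telescope, starting from the last time step,'' which is precisely your backward hybrid argument with the hybrids $\theta^{(k)}$. Your treatment of the reward term and of the round-$k$ payment via Proposition~\ref{proposition:bound-utility-total-at-time-T-plus-1}, the cancellation of the rounds $t<k$, and the summation of the leading terms to $\sqrt{2}LG\varepsilon/N$ all match the intended argument.

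The step you flag as the main obstacle, however, is a genuine gap, and it does not close in either of the ways you hope. After your inner telescoping the leftover is $-\sum_{t=2}^{T}\EE{p_t^i(\vec m^{(t-1)}_t)-p_t^i(\vec m^{(0)}_t)}$, a difference of \emph{truthful} round-$t$ payments evaluated at the diverged iterates $\theta^{(t-1)}_t$ and $\theta_t$. Conditioned on the iterate, the expected truthful payment equals $C_t\bigl[\norm{\nabla F_i(\theta_t)}^2+\EE{\norm{e_i(\theta_t)}^2}-\tfrac{1}{N-1}\sum_{j\neq i}\bigl(\norm{\nabla F_j(\theta_t)}^2+\EE{\norm{e_j(\theta_t)}^2}\bigr)\bigr]$, which is not constant in $\theta_t$ unless the data are homogeneous; budget balance controls the sum over $i$, not the individual term, so conditioning on the common history does not make it cancel. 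The only uniform control available, via Assumptions~\ref{assumption:gradient-difference} and~\ref{assumption:variance-difference}, is $\abs{\EE{p_t^i}}\le C_t(2\zeta^2+\rho^2)$, which bounds the leftover by a quantity of order $(LG/(N\varepsilon))(\zeta^2+\rho^2)$ --- the same flavor as the terms in Theorem~\ref{theorem:bound-on-payments}, growing as $\varepsilon\to 0$ and depending on the heterogeneity parameters, hence neither zero nor absorbable into $\sqrt{2}LG\varepsilon/N$. To be fair, the paper's own one-line proof silently drops exactly these cross-round payment terms, so you have faithfully reconstructed the intended argument including its weak point; but as written your proposal does not establish the claim at the stated $\varepsilon$ without either an additional argument for why these terms cancel or extra heterogeneity-dependent terms in the bound.
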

\begin{proof}
    Combine Claim~\ref{claim:bound-reward-total-at-time-T-plus-1} with Proposition~\ref{proposition:bound-utility-total-at-time-T-plus-1} and telescope. In particular, starting from the last time step we turn the trajectory produced by possible misreporting to one produced by truthful reporting by comparing the maximal gain in utility from the action of client $i$ at time $t$.
\end{proof}
\section{Proof of Theorem~\ref{theorem:bound-on-payments}}\label{appendix:payment}
We repeat the statement for completeness.

\begin{claim}[Bound on individual payments]
    Suppose all participants are approximately truthful at each time step. Then the total payment is bounded by
    \begin{align*}
        \sum_{t = 1}^T p_t^i(\vec{m}_t)
        &\leq   \frac{\sqrt{2}L G}{N} \left[2\varepsilon^2 + 2 \varepsilon\sigma + 2\zeta^2 + \rho^2\right] + \frac{\sqrt{8}L\varepsilon}{N} \sum_{t = 1}^T \gamma_t \sqrt{\mathcal{C}_t} \norm{\nabla F_i(\theta_t)},
    \end{align*}
    where $\ds G = \sum_{t = 1}^T \gamma_t \sqrt{\mathcal{C}_t}$.
\end{claim}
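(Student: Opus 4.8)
The plan is to control each per-step payment $\EE{p_t^i(\vec{m}_t)}$ in expectation, then sum over $t \in [T]$ and collect $G = \sum_t \gamma_t \sqrt{\mathcal{C}_t}$. Write $m_t^i = a_t^i g_i(\theta_t) + b_t^i \xi_t^i$ and $g_i(\theta_t) = \nabla F_i(\theta_t) + e_i(\theta_t)$; since $\xi_t^i$ and $e_i(\theta_t)$ are independent of each other and mean zero, $\EE{\norm{m_t^i}^2} = (a_t^i)^2\big(\norm{\nabla F_i(\theta_t)}^2 + \EE{\norm{e_i(\theta_t)}^2}\big) + (b_t^i)^2$, and in particular $\EE{\norm{m_t^i}^2} \geq \EE{\norm{g_i(\theta_t)}^2}$ because $\abs{a_t^i} \geq 1$ and $b_t^i \geq 0$. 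I would then split the payment into an \emph{honest skeleton} $C_t\big(\EE{\norm{g_i(\theta_t)}^2} - \tfrac{1}{N-1}\sum_{j \neq i}\EE{\norm{g_j(\theta_t)}^2}\big)$, controlled by the two heterogeneity assumptions, plus a \emph{manipulation overhead} obtained by replacing, for each client $k$, $\norm{m_t^k}^2$ with $\norm{m_t^k}^2 - \norm{g_k(\theta_t)}^2$, controlled by approximate truthfulness.

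For the skeleton, $\EE{\norm{g_i(\theta_t)}^2} - \EE{\norm{g_j(\theta_t)}^2} = \big(\norm{\nabla F_i(\theta_t)}^2 - \norm{\nabla F_j(\theta_t)}^2\big) + \big(\EE{\norm{e_i(\theta_t)}^2} - \EE{\norm{e_j(\theta_t)}^2}\big)$, which Assumption~\ref{assumption:gradient-difference} bounds by $2\zeta^2$ and Assumption~\ref{assumption:variance-difference} bounds by $\rho^2$; averaging over $j \neq i$ preserves this, so the skeleton is at most $C_t(2\zeta^2 + \rho^2)$. The overhead of a client $j \neq i$ enters with a minus sign and, by the observation above, $\EE{\norm{m_t^j}^2} - \EE{\norm{g_j(\theta_t)}^2} \geq 0$, so those terms may simply be discarded. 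It remains to bound the overhead of client $i$: using $(a_t^i)^2 - 1 = (a_t^i - 1)^2 + 2(a_t^i - 1)$ (valid since $a_t^i \geq 1$),
\[
    \EE{\norm{m_t^i}^2} - \EE{\norm{g_i(\theta_t)}^2} = \EE{\norm{(a_t^i - 1)g_i(\theta_t)}^2} + 2(a_t^i - 1)\EE{\norm{g_i(\theta_t)}^2} + (b_t^i)^2 .
\]
The first and last terms are each at most $\varepsilon^2$ by approximate truthfulness, and Cauchy--Schwarz bounds the middle term by $2\sqrt{\EE{\norm{(a_t^i - 1)g_i(\theta_t)}^2}}\,\sqrt{\EE{\norm{g_i(\theta_t)}^2}} \leq 2\varepsilon\sqrt{\norm{\nabla F_i(\theta_t)}^2 + \sigma^2} \leq 2\varepsilon\big(\norm{\nabla F_i(\theta_t)} + \sigma\big)$, where I use $\EE{\norm{e_i(\theta_t)}^2} \leq \sigma^2$. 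Hence the client-$i$ overhead is at most $2\varepsilon^2 + 2\varepsilon\sigma + 2\varepsilon\norm{\nabla F_i(\theta_t)}$.

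Combining, $\EE{p_t^i(\vec{m}_t)} \leq C_t\big(2\varepsilon^2 + 2\varepsilon\sigma + 2\zeta^2 + \rho^2\big) + 2\varepsilon C_t\norm{\nabla F_i(\theta_t)}$; substituting the value of $C_t$ (which scales like $\gamma_t L\sqrt{\mathcal{C}_t}/N$; see Theorem~\ref{theorem:eps-BIC}), summing over $t$, pulling out $G$, and simplifying numerical constants yields the stated bound, with the gradient norms read in expectation. The main obstacle is the cross term $2(a_t^i - 1)\EE{\norm{g_i(\theta_t)}^2}$: the overhead $\norm{m_t^i}^2 - \norm{g_i(\theta_t)}^2$ is not itself of size $\varepsilon^2$ — it contains a part \emph{linear} in $a_t^i - 1$ — so approximate truthfulness alone does not suffice and one must pass through Cauchy--Schwarz together with the crude bound $\EE{\norm{g_i(\theta_t)}^2} \leq \norm{\nabla F_i(\theta_t)}^2 + \sigma^2$; this is exactly what leaves $\sigma$ and the running gradient norms $\norm{\nabla F_i(\theta_t)}$ in the final estimate. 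A secondary pitfall is keeping the orientation of inequalities consistent — one needs an \emph{upper} bound on the client-$i$ message norm but a \emph{lower} bound on the client-$j$ message norms, and it is the latter that licenses dropping the subtracted sum.
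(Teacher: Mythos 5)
Your proposal is correct and takes essentially the same route as the paper's proof: both discard the other clients' (nonnegative) manipulation overhead using $\abs{a_t^j} \geq 1$, bound the honest difference $\EE{\norm{g_i(\theta_t)}^2} - \EE{\norm{g_j(\theta_t)}^2}$ by $2\zeta^2 + \rho^2$ via the two heterogeneity assumptions, and handle client $i$'s overhead by writing $(a_t^i)^2 - 1 = (a_t^i-1)^2 + 2(a_t^i-1)$ and applying Cauchy--Schwarz together with $\sqrt{\EE{\norm{g_i(\theta_t)}^2}} \leq \norm{\nabla F_i(\theta_t)} + \sigma$. Your ``skeleton plus overhead'' framing is only a cosmetic reorganization of the paper's direct expansion, and the cross-term subtlety you flag is precisely the step the paper also isolates.
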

\begin{proof}
    First, we consider a single time step:\footnote{We divide by $C_t$ to make the exposition more concise.}
    \begin{align*}
        \frac{p_t^i(\vec{m}_t)}{C_t}
        &=      \EE{\norm{\bar{a}_t^i g_i(\theta_t) + b^i_t \xi^i_t}^2 - \frac{1}{N - 1} \sum_{j \not= i} \norm{\bar{a}_t^j g_j(\theta_t) + b^j_t \xi^j_t}^2}\\
        &\leq   \EE{\norm{\bar{a}_t^i g_i(\theta_t)}^2 - \frac{1}{N - 1} \sum_{j \not= i} \norm{\bar{a}_t^j g_j(\theta_t)}^2} + \varepsilon^2\\
        &=      \EE{\norm{\bar{a}_t^i e_i(\theta_t)}^2} + \norm{\bar{a}_t^i \nabla F_i(\theta_t)}^2 - \frac{1}{N - 1} \left(\sum_{j \not= i} \EE{\norm{\bar{a}_t^j e_j(\theta_t)}^2} + \norm{\bar{a}_t^j \nabla F_j(\theta_t)}^2 \right) + \varepsilon^2\\
        &\leq   \EE{\left(\left(\bar{a}_t^i\right)^2 - 1\right) \norm{g_i(\theta_t)}^2} + 2\zeta^2 + \rho^2 + \varepsilon^2\\
        &=      \EE{\left(\bar{a}_t^i - 1\right)^2 \norm{g_i(\theta_t)}^2} + \EE{2\left(\bar{a}_t^i - 1\right) \norm{g_i(\theta_t)}^2} + 2\zeta^2 + \rho^2 + \varepsilon^2 \\
        &\leq   2\varepsilon^2 + 2 \varepsilon \sqrt{\EE{\norm{g_i(\theta_t)}^2}} + 2\zeta^2 + \rho^2 \\
        &\leq   2\varepsilon^2 + 2 \varepsilon \norm{\nabla F_i(\theta_t)} + 2 \varepsilon \sigma + 2\zeta^2 + \rho^2
    \end{align*}
    The second line uses the fact that $\xi_i^t$ is mean zero, variance one and independent of everything, alongside the assumption that all clients are approximately truthful. The third line is rearrangement. The forth line applies Assumption~\ref{assumption:gradient-difference} and~\ref{assumption:variance-difference}. The fifth line is rearrangement. The sixth applies the approximate truthfulness assumption $\EE{\norm{a_t^i g_i(\theta_t) - g_i(\theta_t)}^2} \leq \varepsilon^2$. Claim~\ref{claim:gradient-bound} yields the last inequality.

    Therefore, over all time steps we have:
    \begin{align*}
        \sum_{t = 1}^T p_t^i(\vec{m}_t)
        &\leq   \frac{\sqrt{2}L G}{N} \left[2\varepsilon^2 + 2 \varepsilon\sigma + 2\zeta^2 + \rho^2\right] + \frac{\sqrt{8}L\varepsilon}{N} \sum_{t = 1}^T \gamma_t \sqrt{\mathcal{C}_t} \norm{\nabla F_i(\theta_t)}.
    \end{align*}
\end{proof}
\section{Proof of Theorem~\ref{theorem:convergence-of-sgd}}\label{appendix:convergence}

What follows is the proof of Theorem~\ref{theorem:convergence-of-sgd}, which establishes the convergence rate in the approximately truthful setting. First, we mention a useful result due to~\cite{bottou2018optimization}, then we bound the variance of the aggregate gradient at each step (Claim~\ref{claim:bound-on-variance}). The full proof is at the end. We also give a slightly different bound using a result due to~\cite{chung1954stochastic}.

\subsection{Results from the literature}

\begin{lemma}[Equation 4.23, Theorem 4.7~\citep{bottou2018optimization}]\label{lemma:bottou}
    Let $F$ be a continuously differentiable function that is $H$-smooth and $m$-strongly-convex. Let $g(\theta)$ be a stochastic gradient of $F$ at $\theta$, such that $\EE{g(\theta)} = \nabla F(\theta)$. Suppose there exist scalars $M, M_V \geq 0$, such that $\Var{g(\theta_t)} \leq M + M_V \norm{\nabla F_i(\theta_t)}^2$ for every $t$. If we run SGD with $\gamma_t = \frac{\gamma}{\eta + t}$, where $\gamma > \frac{1}{m}$, $\eta > 0$ and $\gamma_1 \leq \frac{1}{H (M_V + 1)}$, then\footnote{In the original results, there is two additional variables $\mu$ and $\mu_G$. Here we can set them to $\mu_G = \mu = 1$, so we choose to simplify the write-up and ignore them.}
    \begin{equation}
        \EE{F(\theta_{t + 1}) - F(\theta^*)} \leq (1 - \gamma_tm) \EE{F(\theta_t) - F(\theta^*)} + \frac{\gamma_t^2HM}{2},
    \end{equation}
    and
    \begin{equation}
        \EE{F(\theta_{t}) - F(\theta^*)} \leq \max\left\{\frac{\gamma^2 H M}{2(\gamma m - 1)(\eta + t)}, \frac{(\eta + 1) (F(\theta_1) - F(\theta^*))}{\eta + t}\right\}.
    \end{equation}
\end{lemma}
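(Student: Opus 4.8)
The plan is to establish the two displayed inequalities in turn: the first is the standard one-step descent estimate for SGD on a smooth, strongly convex objective, and the second is obtained by unrolling the resulting scalar recursion for the step-size schedule $\gamma_t = \gamma/(\eta+t)$. This is essentially the argument of \citet{bottou2018optimization}, specialized to the case $\mu = \mu_G = 1$, which is why those constants do not appear in the statement.

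For the one-step bound I would start from $H$-smoothness of $F$ evaluated along the update $\theta_{t+1} = \theta_t - \gamma_t g(\theta_t)$, giving
\[
    F(\theta_{t+1}) \le F(\theta_t) - \gamma_t\inner{\nabla F(\theta_t)}{g(\theta_t)} + \frac{H\gamma_t^2}{2}\norm{g(\theta_t)}^2 .
\]
Taking the expectation conditioned on $\theta_t$ and using unbiasedness $\EE{g(\theta_t)} = \nabla F(\theta_t)$ turns the inner-product term into $-\gamma_t\norm{\nabla F(\theta_t)}^2$, while the bias/variance decomposition together with the variance hypothesis gives $\EE{\norm{g(\theta_t)}^2} = \norm{\nabla F(\theta_t)}^2 + \Var{g(\theta_t)} \le (1+M_V)\norm{\nabla F(\theta_t)}^2 + M$. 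Collecting terms,
\[
    \EE{F(\theta_{t+1})} \le F(\theta_t) - \gamma_t\Bigl(1 - \tfrac{H\gamma_t(1+M_V)}{2}\Bigr)\norm{\nabla F(\theta_t)}^2 + \frac{H\gamma_t^2 M}{2} .
\]
The admissibility condition $\gamma_t \le \gamma_1 \le \tfrac1{H(M_V+1)}$ makes the bracket at least $\tfrac12$, so the negative term is bounded by $-\tfrac{\gamma_t}{2}\norm{\nabla F(\theta_t)}^2$, and the standard consequence of $m$-strong-convexity, $\norm{\nabla F(\theta_t)}^2 \ge 2m\bigl(F(\theta_t) - F(\theta^*)\bigr)$, converts this into $-\gamma_t m\bigl(F(\theta_t) - F(\theta^*)\bigr)$. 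Taking total expectations yields the first inequality.

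For the rate, set $a_t = \EE{F(\theta_t) - F(\theta^*)}$, so that $a_{t+1} \le (1-\gamma_t m)a_t + \tfrac{H\gamma_t^2 M}{2}$ with $\gamma_t = \gamma/(\eta+t)$ and $\gamma m > 1$. I would prove by induction on $t$ that $a_t \le \nu/(\eta+t)$ with $\nu = \max\bigl\{\tfrac{\gamma^2 H M}{2(\gamma m - 1)},\,(\eta+1)a_1\bigr\}$; the base case $t=1$ is immediate from the definition of $\nu$. For the inductive step, writing $k = \eta+t$ and substituting the hypothesis gives $a_{t+1} \le \tfrac{(k-1)\nu}{k^2} + \tfrac1{k^2}\bigl(\tfrac{H\gamma^2 M}{2} - (\gamma m-1)\nu\bigr)$; the choice $\nu \ge \tfrac{\gamma^2 H M}{2(\gamma m-1)}$ makes the second term non-positive, and $(k-1)(k+1)\le k^2$ gives $\tfrac{(k-1)\nu}{k^2}\le\tfrac{\nu}{k+1}$, closing the induction. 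Expanding $\nu$ then reproduces exactly the stated maximum.

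There is no serious obstacle here beyond bookkeeping: one must invoke $\gamma_1 \le \tfrac1{H(M_V+1)}$ precisely where the $\tfrac{H\gamma_t^2}{2}\norm{g(\theta_t)}^2$ contribution is absorbed into half of the descent term, use $\gamma m > 1$ to keep $\gamma m - 1 > 0$ throughout the induction, and take $\nu$ as the maximum of the ``noise floor'' $\tfrac{\gamma^2 H M}{2(\gamma m-1)}$ and the rescaled initial gap $(\eta+1)a_1$ so that both are dominated at every step. The only genuinely non-mechanical point is recognizing that the recursion should be closed with the ansatz $a_t \le \nu/(\eta+t)$ rather than a looser exponential-in-$t$ bound, since that ansatz is what yields the $O(1/t)$ rate; everything else is routine algebra.
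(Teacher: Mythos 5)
Your proof is correct; the paper itself gives no proof of this lemma, importing it verbatim from \citet{bottou2018optimization}, and your argument is exactly the standard one from that source (smoothness descent plus the variance bound and the Polyak--{\L}ojasiewicz inequality $\norm{\nabla F(\theta_t)}^2 \ge 2m(F(\theta_t)-F(\theta^*))$ for the one-step bound, then the $a_t \le \nu/(\eta+t)$ induction). The only bookkeeping point worth making explicit is that substituting the inductive hypothesis into $(1-\gamma_t m)a_t$ requires $1-\gamma_t m \ge 0$, which follows from $\gamma_1 \le \tfrac{1}{H(M_V+1)}$ together with $m \le H$.
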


\begin{lemma}[Lemma 1~\citep{chung1954stochastic}]\label{lemma:chung}
    Suppose that $\{b_n\}_{n \in \NN}$ is a sequence of real numbers such that for $n \geq n_0$,
    \[
        b_{n + 1} \leq \left(1 - \frac{c}{n}\right) b_n + \frac{c_1}{n^{p + 1}}
    \]
    where $c > p > 0$, $c_1 > 0$. Then
    \[
        b_n \leq \frac{c_1}{c - p} \frac{1}{n^p} + O\left(\frac{1}{n^{p + 1}} + \frac{1}{n^c}\right).
    \]
\end{lemma}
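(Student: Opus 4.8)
The plan is to peel off the conjectured main term and reduce to a recursion whose forcing decays one power of $n$ faster, then unroll. Write $a_n := \frac{c_1}{(c-p)\,n^p}$ and $d_n := b_n - a_n$; since $a_n = \Theta(n^{-p})$, the assertion is precisely that $d_n = O(n^{-p-1} + n^{-c})$. The coefficient $\frac{c_1}{c-p}$ is not arbitrary: the ansatz $a_n = \kappa n^{-p}$ fed into the \emph{equality} version of the recursion matches the $n^{-p-1}$ terms exactly when $\kappa(c-p) = c_1$. I may also assume without loss of generality that $b_n \ge 0$ (this is how the lemma is applied, with $b_n = \EE{F(\theta_n) - F(\theta^*)}$) and that $n_0 > c$, since discarding the finitely many indices with $1 - c/n \le 0$ only changes the base value and finitely many forcing terms by constants.

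First I would substitute $b_n = d_n + a_n$ into the hypothesis to obtain
\[
    d_{n+1} \le \Bigl(1 - \tfrac{c}{n}\Bigr) d_n + \delta_n, \qquad
    \delta_n := \Bigl(1 - \tfrac{c}{n}\Bigr) a_n + \frac{c_1}{n^{p+1}} - a_{n+1}.
\]
Then I would expand $\delta_n$ using $\bigl(1-\tfrac{c}{n}\bigr)a_n = a_n - \frac{c\,c_1}{(c-p)n^{p+1}}$ together with the Taylor expansion $(n+1)^{-p} = n^{-p} - p\,n^{-p-1} + O(n^{-p-2})$, which gives $a_{n+1} = a_n - \frac{p\,c_1}{(c-p)n^{p+1}} + O(n^{-p-2})$. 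The $n^{-p-1}$ contributions cancel identically (this is the sole purpose of the constant $\frac{c_1}{c-p}$), leaving $\delta_n = O(n^{-p-2})$; in fact $\delta_n \le 0$ for $n$ large. This cancellation is the crux of the argument: if the residual forcing were merely $O(n^{-p-1})$, unrolling would yield only $d_n = O(n^{-p})$, the same order as $a_n$ and hence useless.

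With the fast-forcing recursion in hand I would unroll it. Setting $\Pi_k^n := \prod_{j=k}^{n-1}(1 - c/j)$ (so $\Pi_n^n = 1$), for $n > n_0$,
\[
    d_n \le \Pi_{n_0}^n \, d_{n_0} + \sum_{k=n_0}^{n-1} \Pi_{k+1}^n \, \delta_k .
\]
Using $\log(1-c/j) = -c/j + O(1/j^2)$ and $\sum_{j=k}^{n-1} 1/j = \log(n/k) + O(1/k)$ gives $\Pi_k^n = \Theta\bigl((k/n)^c\bigr)$ uniformly for $n_0 \le k \le n$, so the homogeneous term is $O(n^{-c})$ and the sum is $O\bigl(n^{-c}\sum_{k=n_0}^{n-1} k^{c-p-2}\bigr)$. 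When $c > p+1$ this is $O(n^{-p-1})$; when $c < p+1$ the tail converges and it is $O(n^{-c})$; at the borderline $c = p+1$ the sum is $O(\log n)$, handled either by carrying the explicit $O(n^{-p-3})$ next term of $\delta_n$ or simply by noting this single value is immaterial for the applications. In all cases $d_n = O(n^{-p-1} + n^{-c})$, which is the claim. (An alternative to the unrolling is a direct induction proving $b_n \le a_n + A\,n^{-\min(p+1,c)}$ for a sufficiently large constant $A$, using the same expansion; the main obstacle — the order-one gain in $\delta_n$ — is identical.)
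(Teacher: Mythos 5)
The paper does not prove this statement at all: it is imported verbatim as Lemma~1 of \citet{chung1954stochastic} and used as a black box in Appendix~\ref{appendix:convergence}. So there is no in-paper argument to compare against; your proposal supplies a proof the authors omit, and in substance it is a correct and standard one. The ansatz subtraction $d_n = b_n - \frac{c_1}{(c-p)n^p}$, the exact cancellation of the $n^{-p-1}$ terms (the coefficients combine to $1 - \frac{c}{c-p} + \frac{p}{c-p} = 0$), the reduction to $n \geq n_0' > c$ so that all factors $1-c/j$ are positive and the unrolling preserves inequalities, the estimate $\Pi_k^n = \Theta((k/n)^c)$, and the three-way case split on $c$ versus $p+1$ are all sound.

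The one point that needs tightening is the borderline $c = p+1$, where your convolution bound yields $O(n^{-p-1}\log n)$ rather than the claimed $O(n^{-p-1})$. Your first suggested repair --- carrying the next term of $\delta_n$ --- does not work as described: refining the ansatz to $a_n = \kappa_1 n^{-p} + \kappa_2 n^{-p-1}$ requires solving $\kappa_2(c-p-1) = \kappa_1 p(p+1)/2$, which is exactly degenerate at $c=p+1$ (a resonance), and the residual $O(n^{-p-2})$ forcing is not removable this way. Your second suggestion (``immaterial for the applications'') is not a proof. However, your parenthetical remark that $\delta_n \le 0$ for large $n$ is the actual fix and should be promoted to the main line: expanding one order further gives $\delta_n = -\frac{c_1 p(p+1)}{2(c-p)} n^{-p-2} + O(n^{-p-3})$, whose leading coefficient is strictly negative, so for $n \geq n_1$ one has $d_{n+1} \leq (1-c/n)d_n$ and hence $\max(d_n,0) = O(n^{-c})$ directly, in every case including $c = p+1$. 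With that observation made load-bearing, the argument is complete; had the sign of that resonant coefficient been positive, the stated bound would genuinely acquire a logarithm at $c=p+1$, so the sign check is not optional.
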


\subsection{The proof}

\begin{claim}[Bound on the variance of the aggregated gradient]\label{claim:bound-on-variance}
    Suppose that there exist scalars $M, M_V \geq 0$, such that for every $t$ we have: \[\EE{\norm{e_i(\theta_t)}^2} \leq M + M_V \norm{\nabla F(\theta_t)}^2.\] If all participants are approximately truthful at each time step $t$, then:
    \begin{align*}
        \Var{\frac{1}{N} \sum_{i = 1}^n a_t^i g_i(\theta_t) + b^i_t \xi^i_t}
        &\leq   \frac{2 \left(2\varepsilon^2 + M + 2 M_V \zeta^2\right)}{N} + \frac{2 M_V}{N}\norm{\nabla F(\theta_t)}^2,
    \end{align*}
    where for a stochastic gradient $g$ we use $\Var{g} = \EE{\norm{g - \EE{g}}^2}$ for the variance of the squared $\ell_2$ norm of the error of $g$ with respect to its mean $\EE{g}$.
\end{claim}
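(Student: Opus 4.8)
The plan is to compute the variance of the aggregated update exactly as a sum of independent per-client contributions, and then bound each contribution using the approximate-truthfulness constraints together with the assumed second-moment bound on $e_i$ and Assumption~\ref{assumption:gradient-difference}.

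First I would identify the mean of $\bar m_t = \frac1N\sum_{i=1}^N\left(a_t^i g_i(\theta_t) + b_t^i \xi_t^i\right)$. Since $\EE{\xi_t^i}=0$ and $\EE{g_i(\theta_t)} = \nabla F_i(\theta_t)$, we get $\EE{\bar m_t} = \frac1N\sum_i a_t^i \nabla F_i(\theta_t)$, so the centered update equals $\frac1N\sum_i a_t^i e_i(\theta_t) + \frac1N\sum_i b_t^i \xi_t^i$. Because different clients draw fresh samples, the $e_i(\theta_t)$ are mutually independent and mean zero, and each $\xi_t^i$ is independent of everything with $\EE{\norm{\xi_t^i}^2}=1$; hence every cross term vanishes in expectation and $\Var{\bar m_t} = \frac1{N^2}\sum_{i=1}^N\left((a_t^i)^2\EE{\norm{e_i(\theta_t)}^2} + (b_t^i)^2\right)$.

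Next I would bound the per-client quantity $(a_t^i)^2\EE{\norm{e_i(\theta_t)}^2} + (b_t^i)^2$. The noise part is immediate: $(b_t^i)^2 \le \varepsilon^2$ by approximate truthfulness (Definition~\ref{definition:truthful-reporting}). The scaling part is the crux, since $a_t^i$ is a priori unbounded; the idea is the split $(a_t^i)^2 \le 2 + 2(a_t^i-1)^2$, which gives $(a_t^i)^2\EE{\norm{e_i(\theta_t)}^2} \le 2\EE{\norm{e_i(\theta_t)}^2} + 2(a_t^i-1)^2\EE{\norm{e_i(\theta_t)}^2}$. For the first summand I would apply the hypothesis $\EE{\norm{e_i(\theta_t)}^2} \le M + M_V\norm{\nabla F_i(\theta_t)}^2$ and then Assumption~\ref{assumption:gradient-difference} in the form $\norm{\nabla F_i(\theta_t)}^2 \le \norm{\nabla F(\theta_t)}^2 + \zeta^2$ — this is precisely where the $\zeta^2$ term enters. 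For the second summand I would use that $\EE{\norm{a_t^i g_i(\theta_t) - g_i(\theta_t)}^2} = (a_t^i-1)^2\bigl(\norm{\nabla F_i(\theta_t)}^2 + \EE{\norm{e_i(\theta_t)}^2}\bigr) \le \varepsilon^2$, so in particular $(a_t^i-1)^2\EE{\norm{e_i(\theta_t)}^2} \le \varepsilon^2$.

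Putting these together, the per-client quantity is at most $2\bigl(2\varepsilon^2 + M + 2M_V\zeta^2 + M_V\norm{\nabla F(\theta_t)}^2\bigr)$ after absorbing the $(b_t^i)^2$ term and slightly loosening the constants; summing over the $N$ clients and dividing by $N^2$ then gives the stated bound. The main obstacle is exactly the unbounded scaling factor $a_t^i$ in front of $\EE{\norm{e_i(\theta_t)}^2}$: once one writes $a_t^i e_i(\theta_t) = e_i(\theta_t) + (a_t^i-1)e_i(\theta_t)$, everything is reduced to quantities that the approximate-truthfulness constraint and Assumption~\ref{assumption:gradient-difference} control directly, with no further estimates required. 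For the mixed- and history-dependent-strategy variants one additionally needs $a_t^i,b_t^i$ to be independent of, resp.\ measurable with respect to, the appropriate history so that the variance still decomposes over clients, but that is exactly what those extensions assume.
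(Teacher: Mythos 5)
Your proposal is correct and follows essentially the same route as the paper's proof: decompose the variance across clients using independence of the per-client noise, bound the $\xi$-contribution by $\varepsilon^2/N$, split $(a_t^i)^2 \leq 2 + 2(a_t^i-1)^2$ so that approximate truthfulness controls $(a_t^i-1)^2\,\mathbb{E}[\|e_i(\theta_t)\|^2] \leq \varepsilon^2$, and finish with the second-moment hypothesis plus Assumption~\ref{assumption:gradient-difference}. No gaps.
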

\begin{proof}
    Observe the following series of inequalities. Recall that $\xi^i_t$ is mean zero and variance one and is independent of everything, so we can quite easily take care of the noise terms.
    \begin{align*}
        \Var{\frac{1}{N} \sum_{i = 1}^N a_t^i g_i(\theta_t)}
        &=      \EE{\norm{\frac{1}{N} \sum_{i = 1}^N a_t^i g_i(\theta_t) + b^i_t \xi^i_t}^2} - \norm{\EE{\frac{1}{N} \sum_{i = 1}^N a_t^i g_i(\theta_t) + b^i_t \xi^i_t}}^2 \\
        &=      \frac{1}{N^2} \EE{\norm{\sum_{i = 1}^N a_t^i e_i(\theta_t) + \sum_{i = 1}^N a_t^i \nabla F_i(\theta_t)}^2} + \frac{\varepsilon^2}{N} - \frac{1}{N^2} \norm{\sum_{i = 1}^N a_t^i \nabla F_i(\theta_t)}^2 \\
        &=      \frac{1}{N^2} \sum_{i = 1}^N \EE{\norm{a_t^i e_i(\theta_t)}^2} + \frac{1}{N^2} \norm{\sum_{i = 1}^N a_t^i \nabla F_i(\theta_t)}^2 - \frac{1}{N^2} \norm{\sum_{i = 1}^N a_t^i \nabla F_i(\theta_t)}^2 + \frac{\varepsilon^2}{N} \\
        &\leq   \frac{3\varepsilon^2}{N} + \frac{2}{N^2} \sum_{i = 1}^N \EE{\norm{e_i(\theta_t)}^2} \\
        &\leq   \frac{2\left(2\varepsilon^2 + M\right)}{N} + \frac{2 M_V}{N^2} \sum_{i = 1}^N \norm{\nabla F_i(\theta_t)}^2 \\
        &\leq   \frac{2 \left(2\varepsilon^2 + M + M_V \zeta^2\right)}{N} + \frac{2 M_V}{N}\norm{\nabla F(\theta_t)}^2
    \end{align*}
\end{proof}

\begin{proof}[Proof of Theorem~\ref{theorem:convergence-of-sgd}]
    Once we have Claim~\ref{claim:bound-on-variance} under out belt, we invoke Lemma~\ref{lemma:bottou} with $M = \frac{2(2\varepsilon^2 + M + M_V \zeta^2)}{N}$, $M_V = \frac{2 M_V}{N}$, and $\gamma_t = \frac{\gamma}{\eta + t} = \frac{4}{m(\eta + t)}$, where $\eta = \frac{4H (2 M_V + N)}{mN}$. This yields:
    \begin{align*}
        \EE{F(\theta_{t}) - F(\theta^*)} 
        &\leq   \max\left\{\frac{16 H (2\varepsilon^2 + M + M_V \zeta^2)}{3 N m^2(\eta + t)}, \frac{(\eta + 1)(F(\theta_1) - F(\theta^*))}{\eta + t} \right\}.
    \end{align*}
\end{proof}

\begin{proof}[Alternative bound for Theorem~\ref{theorem:convergence-of-sgd}]
    Once we have Claim~\ref{claim:bound-on-variance} under out belt, we invoke Lemma~\ref{lemma:bottou} with $M = \frac{2(2\varepsilon^2 + M + M_V \zeta^2)}{N}$, $M_V = \frac{2 M_V}{N}$, and $\gamma_t = \frac{\gamma}{\eta + t} = \frac{4}{m(\eta + t)}$, where $\eta = \frac{4H (2 M_V + N)}{mN}$. This yields:
    \begin{align*}
        \EE{F(\theta_{t + 1}) - F(\theta^*)}
        &\leq \left(1 - \frac{4}{\eta + t}\right) \EE{F(\theta_t) - F(\theta^*)} + \frac{16 H (2\varepsilon^2 + M + M_V \zeta^2)}{N m^2 (\eta + t)^2}.
    \end{align*}

    Finally, we use Lemma~\ref{lemma:chung} with $p = 1$, $c = 4$ and $c_1 = \frac{16 H (\varepsilon^2 + M)}{N m^2}$, to get:
    \begin{align*}
        \EE{F(\theta_t) - F(\theta^*)} \leq \frac{16 H (2\varepsilon^2 + M + M_V \zeta^2)}{3N m^2(\eta + t)} + O\left(\frac{1}{t^2} + \frac{1}{t^4}\right).
    \end{align*}
\end{proof}
\section{Extending the action space}\label{appendix:extending-action-space}
The generalizations of our main results follow from the main analysis with very few modifications. The proofs below state what changes need to be made and note any peculiarities that emerge from generalizing the analysis.

\begin{proof}[Proof of Claim~\ref{claim:mixed-strategy}]
    In all equations in Appendices~\ref{appendix:approximate-truthfulness},~\ref{appendix:payment} and~\ref{appendix:convergence} the scaling factor $a^i_t$ always appears under the expectation sign. We need to make sure that in all proofs we substitute $\abs{\bar{b}^i_t}$ and $(\bar{b}^i_t)^2$ with $\sqrt{\EE{(\bar{b}^i_t)^2}}$ and $\EE{(\bar{b}^i_t)^2}$. Then the results follow quite directly for mixed strategies that are independent of the stochasticity in the learning protocol.
\end{proof}

\begin{proof}[Proof of Claim~\ref{claim:history-dependent-strategy}]
We can allow $\bar{a}^i_t$ to depend on the history of the protocol, that is $\bar{a}^i_t$ can be a random variable bounded below by 1 almost surely that depends on the previous model parameters $\{\theta_j\}_{j = 1}^t$ and the previous gradients $\{g_i(\theta_1)\}_{j = 1}^{t - 1}$ sent by client $i$. In particular, the important fact for the proofs above is that $\bar{a}^i_t$ is independent of the randomness in $g_i(\theta_t)$ and that $\bar{a}^i_t \geq 1$ almost surely. The same goes for the DP noise $\bar{b}^i_t$ as long as it's independent of $g_i(\theta_t)$ and $\bar{a}^i_t$. In particular, notice that in the results in Claim~\ref{claim:bound-on-payment-per-turn-hold-prev-fixed} and Claim~\ref{claim:bound-reward-total-at-time-T-plus-1} the expectation on the right-hand side is taken just on the random variable $g_i(\theta_t)$ (which depends on the history of the protocol), so these bounds hold in expectation over the gradient that is sampled at time step $t$.
\end{proof}

\begin{proof}[Proof of Claim~\ref{claim:allowing-angles}]
    Recall that $\norm{h^i_t} = a^t_i \norm{g_i(\theta_t)}$ and $\inner{h^i_t}{g_i(\theta_t)} \geq \norm{g_i(\theta_t)}^2$ by the construction in Claim~\ref{claim:allowing-angles}. Moreover, Cauchy-Schwartz gives $\inner{h^i_t}{g_i(\theta_t)} \leq a^i_t \norm{g_i(\theta_t)}^2$, so
    \[
        \norm{g_i(\theta_t)}^2 \leq \inner{h^i_t}{g_i(\theta_t)} \leq a^i_t \norm{g_i(\theta_t)}^2.
    \]
    Now throughout our analysis in Appendices~\ref{appendix:approximate-truthfulness},~\ref{appendix:payment} and~\ref{appendix:convergence} we need to substitute $\bar{a}^i_t g_i(\theta_t)$ with $h^i_t$. Note that above we used $a^t_i$ to denote the strategy used under the statement of Claim~\ref{claim:allowing-angles}, while here $\bar{a}^i_t$ is the scaling factor used in our analysis in Appendix~\ref{appendix:approximate-truthfulness}. We only need to make sure that in Claim~\ref{claim:bound-on-payment-per-turn-hold-prev-fixed} the right-hand side remains as $C_t \EE{\norm{g_i(\theta_t)}^2 - \norm{h^i_t}}$, without bounding that quantity with something equivalent to $- C_t (\bar{a}_t^i - 1)^2 \EE{\norm{g_i(\theta_t)}^2}$.
\end{proof}

\section{Bound on stochastic gradient norm}\label{appendix:gradient-norm}

\begin{claim}\label{claim:gradient-bound}
    Let $F$ and $g$ satisfy the same conditions as $F_i$ and $g_i$ from Section~\ref{subsection:fl-setup}. Let $\theta^* \in \Theta$ be a minimizer of $F$. For any $\theta \in \Theta$ we have
    \[
        \EE{\norm{g_i(\theta)}} \leq \sqrt{\EE{\norm{g_i(\theta)}^2}} \leq \sqrt{\EE{\norm{g_i(\theta)}^2}} \leq \norm{\nabla F_i(\theta)} + \sigma.
    \]
\end{claim}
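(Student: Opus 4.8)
The statement to prove is the chain $\EE{\norm{g_i(\theta)}} \leq \sqrt{\EE{\norm{g_i(\theta)}^2}} \leq \norm{\nabla F_i(\theta)} + \sigma$ (the middle inequality in the displayed statement appears to be a typo, with the two copies of $\sqrt{\EE{\norm{g_i(\theta)}^2}}$ meant to be related by an intermediate quantity; I will simply establish the genuine two-step bound). The plan is elementary: the first inequality is Jensen's inequality (equivalently Cauchy--Schwarz, or the fact that $\operatorname{Var}$ is nonnegative) applied to the random variable $\norm{g_i(\theta)}$, giving $\left(\EE{\norm{g_i(\theta)}}\right)^2 \leq \EE{\norm{g_i(\theta)}^2}$. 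For the second inequality, I would start from the decomposition $g_i(\theta) = \nabla F_i(\theta) + e_i(\theta)$, where $e_i(\theta) = g_i(\theta) - \nabla F_i(\theta)$ is the gradient noise, which by the assumptions in Section~\ref{subsection:fl-setup} satisfies $\EE{e_i(\theta)} = 0$ and $\EE{\norm{e_i(\theta)}^2} \leq \sigma^2$.

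The key computation is then
\[
    \EE{\norm{g_i(\theta)}^2}
    = \EE{\norm{\nabla F_i(\theta) + e_i(\theta)}^2}
    = \norm{\nabla F_i(\theta)}^2 + 2 \inner{\nabla F_i(\theta)}{\EE{e_i(\theta)}} + \EE{\norm{e_i(\theta)}^2}
    = \norm{\nabla F_i(\theta)}^2 + \EE{\norm{e_i(\theta)}^2},
\]
where the cross term vanishes because $\EE{e_i(\theta)} = 0$ and $\nabla F_i(\theta)$ is deterministic. Bounding $\EE{\norm{e_i(\theta)}^2} \leq \sigma^2$ gives $\EE{\norm{g_i(\theta)}^2} \leq \norm{\nabla F_i(\theta)}^2 + \sigma^2$, and taking square roots together with the subadditivity of the square root, $\sqrt{x + y} \leq \sqrt{x} + \sqrt{y}$ for nonnegative $x, y$, yields $\sqrt{\EE{\norm{g_i(\theta)}^2}} \leq \norm{\nabla F_i(\theta)} + \sigma$. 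Chaining with the Jensen step completes the proof.

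There is essentially no obstacle here; the only mild subtlety is making sure the variance/noise assumptions quoted from Section~\ref{subsection:fl-setup} are invoked correctly (in particular that $\sigma^2$ bounds $\EE{\norm{e_i(\theta)}^2}$ uniformly in $\theta$, which is exactly what is assumed there), and that the hypotheses of the claim — that $F$ and $g$ play the roles of $F_i$ and $g_i$ — license using unbiasedness and bounded variance. The strong convexity, smoothness, and the minimizer $\theta^*$ are not actually needed for this particular bound; they are presumably listed only because the claim is stated for a generic $(F, g)$ pair matching the standing assumptions, so I would not use them.
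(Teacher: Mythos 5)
Your proposal is correct and follows essentially the same route as the paper's own proof: the bias--variance decomposition $g_i(\theta) = \nabla F_i(\theta) + e_i(\theta)$ with the cross term vanishing by unbiasedness, the bound $\EE{\norm{e_i(\theta)}^2} \leq \sigma^2$, and Jensen plus subadditivity of the square root to conclude. Your reading of the repeated middle term as a typo and your observation that strong convexity, smoothness, and $\theta^*$ are not needed are both consistent with the paper's argument.
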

\begin{proof}
    Let $e_i(\theta) = g_i(\theta) - \nabla F_i(\theta)$ be the gradient noise. Observe:
    \begin{align*}
        \EE{\norm{g_i(\theta)}^2}
        &=  \EE{\norm{\nabla F_i(\theta) + e_i(\theta)}^2} \\
        &=  \EE{\norm{\nabla F_i(\theta)}^2} + 2\EE{\inner{\nabla F_i(\theta)}{e_i(\theta)}} + \EE{\norm{e_i(\theta)}^2} \\
        &=  \norm{\nabla F_i(\theta)}^2 + \EE{\norm{e_i(\theta)}^2} \\
        &\leq    \norm{\nabla F_i(\theta)}^2 + \sigma^2
    \end{align*}
    The third line uses the fact that $\EE{e_i(\theta)} = 0$ for any fixed $\theta \in \Theta$. Finally, because $\sqrt{\cdot}$ is concave and $\norm{g_i(\theta)}^2$ is a non-negative random variable, Jensen's inequality finishes the claim.
\end{proof}
\newpage
\section{Additional experimental results}\label{appendix:experiments}

\subsection{Technical details for experiments}

The LEAF datasets were downloaded using \url{https://github.com/TalwalkarLab/leaf/}~\citep{caldas2018leaf}. Each experiment run was performed on a single GPU with 48G of memory.

\subsection{Results with FedSGD with coordinate-wise median aggregation}

\begin{figure*}[htbp]
    \centering
    \subfloat[FeMNIST dataset]
    {
        \includegraphics[width=0.3\textwidth, trim=0mm 0mm 15mm 14mm, clip]{./images/femnist-b1=0.0-ls=1_median.eps}
        \label{figure:fedsgd-median-all-femnist}
    }
    \hfill
    \subfloat[Shakespeare dataset]
    {
        \includegraphics[width=0.3\textwidth, trim=0mm 0mm 15mm 14mm, clip]{./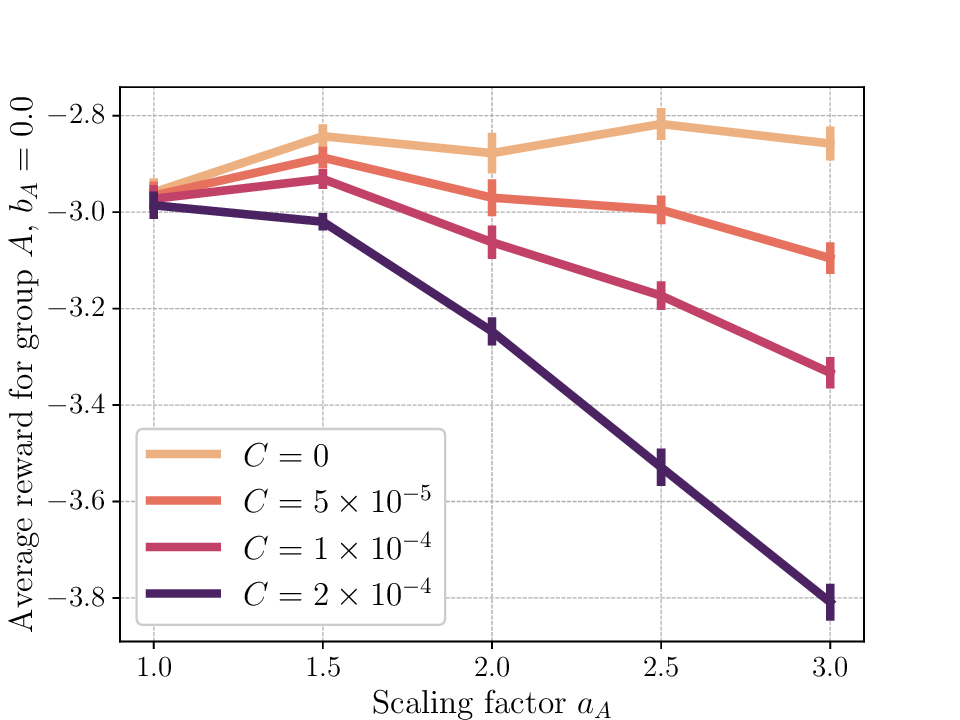}
        \label{figure:fedsgd-median-all-shakespear}
    }
    \hfill
    \subfloat[Twitter/Sent140 dataset]
    {
        \includegraphics[width=0.3\textwidth, trim=0mm 0mm 15mm 14mm, clip]{./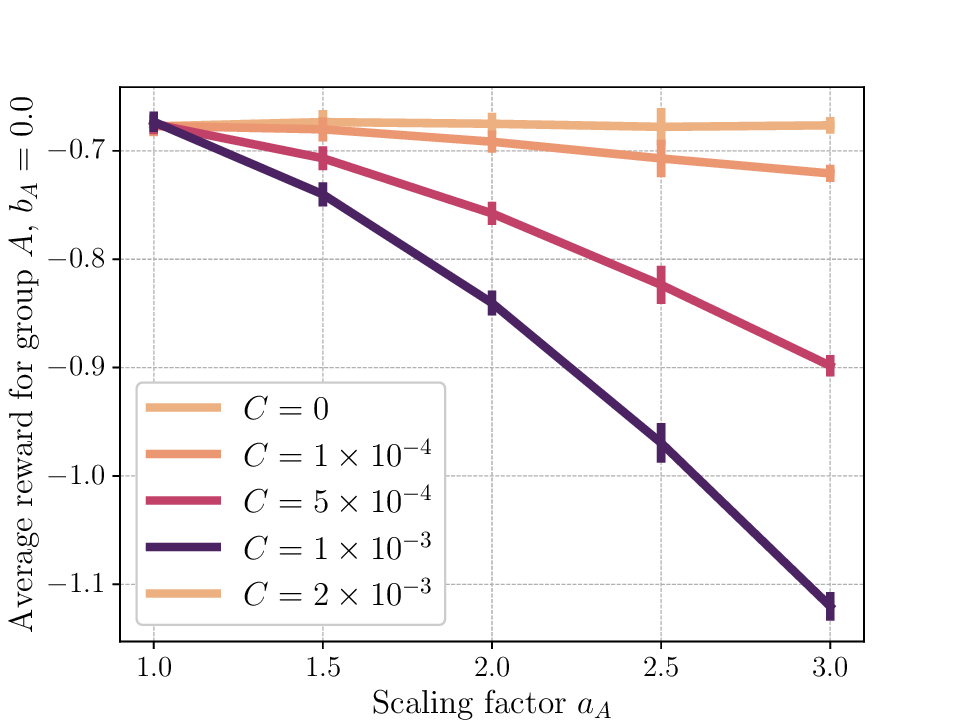}
        \label{figure:fedsgd-median-all-twitter}
    }
    \caption{Experiments with FedSGD with coordinate-wise median aggregation. The $x$ axis is the scaling factor $a_A$, and the $y$ axis is the utility for the client who misreports. Noise level is set to zero. All other clients are truthful.}
    \label{figure:fedsgd-median-all}
\end{figure*}

\subsection{Results with FedAvg}

\begin{figure*}[htbp]
    \centering
    \subfloat[FeMNIST dataset]
    {
        \includegraphics[width=0.3\textwidth, trim=0mm 0mm 15mm 14mm, clip]{./images/femnist-b1=0.0-ls=3.eps}
        \label{figure:fedavg-all-femnist}
    }
    \hfill
    \subfloat[Shakespeare dataset]
    {
        \includegraphics[width=0.3\textwidth, trim=0mm 0mm 15mm 14mm, clip]{./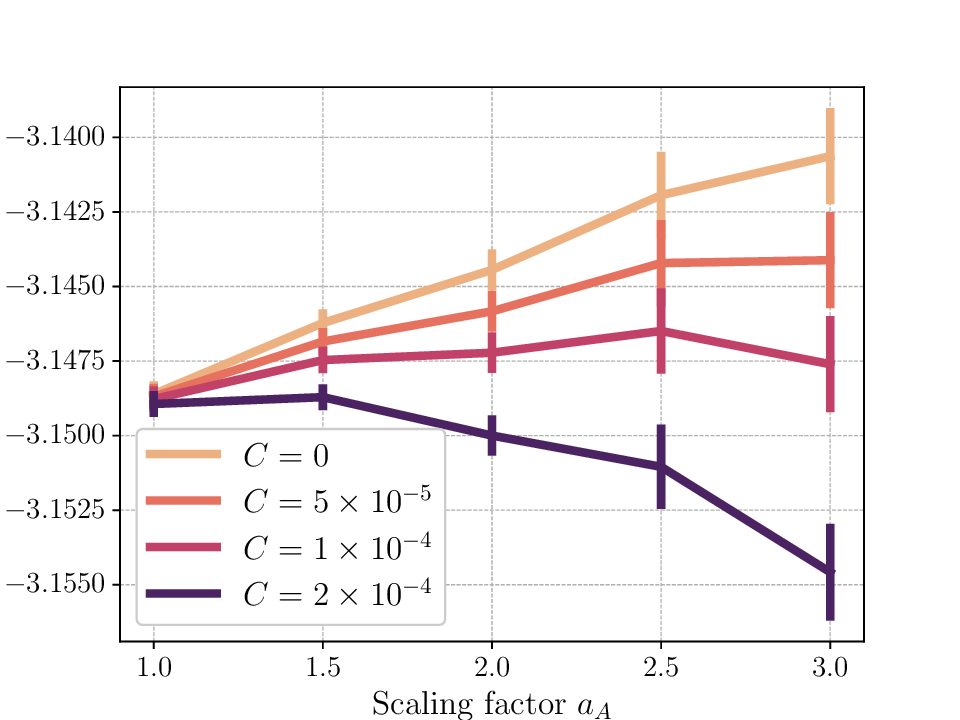}
        \label{figure:fedavg-all-shakespear}
    }
    \hfill
    \subfloat[Twitter/Sent140 dataset]
    {
        \includegraphics[width=0.3\textwidth, trim=0mm 0mm 15mm 14mm, clip]{./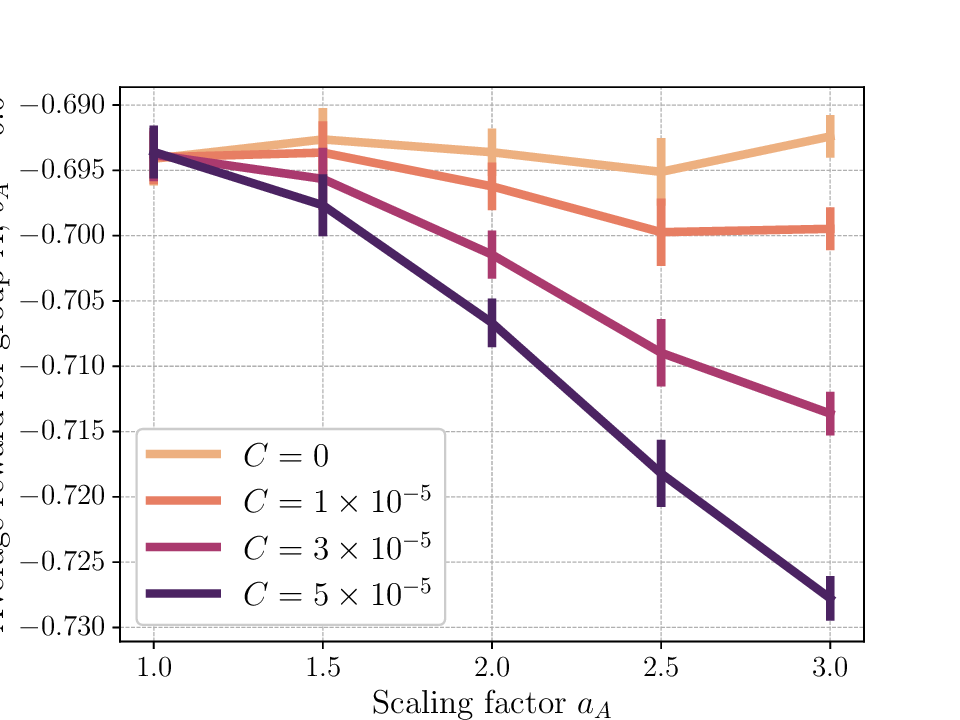}
        \label{figure:fedavg-all-twitter}
    }
    \caption{Experiments with FedAvg. The $x$ axis is the scaling factor $a_A$, and the $y$ axis is the utility for the misreporting client in group $A$. Noise level is set to zero. All other clients are truthful.}
    \label{figure:fedavg-all}
\end{figure*}

\newpage
\subsection{Results with FedSGD with different noise levels}

\begin{figure*}[!htbp]
    \centering
    \subfloat[$a_A = 1.0$]
    {
        \includegraphics[width=0.3\textwidth, trim=0mm 0mm 15mm 14mm, clip]{./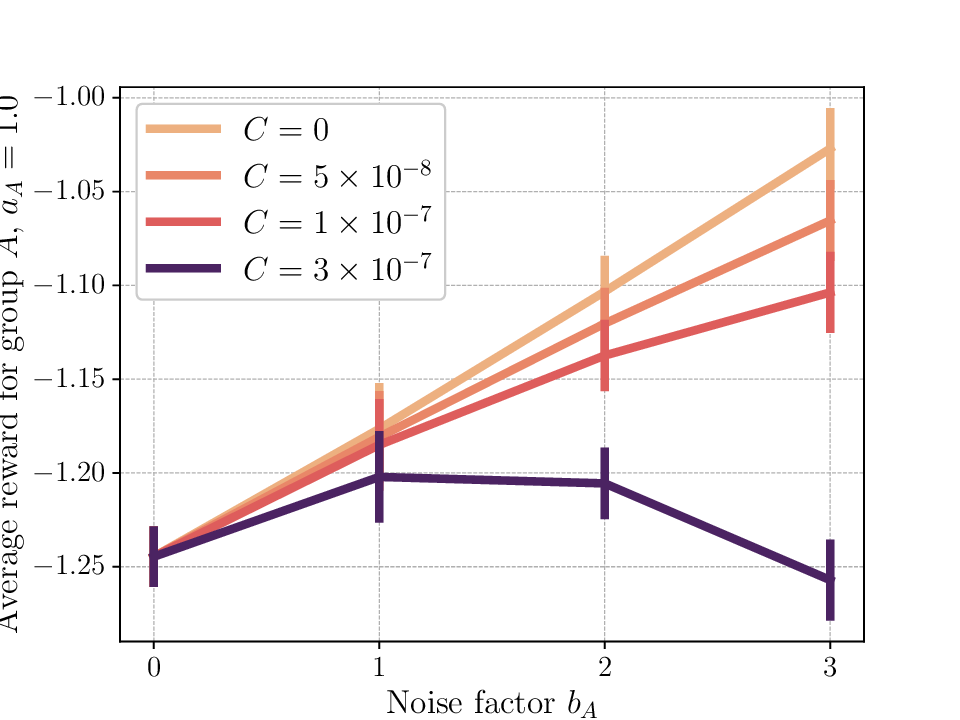}
        \label{figure:fedsgd-femnist-noise-1}
    }
    \hfill
    \subfloat[$a_A = 2.0$]
    {
        \includegraphics[width=0.3\textwidth, trim=0mm 0mm 15mm 14mm, clip]{./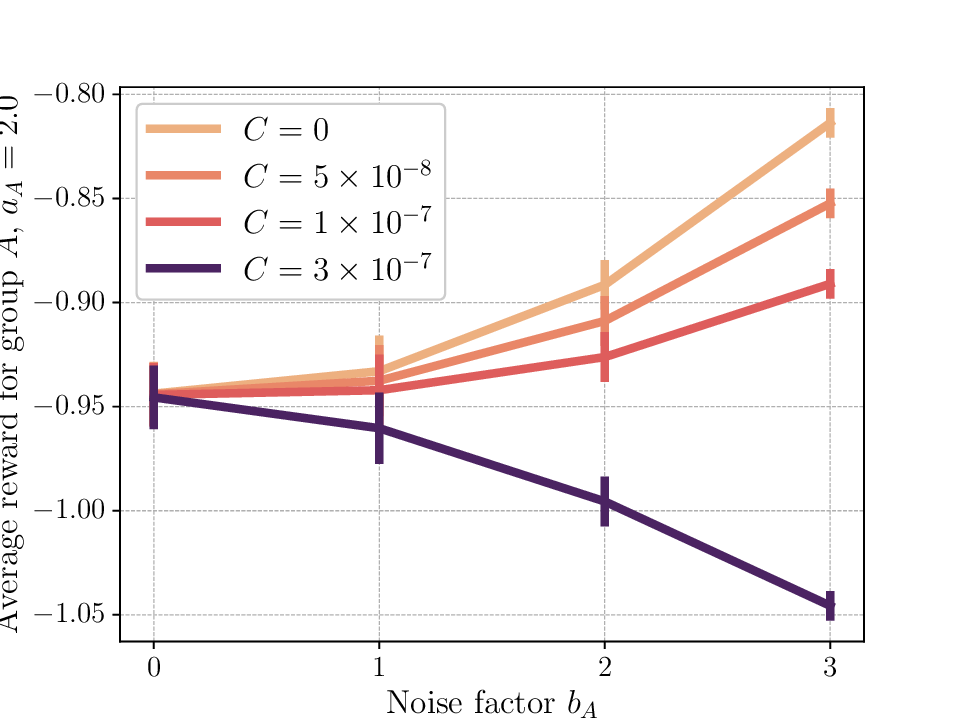}
        \label{figure:fedsgd-femnist-noise-2}
    }
    \hfill
    \subfloat[$a_A = 3.0$]
    {
        \includegraphics[width=0.3\textwidth, trim=0mm 0mm 15mm 14mm, clip]{./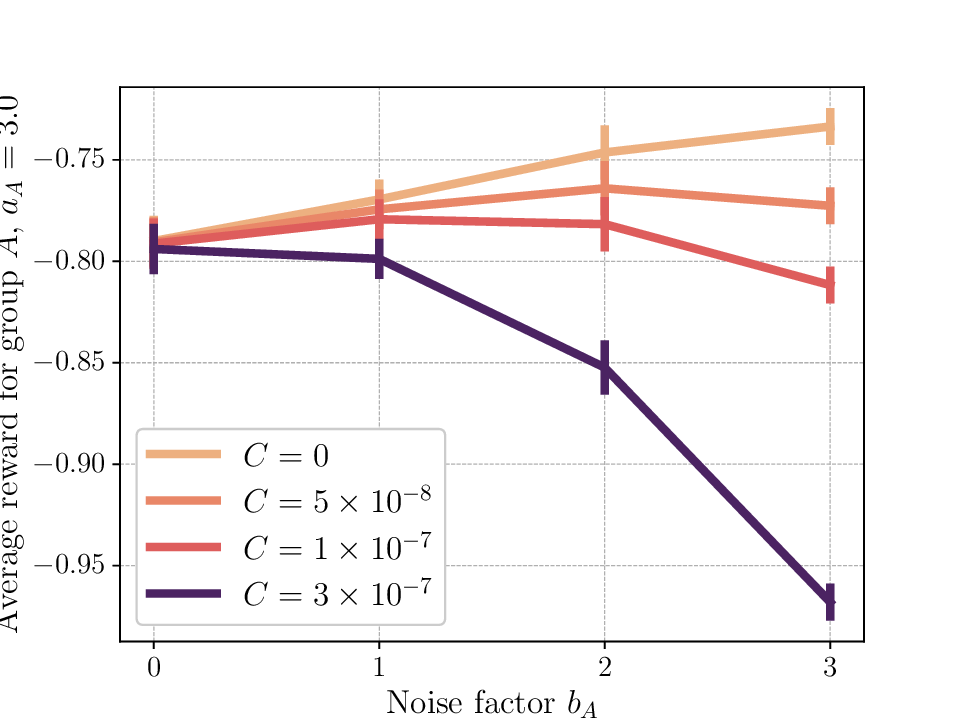}
        \label{figure:fedsgd-femnist-noise-3}
    }
    \caption{FedSGD experiments with FeMNIST dataset with varying noise levels.}
    \label{figure:fedsgd-femnist-noise}
\end{figure*}

\begin{figure*}[!htbp]
    \centering
    \subfloat[$a_A = 1.0$]
    {
        \includegraphics[width=0.3\textwidth, trim=0mm 0mm 15mm 14mm, clip]{./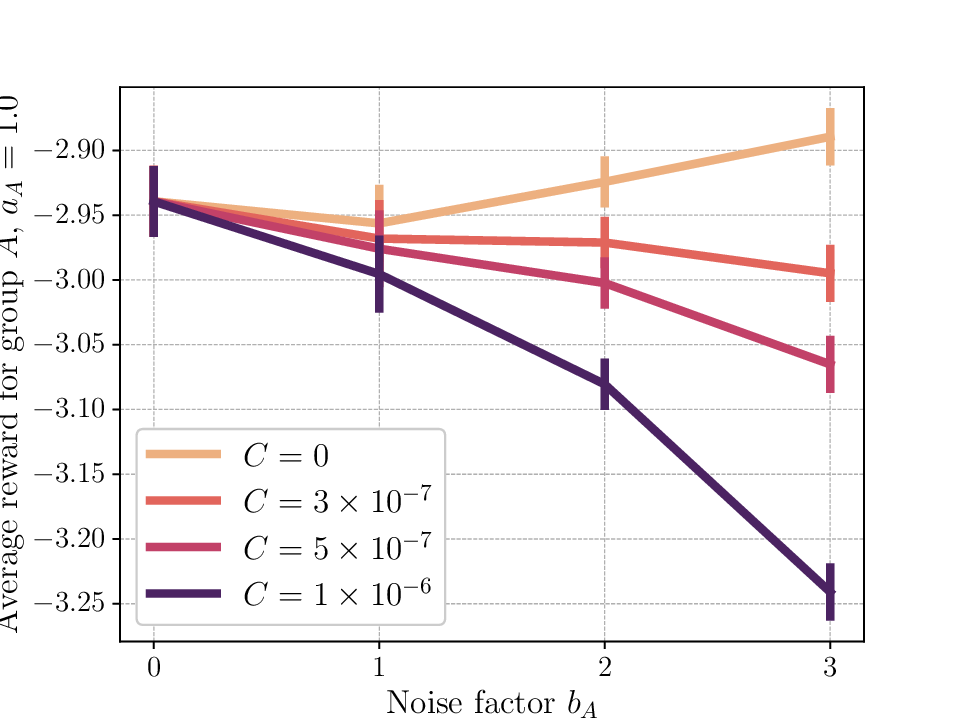}
        \label{figure:fedsgd-shakespear-noise-1}
    }
    \hfill
    \subfloat[$a_A = 2.0$]
    {
        \includegraphics[width=0.3\textwidth, trim=0mm 0mm 15mm 14mm, clip]{./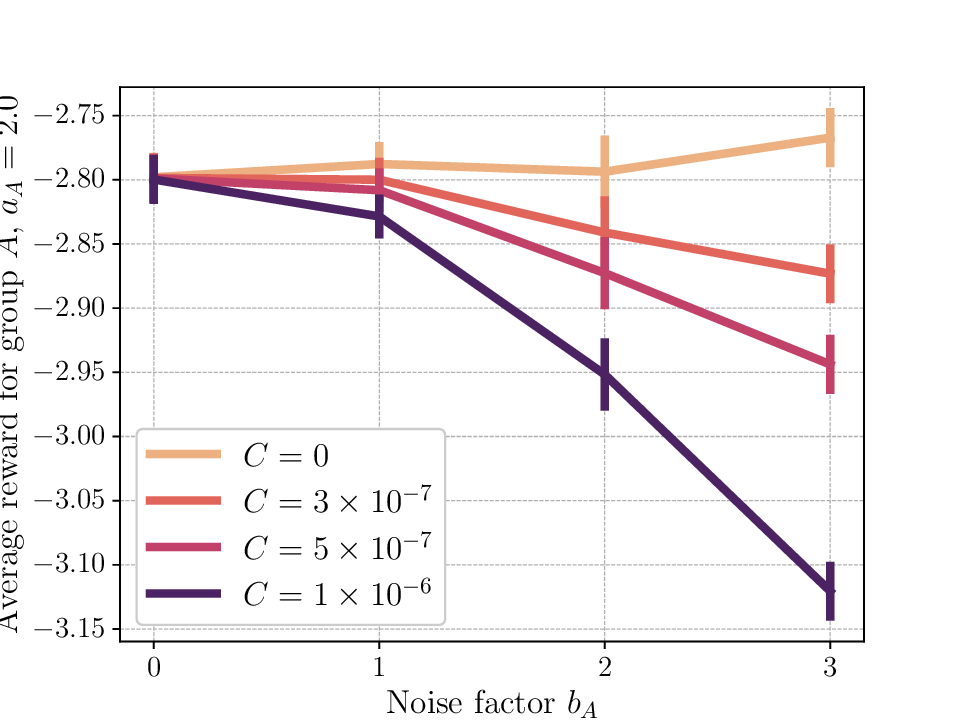}
        \label{figure:fedsgd-shakespear-noise-2}
    }
    \hfill
    \subfloat[$a_A = 3.0$]
    {
        \includegraphics[width=0.3\textwidth, trim=0mm 0mm 15mm 14mm, clip]{./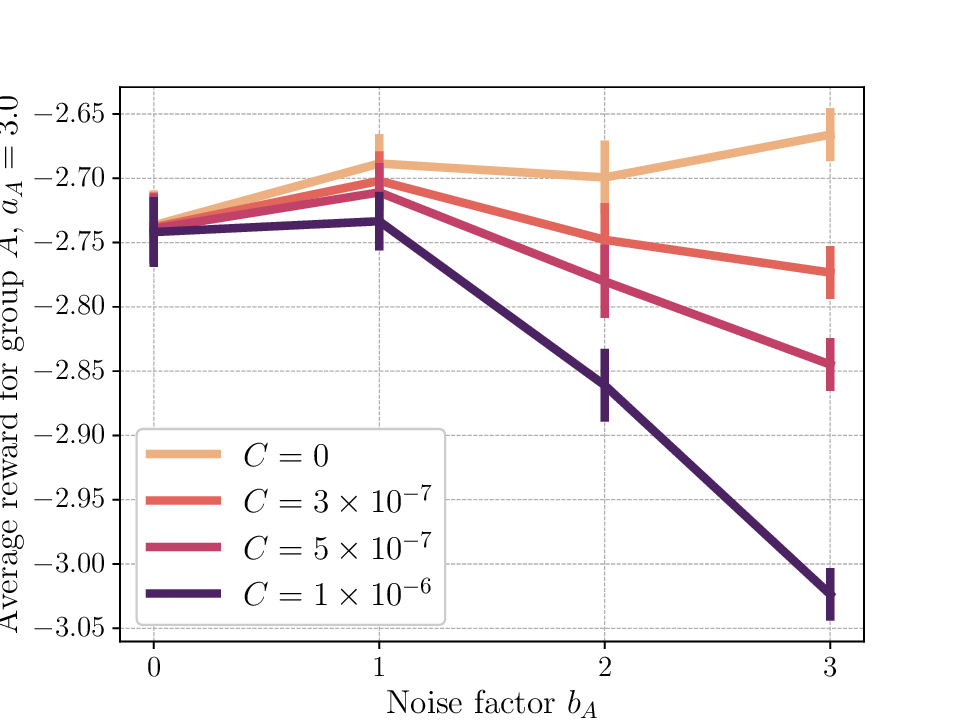}
        \label{figure:fedsgd-shakespear-noise-3}
    }
    \caption{FedSGD experiments with Shakespeare dataset with varying noise levels.}
    \label{figure:fedsgd-shakespear-noise}
\end{figure*}

\begin{figure*}[!htbp]
    \centering
    \subfloat[$a_A = 1.0$]
    {
        \includegraphics[width=0.3\textwidth, trim=0mm 0mm 15mm 14mm, clip]{./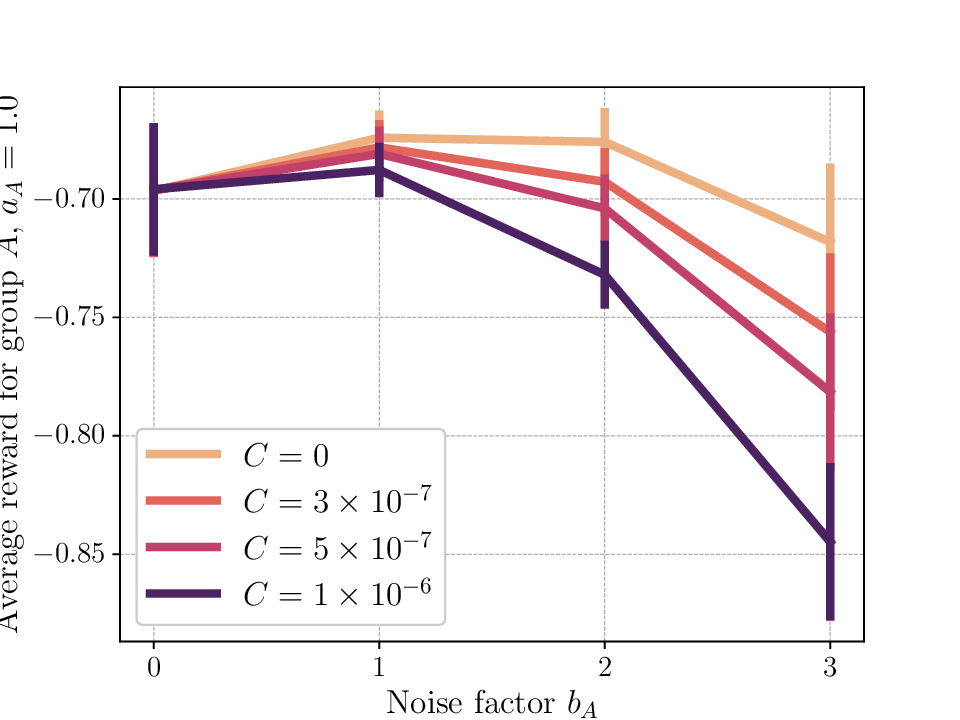}
        \label{figure:fedsgd-twitter-noise-1}
    }
    \hfill
    \subfloat[$a_A = 2.0$]
    {
        \includegraphics[width=0.3\textwidth, trim=0mm 0mm 15mm 14mm, clip]{./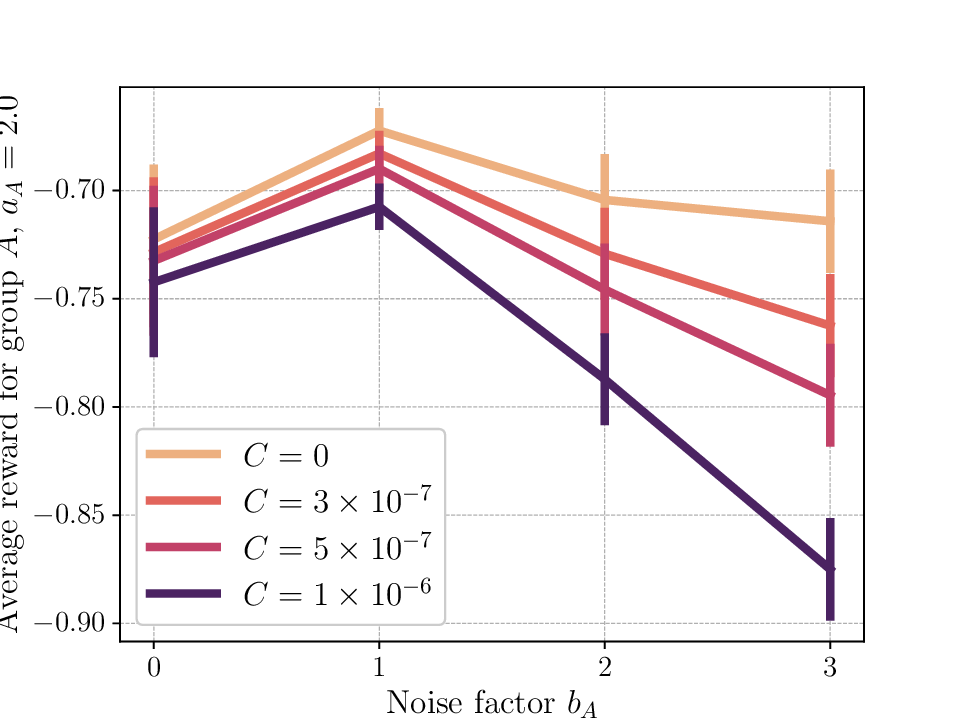}
        \label{figure:fedsgd-twitter-noise-2}
    }
    \hfill
    \subfloat[$a_A = 3.0$]
    {
        \includegraphics[width=0.3\textwidth, trim=0mm 0mm 15mm 14mm, clip]{./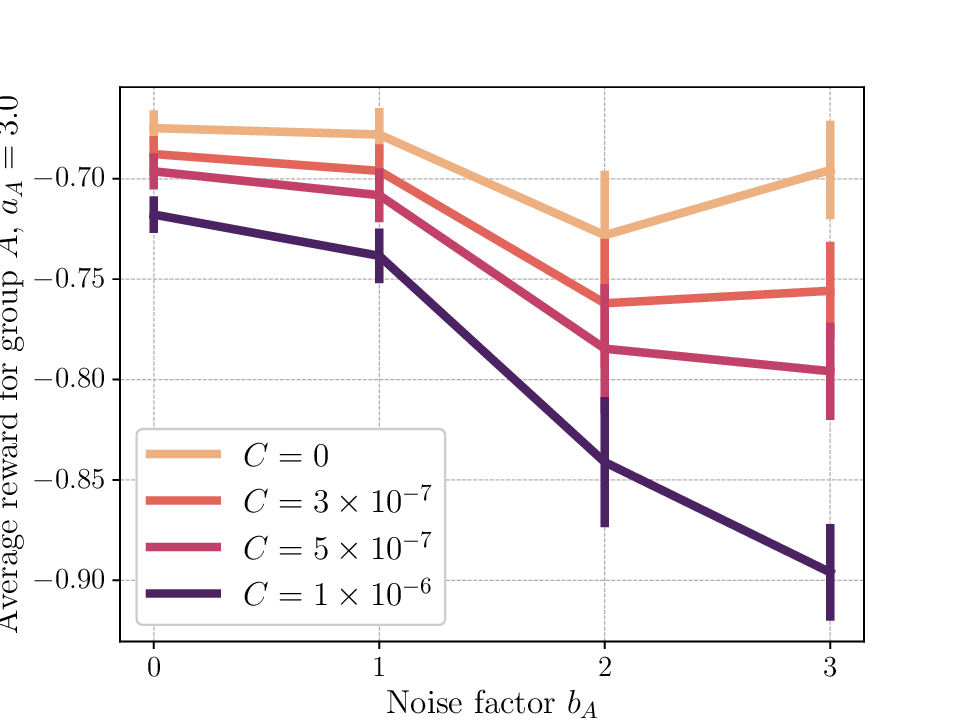}
        \label{figure:fedsgd-twitter-noise-3}
    }
    \caption{FedSGD experiments with Twitter dataset with varying noise levels.}
    \label{figure:fedsgd-twitter-noise}
\end{figure*}

\end{document}